\documentclass[lettersize,journal]{IEEEtran}
\usepackage{amsmath,amsfonts}
\usepackage{amsthm}
\usepackage{amssymb}
\usepackage[most]{tcolorbox}
\usepackage{algorithm}
\usepackage[noEnd=true,indLines=true,rightComments=false,italicComments=false,
commentColor=teal,beginLComment=//~,endLComment=]{algpseudocodex}
\usepackage{placeins}

\definecolor{darkgreen}{RGB}{0,120,60}
\definecolor{theoremgray}{RGB}{225,225,225}
\definecolor{RankFirst}{RGB}{255,242,204}  
\definecolor{RankSecond}{RGB}{255,204,153} 
\definecolor{RankThird}{RGB}{221,235,247} 
\definecolor{coralPink}{HTML}{ED028C}

\newcommand{\tsb}{\textsubscript}

\tcolorboxenvironment{definition}{
  enhanced,
  breakable,
  colback=theoremgray,
  colframe=theoremgray,
  boxrule=0pt,
  arc=2pt,
  left=5pt,
  right=5pt,
  top=5pt,
  bottom=5pt,
  before skip=8pt,
  after skip=8pt
}

\tcolorboxenvironment{theorem}{
  enhanced,
  breakable,
  colback=theoremgray,
  colframe=theoremgray,
  boxrule=0pt,
  arc=2pt,
  left=5pt,
  right=5pt,
  top=5pt,
  bottom=5pt,
  before skip=8pt,
  after skip=8pt
}

\tcolorboxenvironment{proposition}{
  enhanced,
  breakable,
  colback=theoremgray,
  colframe=theoremgray,
  boxrule=0pt,
  arc=2pt,
  left=5pt,
  right=5pt,
  top=5pt,
  bottom=5pt,
  before skip=8pt,
  after skip=8pt
}

\tcolorboxenvironment{lemma}{
  enhanced,
  breakable,
  colback=theoremgray,
  colframe=theoremgray,
  boxrule=0pt,
  arc=2pt,
  left=5pt,
  right=5pt,
  top=5pt,
  bottom=5pt,
  before skip=8pt,
  after skip=8pt
}

\tcolorboxenvironment{IEEEproof}{
  enhanced,
  breakable,
  colback=theoremgray,
  colframe=theoremgray,
  boxrule=0pt,
  arc=2pt,
  left=5pt,
  right=5pt,
  top=5pt,
  bottom=5pt,
  before skip=8pt,
  after skip=8pt
}

\tcolorboxenvironment{assumption}{
  enhanced,
  breakable,
  colback=theoremgray,
  colframe=theoremgray,
  boxrule=0pt,
  arc=2pt,
  left=5pt,
  right=5pt,
  top=5pt,
  bottom=5pt,
  before skip=8pt,
  after skip=8pt
}

\newtheorem{theorem}{Theorem}
\newtheorem{proposition}{Proposition}
\newtheorem{lemma}{Lemma}

\newtheorem{assumption}{Assumption}

\tikzset{
  phaseduring/.style={
    draw=none,
    fill=ForestGreen!15,
    minimum width=0.92\columnwidth
  },
  phaseafter/.style={
    draw=none,
    fill=Orange!18,
    minimum width=0.96\columnwidth
  },
  algpxIndentLine/.style={draw=black, thin}
}

\definecolor{algcommentorange}{RGB}{190,90,0}

\newcommand{\OrangeLComment}[1]{%
    \State \textcolor{algcommentorange}{//~#1}%
}

\usepackage{array}
\usepackage[caption=false,font=normalsize,labelfont=sf,textfont=sf]{subfig}
\usepackage{textcomp}
\usepackage{stfloats}
\usepackage{url}
\usepackage{verbatim}
\usepackage{graphicx}
\usepackage{cite}
\usepackage{booktabs}
\usepackage{multirow}

\usepackage[table,dvipsnames]{xcolor}
\usepackage[pagebackref,breaklinks,colorlinks]{hyperref}

\definecolor{algblue}{RGB}{220,232,255}

\definecolor{alggreen}{RGB}{218,242,218}

\begin{document}

\title{PAPT++: Risk-Aware Adversarial Tuning and Generation for Single Domain Generalization}

\author{
Zhipeng Xu,
De Cheng\textsuperscript{*},
Xinyang Jiang,
Lingfeng He,
Huaijie Wang,
Dongsheng Li,
Nannan Wang,~\IEEEmembership{Senior Member,~IEEE},
and Xinbo Gao,~\IEEEmembership{Fellow,~IEEE}
\thanks{Zhipeng Xu, De Cheng, Lingfeng He, Huaijie Wang, Nannan Wang, and Xinbo Gao are with Xidian University, Xi'an, China. Xinyang Jiang and Dongsheng Li are with Microsoft Research Asia, Shanghai, China (e-mail: xinyangjiang@microsoft.com; dongsli@microsoft.com).} 
\thanks{\textsuperscript{*}Corresponding author: De Cheng (e-mail: dcheng@xidian.edu.cn).}
}

\markboth{IEEE Transactions on Pattern Analysis and Machine Intelligence}%
{Shell \MakeLowercase{\textit{et al.}}: A Sample Article Using IEEEtran.cls for IEEE Journals}

\maketitle

\begin{abstract}
Single domain generalization (SDG) aims to learn a model from one labeled source domain that generalizes to unseen target domains. A common strategy is to enrich the source distribution with augmented or generated samples, and recent text-to-image (T2I) diffusion models provide a strong generative prior for this purpose. However, diversity alone is insufficient for robust generalization, because useful generated samples should also capture variations that the current classifier finds difficult. Motivated by distributionally robust optimization (DRO), we define a semantic ambiguity set in the class-conditional generative space of a pretrained T2I model and search it for samples with high classification loss under the current classifier. To this end, we introduce PAPT++, a risk-aware adversarial generation-training framework for SDG. PAPT++ first learns diverse semantic reference images for each class through image-text alignment and intra-class diversity regularization. These references then serve as denoising targets during classifier-guided diffusion synthesis, reducing semantic drift while guiding generation toward challenging variations. The generated samples are combined with the source data to update the classifier, and the updated classifier guides the next synthesis round in return. In this way, PAPT++ progressively exposes the classifier to challenging yet semantically consistent variations. Extensive experiments on standard SDG benchmarks demonstrate the superiority of the proposed PAPT++ method and the effectiveness of its main components.
\end{abstract}

\begin{IEEEkeywords}
Domain Generalization, Distributional Robust Learning, Adversarial Training, Generative model.
\end{IEEEkeywords}

\section{Introduction}
\label{sec:intro}
\IEEEPARstart{D}{eep} neural networks (DNNs) are commonly trained and evaluated under the assumption that training and test data are independently and identically distributed (\emph{i.i.d.})~\cite{cha2022domain}.
In real-world applications, however, test data often deviate from the training distribution due to changes in style, background, viewpoint, image quality, or acquisition conditions.
As a result, a model trained only on the source data may suffer a noticeable performance drop when deployed in unseen environments.
To improve robustness under such distribution shifts, domain generalization (DG)~\cite{DPR,RD-MLDG} aims to learn from available labeled source data and generalize to unseen target domains without accessing target data during training.

Single domain generalization (SDG) is a more challenging setting of DG, where only one labeled source domain is available for training. To alleviate the lack of domain diversity, existing SDG methods usually expand or regularize the source distribution through data augmentation, adversarial perturbation, or normalization. Although these approaches improve robustness by introducing additional variations or regularizing the learned representation, augmentation- and perturbation-based methods often operate within predefined transformation spaces, such as perturbations of channel-wise feature statistics. Consequently, the resulting training distribution may still fail to cover complex unseen variations in style, background, viewpoint, or image quality.

Recent text-to-image (T2I) diffusion models provide a promising way to further enrich the source distribution for SDG. 
Benefiting from large-scale image-text pretraining, these models contain rich generative priors and can synthesize diverse visual and semantic variations beyond predefined image transformations. 
However, simply increasing generation diversity does not necessarily lead to robust generalization. From the perspective of robust learning, useful generated samples should not only be diverse, but also expose challenging variations that are not well covered by the source data and cannot be reliably handled by the current model.

\begin{figure}[t]
\centering
\includegraphics[width=0.48\textwidth]{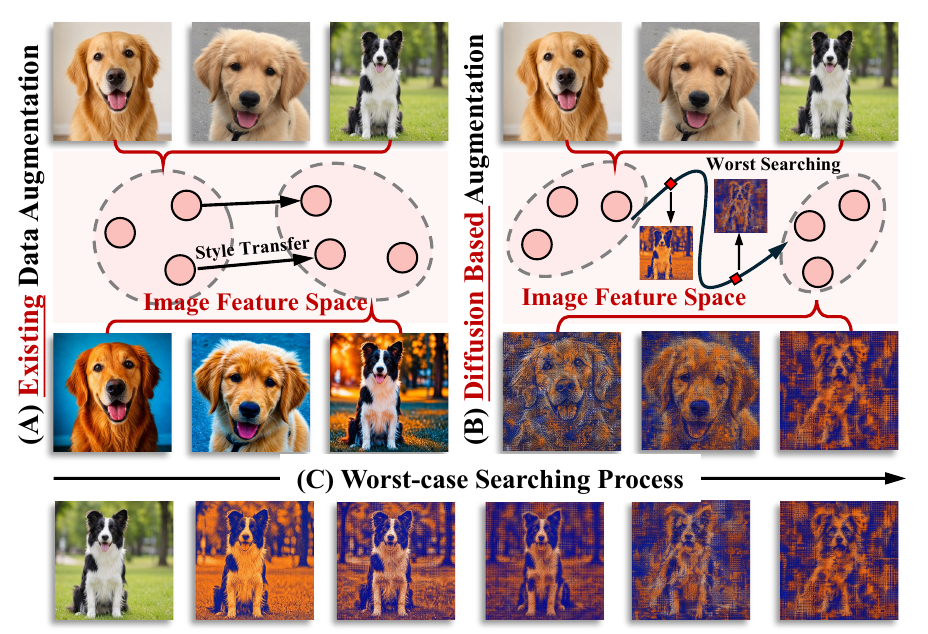}
\vspace{-3.0mm}
\caption{Comparison of data augmentation strategies for SDG.
(A) Conventional augmentation expands the source distribution within
predefined transformation spaces. (B) Diffusion-based augmentation
provides broader semantic variations but is not explicitly guided by
the current classifier. (C) PAPT++ searches the class-conditional T2I
generative space for classifier-challenging samples while preserving
class semantics.}
\label{fig:motivation}
\end{figure}

To guide the search for such variations, we build on distributionally robust optimization (DRO), which minimizes the worst-case risk over an ambiguity set around the source distribution. Rather than fitting only observed or augmented samples, DRO explicitly considers possible distributional perturbations and focuses on those that yield high model risk.
Applying this formulation to T2I diffusion models requires a suitable ambiguity set for high-risk sample search. Existing methods~\cite{PADG} have constructed such sets in feature spaces parameterized by CLIP or ResNet encoders. However, extending feature-space risk search to the generative space of T2I diffusion models is non-trivial. Without semantic constraints, directly maximizing model risk may produce degenerate high-loss samples, such as artifact-dominated or texture-biased images, rather than meaningful variations that preserve core class semantics. Therefore, a key question naturally arises: \textbf{how can we use a T2I diffusion model to construct a semantic ambiguity set and search it for high-risk samples while preserving core class semantics?}

Based on this motivation, we propose PAPT++, a risk-aware adversarial generation-training framework for SDG, as shown in Fig.~\ref{fig:motivation}. PAPT++ extends our previous PAPT~\cite{PAPT} by moving from diversity-driven generation to risk-aware diffusion synthesis guided by the current classifier. Specifically, PAPT++ uses the class-conditional generative space of a pretrained T2I model to construct a semantic ambiguity set. Semantic references for each class constrain the search to reduce semantic drift, while the current classifier guides generation toward variations with high classification loss. PAPT++ alternates between difficult sample generation and classifier training, progressively exposing the classifier to challenging yet semantically consistent variations and improving its generalization to unseen domains. It consists of two modules: Class-Level Semantic Reference Learning (CSRL) and Classifier-Guided Adversarial Diffusion Synthesis (CADS).

CSRL provides semantic constraints for subsequent adversarial synthesis. Without such constraints, classifier-guided generation may increase classification loss by drifting away from the target class. CSRL therefore first learns a set of semantic reference images for each class by adapting the diffusion model with two complementary objectives: image-text alignment and intra-class diversity regularization. The image-text alignment objective encourages generated images to follow their class prompts and preserve class-level semantics, including overall object appearance, shape, and other discriminative visual cues. However, this objective alone may cause the generator to favor a small set of high-scoring visual patterns. The intra-class diversity objective therefore penalizes feature similarity among images of the same class, encouraging more diverse visual appearances. The learned reference images are then fixed and used as denoising targets to regularize the subsequent classifier-guided synthesis.

Using the semantic references learned by CSRL, CADS alternates between classifier-guided diffusion synthesis and classifier training. In each synthesis round, the current classifier is fixed to provide classification-loss feedback, while the diffusion model is optimized to generate samples with high classification loss under this classifier. Meanwhile, the semantic references serve as denoising targets to constrain the diffusion optimization. This constraint discourages the generator from increasing classification loss through semantic drift or by introducing noise, distorted textures, and other visual artifacts. The generated challenging samples are then combined with the source data to update the classifier, and the updated classifier guides the next synthesis round.

The main contributions are summarized as follows:
\begin{itemize}
\item We propose PAPT++, a risk-aware adversarial generation-training framework for SDG that extends our previous PAPT beyond diversity-driven prompt generation. PAPT++ formulates classifier-guided sample generation as a search over a class-conditional semantic ambiguity set and alternates between challenging sample generation and classifier training.

\item We develop two complementary modules for semantically constrained risk search. CSRL learns diverse semantic reference images through image-text alignment and intra-class diversity regularization, while CADS uses these references to regularize classifier-guided diffusion synthesis, generating challenging yet semantically consistent samples. We further introduce a progressive denoising optimization strategy to extend gradient guidance over the denoising trajectory and improve generation quality.

\item We establish a theoretical connection between PAPT++ and DRO. We show that the reference denoising objective controls an upper bound on the KL divergence between the smoothed reference and generated distributions. We further derive a target-domain risk bound that accounts for ambiguity-set coverage, the finite CADS search gap, and finite-sample estimation error.

\item Extensive experiments on SDG and multi-source DG benchmarks show that PAPT++ achieves state-of-the-art performance. Ablation studies verify the effectiveness of its main components, and additional experiments analyze the generation process from multiple perspectives.
\end{itemize}

This article extends our CVPR 2025 conference paper PAPT~\cite{PAPT} in three main aspects:
(1) PAPT++ moves beyond the diversity-driven prompt generation of PAPT by using the current classifier to search for challenging variations in the T2I generative space. CSRL provides semantic references that constrain this search, while CADS incorporates classifier feedback and progressive denoising into an iterative generation-training process.
(2) We further formulate this risk-aware generation process from a DRO perspective. The analysis shows how reference denoising constrains distributional deviation and how the worst-case risk over the resulting semantic ambiguity set relates to target-domain risk, providing theoretical support for semantic preservation and high-risk search.
(3) The empirical study is substantially expanded to examine both performance and the underlying generation mechanism. PAPT++ achieves state-of-the-art results on SDG and multi-source DG benchmarks, together with strong corruption generalization.

\begin{figure*}[!htbp]
\centering
\includegraphics[width=1.00\textwidth]{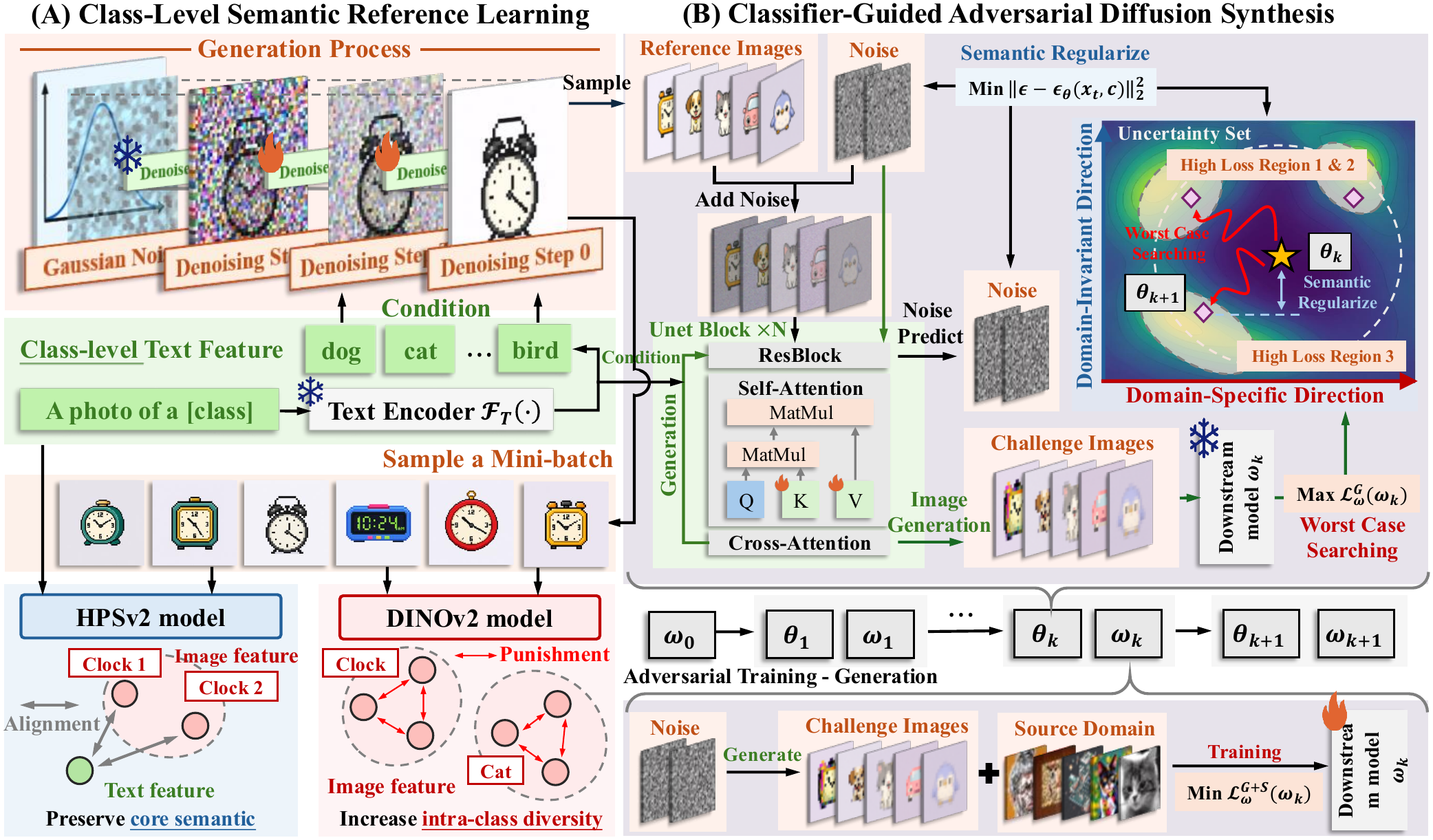}
\vspace{-5.0mm}
\caption{
The framework of our proposed PAPT++. 
PAPT++ consists of two modules: (A) Class-Level Semantic Reference Learning (CSRL) and (B) Classifier-Guided Adversarial Diffusion Synthesis (CADS). 
CSRL learns diverse class-level semantic reference images by encouraging \textbf{\textcolor{blue}{image-text alignment}} and \textbf{\textcolor{red}{intra-class diversity}}. 
CADS uses these reference images as denoising targets to regularize the diffusion parameters during classifier-guided adversarial synthesis, encouraging the generation of high-risk samples with reduced semantic drift.
The generated challenge images are combined with source-domain data to update the downstream model, and the updated model provides feedback for the next synthesis round.
}
\label{fig:framework}
\vspace{-3.0mm}
\end{figure*}

\section{Related Work}
\label{sec:related_work}

\subsection{Data Augmentation for SDG}

\noindent SDG aims to generalize a model trained on a single source domain to multiple unseen target domains. Existing methods commonly expand the training distribution through adversarial or style-based augmentation. ADA~\cite{volpi2018generalizing} and ME-ADA~\cite{zhao2020maximum} generate adversarial virtual samples, while ESDA~\cite{volpi2019addressing} searchs a predefined transformation space for model vulnerabilities. MixStyle~\cite{zhou2021domain} mixes channel-wise feature statistics, L2D~\cite{wang2021learning} learns a style-complement module, and ASR-Norm~\cite{fan2021adversarially} adapts normalization to features generated by ADA. Although effective, these methods operate within predefined transformation spaces or feature statistics and may not cover complex shifts in style, background, and viewpoint. Our method instead leverages T2I generative priors to explore broader visual and semantic variations and searches for high-risk yet semantically valid samples rather than pursuing diversity alone~\cite{PAPT}. Other works also make contributions~\cite{StPR,IKI,EKPC,SIKD,li2026few}.

\subsection{Text-to-Image Models}

\noindent Text-to-image (T2I) generation has progressed rapidly with large-scale autoregressive models such as DALL-E~\cite{ramesh2021zero} and diffusion models.
Latent diffusion models~\cite{rombach2022high} perform denoising in the latent space of a pretrained autoencoder and use text embeddings to guide image generation.
Recent studies further adapt diffusion models using human-preference supervision~\cite{xu2024imagereward} or parameter-efficient tuning.
LoRA introduces low-rank weight updates, reducing the number of trainable parameters while keeping the pretrained backbone fixed.
Our method uses the generative prior of a pretrained T2I model and optimizes only LoRA parameters to synthesize challenging yet semantically consistent samples for SDG.

\subsection{Distributional Robust Optimization}

\noindent Distributionally Robust Optimization (DRO) aims to learn models that perform well under worst-case distributional shifts within a predefined uncertainty set~\cite{sagawa2019distributionally,CBCM}. 
Instead of minimizing the empirical risk only on observed samples, DRO optimizes the maximum risk over possible distributions around the training distribution, providing a principled objective for robust learning under distributional perturbations. 
Existing DRO methods define uncertainty sets with different discrepancy measures, such as Wasserstein distance~\cite{sinha2017certifying,PADG}, $f$-divergences~\cite{namkoong2016stochastic}, and maximum mean discrepancy~\cite{staib2019distributionally}. 
Among them, Wasserstein DRO has been widely studied due to the flexibility of Wasserstein balls in modeling distributional perturbations~\cite{sinha2017certifying}. 
However, the practical effectiveness of DRO in domain generalization remains challenging~\cite{liu2021towards}. 
A key difficulty lies in how to define an appropriate uncertainty set: an overly restricted set may fail to cover useful domain shifts, while an overly flexible set may include unrealistic or label-inconsistent samples. 
For example, \cite{hu2018does} show that the over-flexibility of the uncertainty set can limit the effectiveness of Wasserstein DRO in classification tasks. 
To address this issue, some studies introduce additional constraints, such as unlabeled data~\cite{frogner2019incorporating}, data geometry~\cite{liu2022distributionally}, or topology information~\cite{qiao2023topology}, to construct more meaningful uncertainty sets.

\section{Methodology}
\label{sec:methodology}

\noindent \textbf{Problem Definition.} We formulate the single domain generalization problem as learning a classifier $\mathbf{f}_\theta:\mathcal{X}\rightarrow\mathbb{R}^{C}$ from one labeled source domain such that it generalizes to multiple unseen target domains that share the same label space. Here, $\mathbf{f}_\theta(x)$ denotes the $C$-class logits.
The source-domain training set is denoted as $\mathcal{D}^{\mathcal{S}}=\{(\mathbf{x}_{i}^{s},y_i^s)\}_{i=1}^{N_s}$,
where \(\mathbf{x}_{i}^{s}\in\mathcal{X}\) is the \(i\)-th source image, \(y_i^s\in\{1,\ldots,C\}\) is its class label, \(N_s\) is the total number of source samples, \(\mathcal{X}\) denotes the image space, and \(C\) is the total number of categories.
The target domains are represented by
$\mathcal{D}^{\mathcal{T}}=\{\mathcal{D}^{\mathcal{T}}_{m}\}_{m=1}^{M}$, where \(\mathcal{D}^{\mathcal{T}}_{m}\) denotes the \(m\)-th target domain and \(M\) is the total number of target domains. 
All target-domain data are unavailable during training and are used only for evaluation.

\vspace{2.0mm}

\noindent \textbf{Overall Framework.} The overall framework of PAPT++ is shown in Fig.~\ref{fig:framework}. PAPT++ consists of Class-Level Semantic Reference Learning (CSRL) and Classifier-Guided Adversarial Diffusion Synthesis (CADS). It searches the class-conditional generative space of a pretrained T2I model within a semantic ambiguity set defined by reference-based denoising, while the current classifier directs generation toward high-loss variations. Only lightweight LoRA~\cite{hu2022lora,DoRA,he2026harnessing,CKAA} parameters are optimized, with the pretrained diffusion backbone kept fixed.

CSRL learns diverse semantic reference images for each class through image--text alignment and intra-class diversity regularization (Fig.~\ref{fig:framework}.A). The \textbf{\textcolor{blue}{image--text alignment}} objective preserves class semantics, whereas \textbf{\textcolor{red}{diversity regularization}} encourages complementary intra-class appearances. The learned references are then fixed and used as denoising targets to reduce semantic drift during adversarial synthesis.
CADS alternates between classifier-guided diffusion synthesis and classifier training (Fig.~\ref{fig:framework}.B). In each round, the fixed classifier guides generation toward high-loss samples, while the semantic references constrain denoising. The generated samples are combined with the source data to update the classifier, which then guides the next synthesis round.

\subsection{Preliminaries}
\label{sec:preliminaries}

\noindent We employ a pretrained text-to-image latent diffusion model as the generative prior. The model consists of a variational autoencoder (VAE)~\cite{kingma2013auto}, a CLIP text encoder~\cite{radford2021learning}, and a text-conditioned U-Net~\cite{ronneberger2015u} denoiser. Given an image $\mathbf{x}$, the VAE encoder maps it into the latent space as:
\begin{equation}
    \mathbf{z}_0=\mathcal{F}_{E}(\mathbf{x}),
\end{equation}
where $\mathcal{F}_{E}(\cdot)$ denotes the VAE encoder and $\mathbf{z}_0$ is the clean latent representation. The corresponding VAE decoder
$\mathcal{F}_{D}(\cdot)$ maps a latent code back to the image space.

For class $k$, we use a text template $\mathbf{t}^c_k$, e.g., ``a photo of a [class]'', where the token [class] is then replaced by the class name. The prompt $\mathbf{t}^c_k$ is encoded by the CLIP text encoder associated with the pretrained diffusion model:
\begin{equation}
    \mathbf{T}_k^c = \boldsymbol{\tau}_{\boldsymbol{\psi}}(\mathbf{t}^c_k),
\end{equation}
where $\boldsymbol{\tau}_{\boldsymbol{\psi}}(\cdot)$ denotes the CLIP text encoder, and $\mathbf{T}_k^c \in \mathbb{R}^{L \times d}$ denotes the text embedding sequence with token length $L$ and feature dimension $d$. In this work, the text encoder is kept frozen to preserve the semantic knowledge learned from large-scale image-text pretraining.

The forward diffusion process gradually corrupts the clean latent $\mathbf{z}_0$. At timestep $t$, the noisy latent is obtained by:
\begin{equation}
    \mathbf{z}_t =
    \sqrt{\bar{\alpha}_t} \mathbf{z}_0
    +
    \sqrt{1-\bar{\alpha}_t}\boldsymbol{\epsilon},
    \quad
    \boldsymbol{\epsilon} \sim \mathcal{N}(\mathbf{0},\mathbf{I}),
\label{eq:forward_diffusion_noising}
\end{equation}
where $\bar{\alpha}_t$ is the cumulative noise schedule coefficient and $\boldsymbol{\epsilon}$ is Gaussian noise.

The denoising network predicts the added noise $\hat{\boldsymbol{\epsilon}}$ conditioned on the noisy latent $\mathbf{z}_t$, the timestep $t$, and the text representation $\mathbf{T}_k^c$:
\begin{equation}
    \hat{\boldsymbol{\epsilon}}
    =
    \boldsymbol{\epsilon}_{\boldsymbol{\omega},\boldsymbol{\phi}}(\mathbf{z}_t,t,\mathbf{T}_k^c),
\label{eq:noise_prediction}
\end{equation}
where $\boldsymbol{\epsilon}_{\boldsymbol{\omega},\boldsymbol{\phi}}(\cdot)$ denotes the U-Net denoiser.
Here, $\boldsymbol{\omega}$ denotes the frozen pretrained U-Net parameters, and $\boldsymbol{\phi}$ denotes the trainable LoRA parameters. In our implementation, $\boldsymbol{\phi}$ denotes LoRA parameters inserted into the self-attention layers of the U-Net, while the pretrained diffusion backbone remains fixed.

The textual condition interacts with the intermediate image features through the cross-attention layers of the U-net. Let $\mathbf{h}_t$ denote the intermediate image features at diffusion step $t$. The query, key, and value representations are defined as:
\begin{equation}
    \mathbf{Q} = \mathbf{h}_t \mathbf{W}_Q, \quad
    \mathbf{K} = \mathbf{T}_k^c \mathbf{W}_K, \quad
    \mathbf{V} = \mathbf{T}_k^c \mathbf{W}_V ,
\end{equation}
and the resulting cross-attention is given by:
\begin{equation}
    \operatorname{Attn}(\mathbf{h}_t,\mathbf{T}_k^c)
    =
    \operatorname{softmax}
    \left(
    \frac{\mathbf{Q}\mathbf{K}^\top}{\sqrt{d_a}}
    \right)\mathbf{V}.
\end{equation}
Here, $\mathbf{W}_{Q}$, $\mathbf{W}_K$, and $\mathbf{W}_V$ denote the projection matrices, and $d_a$ denotes the dimension of the projected queries and keys. Through this interaction, the class prompt provides semantic guidance for the denoising process.
In this work, we adapt the diffusion model by optimizing the self-attention LoRA parameters $\boldsymbol{\phi}$ while keeping the pretrained backbone fixed.

The denoising objective is formulated as:
\begin{equation}
    \mathcal{L}_{\mathrm{LDM}}
    =
    \mathbb{E}_{\mathbf{z}_0,t,\boldsymbol{\epsilon}}
    \left[
    \left\|
    \boldsymbol{\epsilon} -
    \boldsymbol{\epsilon}_{\boldsymbol{\omega},\boldsymbol{\phi}}(\mathbf{z}_t,t,\mathbf{T}_k^c)
    \right\|_2^2
    \right],
\end{equation}
where the denoising network $\boldsymbol{\epsilon}_{\boldsymbol{\omega},\boldsymbol{\phi}}$ predicts
the Gaussian noise added to the clean latent at timestep $t$.

For notational simplicity, we denote the complete text-to-image generation process as:
\begin{equation}
    \tilde{\mathbf{x}} = \mathbf{G}_{\boldsymbol{\phi}}(\mathbf{t}_k^c,\boldsymbol{\eta}),
\end{equation}
where $\boldsymbol{\eta}$ denotes the initial Gaussian noise latent. Specifically, starting from the initial noise latent $\boldsymbol{\eta}$, the text-conditioned U-Net iteratively performs reverse denoising under the prompt representation $\mathbf{T}_k^c=\boldsymbol{\tau}_{\boldsymbol{\psi}}(\mathbf{t}_k^c)$, and the final latent is decoded by the VAE decoder $\mathcal{F}_{D}(\cdot)$ to obtain the generated image $\tilde{\mathbf{x}}$. 

These preliminaries define how the generator can be adapted through lightweight self-attention LoRA parameters. In the following sections, we use this adaptation interface for two purposes: first, to obtain class-level semantic reference images, and second, to synthesize classifier-guided high-risk variations under semantic reference constraints.

\subsection{Class-Level Semantic Reference Learning}

\noindent \underline{\textbf{Review of PAPT:}} In our conference version~\cite{PAPT}, the first stage is Category Prompt Tuning, which learns class prompts to capture domain-invariant category information from the source domain.
For the $k$-th class, PAPT defines a textual template:
\begin{equation}
\mathbf{t}_k^c = ``\text{a\ photo\ of\ a\ }*",    
\label{eq:PAPT_class_prompt}
\end{equation}
where the placeholder $*$ is then replaced by a learnable category prompt $\mathbf{E}_k^c$ in the embedding space $\mathbf{w}_k^c$:
\begin{equation}
\mathbf{w}_k^c =
[\varepsilon(\hat{\mathbf{t}}_{k,1}^c), \cdots,
\varepsilon(\hat{\mathbf{t}}_{k,M_c}^c), \mathbf{E}_k^c].
\label{eq:category_embedding}
\end{equation}
The resulting condition $\Phi(\mathbf{w}_k^c)$ is then fed into the cross-attention layers of the diffusion model and optimized with the denoising objective. In this way, PAPT obtains category-level conditions that preserve class-discriminative semantics when generating images with different domain styles.

\vspace{3.0mm}

\noindent\textbf{Motivation.}
Preserving class semantics is especially important in PAPT++, because the diffusion model is optimized using the classification risk of the current classifier. Although the class prompt provides semantic conditioning, it does not directly restrict changes in the diffusion parameters during risk-guided optimization. The generator may therefore increase the classification loss by drifting away from the target class instead of producing meaningful within-class variations.

To provide a more direct semantic constraint, we extend the category-preserving mechanism of PAPT from prompt-level conditioning to image-level semantic references. CSRL adapts the generator to produce reference images that follow the class prompt while covering diverse visual appearances within each class. These images are then fixed and used as denoising targets during adversarial synthesis, regularizing the diffusion parameters and reducing semantic drift.

\vspace{3.0mm}

Following the textual template in Eq.~\ref{eq:PAPT_class_prompt}, for each class $k$, we define the class-level text prompt as:
\begin{equation}
\mathbf{t}_{k,\mathrm{ref}}^c=\text{``a photo of a [class]''},
\label{eq:prompt}
\end{equation}
where the token \([\mathrm{class}]\) is replaced by the class name of category \(k\). 
The prompt \(\mathbf{t}_{k,\mathrm{ref}}^c\) provides the textual condition for generating semantic reference images of class \(k\).

Given \(\mathbf{t}_{k,\mathrm{ref}}^c\), we first sample a batch of \(B\) candidate 
images from the diffusion generator:
\begin{equation}
\mathbf{x}_{k,b}=\mathbf{G}_{\boldsymbol{\phi}_{\mathrm{ref}}}
(\mathbf{t}_{k,\mathrm{ref}}^c,\boldsymbol{\eta}_{k,b}),
\qquad b=1,\ldots,B,
\label{eq:reference_candidate_generation}
\end{equation}
where \(\mathbf{G}_{\boldsymbol{\phi}_{\mathrm{ref}}}\) denotes the T2I diffusion 
generator with reference-learning LoRA parameters 
\(\boldsymbol{\phi}_{\mathrm{ref}}\), \(\boldsymbol{\eta}_{k,b}\) is the random 
generation noise for the \(b\)-th candidate, and \(\mathbf{x}_{k,b}\) is the 
generated candidate image. We optimize \(\boldsymbol{\phi}_{\mathrm{ref}}\) so 
that these candidates are semantically aligned with the class-level prompt 
while remaining diverse within the same class.

Specifically, we use a HPSv2~\cite{wu2023hpsv2} model to measure the alignment between each candidate and its corresponding class prompt. 
Since HPSv2 is trained with human-preference annotations, its score reflects prompt-image correspondence and perceptual quality. Maximizing this score encourages the candidates to follow the class prompt while discouraging low-quality generations.
Let \(\mathbf{F}_{\mathrm{HPS}}(\mathbf{x}_{k,b},\mathbf{t}_{k,\mathrm{ref}}^c)\) denote the HPSv2 score between \(\mathbf{x}_{k,b}\) and \(\mathbf{t}_{k,\mathrm{ref}}^c\). 
The alignment loss is:
\begin{equation}
\mathcal{L}_{\mathrm{align}}
=-\frac{1}{B}\sum_{b=1}^{B}\mathbf{F}_{\mathrm{HPS}}(\mathbf{x}_{k,b},\mathbf{t}_{k,\mathrm{ref}}^c).
\label{eq:alignment_loss}
\end{equation}
Minimizing \(\mathcal{L}_{\mathrm{align}}\) encourages the generated candidates to better match the class-level textual condition.

Optimizing only image--text alignment may cause the generator to favor a small set of high-scoring visual patterns.
We therefore introduce a diversity loss in the DINOv2~\cite{oquab2024dinov2} feature space.
Because DINOv2 features encode high-level visual information, reducing their similarity promotes variations in shape, structure, and appearance beyond low-level pixel differences.
Fig.~\ref{fig:method2} compares the generated samples with and without diversity regularization.
Specifically, we extract normalized image features with a frozen DINOv2 encoder \(\mathbf{F}_{\mathrm{DINO}}(\cdot)\):
\begin{equation}
\mathbf{h}_{k,b}=\frac{\mathbf{F}_{\mathrm{DINO}}(\mathbf{x}_{k,b})}
{\left\|\mathbf{F}_{\mathrm{DINO}}(\mathbf{x}_{k,b})\right\|_2},
\label{eq:dino_feature}
\end{equation}
where \(\mathbf{h}_{k,b}\) is the normalized feature of \(\mathbf{x}_{k,b}\). 
We then define the diversity loss as the average pairwise feature similarity among same-class candidates:
\begin{equation}
\mathcal{L}_{\mathrm{div}}
=\frac{1}{B(B-1)}
\sum_{b=1}^{B}\sum_{b'\neq b}
\mathbf{h}_{k,b}^{\top}\mathbf{h}_{k,b'}.
\label{eq:diversity_loss}
\end{equation}

Minimizing \(\mathcal{L}_{\mathrm{div}}\) reduces the feature similarity among same-class candidates. This encourages the reference set to cover diverse object appearances within the same class.
The overall objective for semantic reference learning is:
\begin{equation}
\mathcal{L}_{\mathrm{ref}}
=\mathcal{L}_{\mathrm{align}}
+\lambda_{\mathrm{div}}\mathcal{L}_{\mathrm{div}},
\label{eq:reference_objective}
\end{equation}
where \(\lambda_{\mathrm{div}}\) is a hyper-parameter that balances class-level semantic alignment and intra-class diversity. 
By optimizing \(\boldsymbol{\phi}_{\mathrm{ref}}\) with Eq.~\ref{eq:reference_objective}, the generator is adapted to produce class-consistent reference images with diverse visual appearances.

After optimizing \(\boldsymbol{\phi}_{\mathrm{ref}}\), we merge the LoRA parameters into the pretrained diffusion model $\boldsymbol{\omega}$ and denote the resulting reference-adapted diffusion parameters as \(\bar{\boldsymbol{\omega}}_{\mathrm{ref}}\). 
The resulting diffusion model is then used to sample semantic reference images for each class:
\begin{equation}
\mathcal{A}_k=\{\mathbf{a}_{k,m}\}_{m=1}^{M_a},
\label{eq:reference_set}
\end{equation}
where \(\mathbf{a}_{k,m}\) denotes the \(m\)-th reference image of class \(k\), and \(M_a\) is the number of images per class. 
After construction, \(\mathcal{A}_k\) is fixed,  and its images are used as denoising targets in CADS to reduce semantic drift during high-risk sample search.

\begin{figure}[t]
\centering
\includegraphics[width=0.48\textwidth]{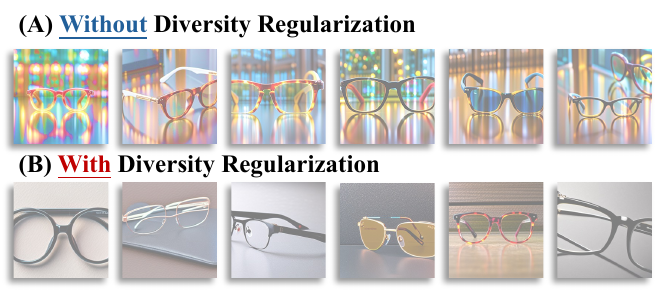}
\vspace{-3.0mm}
\caption{
Visualization of images generated with and without diversity regularization.
(A) Without diversity regularization, the generator tends to produce eyeglass images with similar front-view layouts, similar frame shapes, and colorful blurred backgrounds.
(B) With diversity regularization, the generated references show richer intra-class variations, including different viewpoints, frame shapes, lens colors, backgrounds, and eyeglass types.
}
\label{fig:method2}
\end{figure}

\subsection{Classifier-Guided Adversarial Diffusion Synthesis}
\label{sec:classifier_guided_generation}

\noindent \underline{\textbf{Review of PAPT.}} In our conference version PAPT~\cite{PAPT}, after learning category prompts, 
Adversarial Domain Prompt Tuning module further learns domain prompts to expand the 
source distribution with diverse domain styles. Specifically, PAPT maintains 
a domain-specific prompt memory bank:
\begin{equation}
\mathcal{P}=\{\mathbf{E}_1^d,\mathbf{E}_2^d,\cdots,\mathbf{E}_N^d\},
\label{eq:papt_domain_bank}
\end{equation}
where each domain prompt represents an abstract domain style. Given the category prompt \(\mathbf{E}_k^c\) and the domain prompt \(\mathbf{E}_i^d\), PAPT composes a category-domain textual template:
\begin{equation}
\mathbf{t}_{k,i}^{c-d}
=
\text{``a photo of a } * \text{ in the style of } \dagger \text{''},
\label{eq:papt_domain_template}
\end{equation}
where the placeholders \(*\) and \(\dagger\) are replaced by 
\(\mathbf{E}_k^c\) and \(\mathbf{E}_i^d\), respectively. 
The corresponding token embedding is then fed into the diffusion model to 
generate images of the \(k\)-th category under the \(i\)-th domain style.

To learn a new domain prompt $\mathbf{E}_o^d$, PAPT optimizes:
\begin{equation}
\mathcal{L}_{\mathrm{PAPT}}
=
\mathcal{L}_{\mathrm{class}}
+
\lambda \mathcal{L}_{\mathrm{domain}},
\label{eq:papt_objective}
\end{equation}
where \(\mathcal{L}_{\mathrm{class}}\) preserves category consistency and \(\mathcal{L}_{\mathrm{domain}}\) encourages the newly generated domain to be different from existing domains in the memory bank. After optimization, the newly learned domain prompt is added to the memory bank:
\begin{equation}
\mathcal{P} \leftarrow \mathcal{P} \cup \{\mathbf{E}_o^d\}.
\label{eq:papt_memory_update}
\end{equation}

By iteratively adding newly learned domain prompts to the memory bank, PAPT enlarges the coverage of generated domain styles and alleviates the lack of diversity in SDG.

\vspace{3.0mm}

\noindent\textbf{Motivation.}
Although this progressive prompt-based strategy expands the range of generated domain styles, its objective is mainly diversity-driven and independent of the current classifier. Specifically, PAPT encourages each newly generated domain to differ from those already stored in the memory bank, but it does not explicitly identify samples that the current classifier finds difficult. The generated samples may therefore improve visual diversity without providing the most effective supervision for the current classifier.

To address this limitation, we extend PAPT from diversity-driven prompt generation to classifier-guided adversarial diffusion synthesis. Instead of searching only for novel domain styles, CADS uses classification loss as feedback to search the T2I generative space for high-risk samples. Meanwhile, the semantic references learned by CSRL serve as denoising targets to preserve class semantics and reduce noise, distorted textures, and other visual artifacts during risk-guided synthesis.

\vspace{3.0mm}

\begin{figure}[t]
\centering
\includegraphics[width=0.48\textwidth]{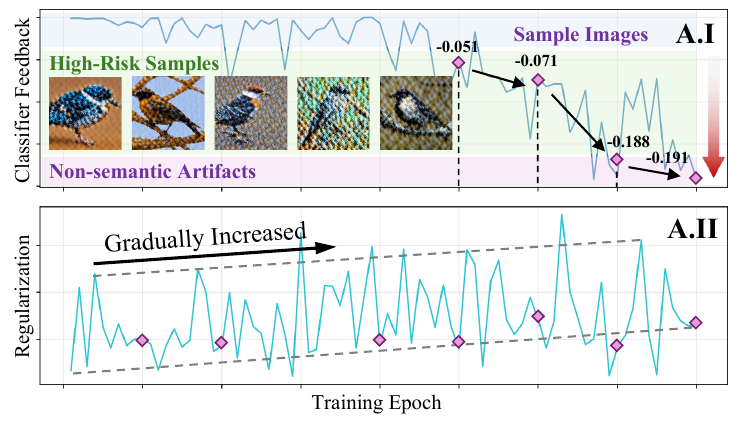}
\vspace{-3.0mm}
\caption{Optimization dynamics of the risk term
$\mathcal{L}_{\mathrm{risk}}^{(r)}$ in Eq.~\ref{eq:classifier_feedback_loss} and the semantic-reference denoising loss $\mathcal{L}_{\mathrm{den}}^{(r)}$ in Eq.~\ref{eq:reference_denoising_loss}.
The risk term guides the generator toward samples with high classification loss, while the denoising loss discourages semantic drift and non-semantic artifacts. Their balance produces high-risk yet semantically consistent samples for classifier training.}
\label{fig:method3}
\end{figure}

Following the above motivation, we formulate CADS as an iterative adversarial synthesis process guided by the classification risk of the downstream model. We first warm up the downstream model on the source domain $\mathcal{D}^{S}$:
\begin{equation}
\boldsymbol{\theta}^{(0)}
=\arg\min_{\boldsymbol{\theta}}\frac{1}{N_s}
\sum_{i=1}^{N_s}
\ell_{\mathrm{CE}}(\mathbf{f}_{\boldsymbol{\theta}}(\mathbf{x}_{i}^{s}),y_i^s),
\label{eq:source_warmup}
\end{equation}
where \(\boldsymbol{\theta}^{(0)}\) is the warmed-up downstream model parameter, \(\mathbf{x}_{i}^{s}\) is the \(i\)-th source image, \(y_i^s\) is its corresponding label, and \(N_s\) is the number of source samples. This warm-up stage provides the initial classifier for the subsequent risk-guided synthesis.

At synthesis round \(r\), the classifier \(\mathbf{f}_{\boldsymbol{\theta}^{(r-1)}}\) is fixed. 
The LoRA parameters \(\boldsymbol{\phi}^{(r)}\) are newly initialized when $r=1$ and initialized from the optimized LoRA parameters of the previous synthesis round when \(r>1\). For the \(k\)-th class, the \(j\)-th generated sample is obtained by:
\begin{equation}
\tilde{\mathbf{x}}_{k,j}^{(r)}
=
\mathbf{G}_{\boldsymbol{\phi}^{(r)}}
(\mathbf{t}_{k,\mathrm{ref}}^c,\boldsymbol{\eta}_{k,j}^{(r)}),
\label{eq:challenging_sample_generation}
\end{equation}
where \(\mathbf{t}_{k,\mathrm{ref}}^c\) is the class prompt defined in Eq.~\ref{eq:prompt}, \(\boldsymbol{\eta}_{k,j}^{(r)}\) is the random noise, and \(\tilde{\mathbf{x}}_{k,j}^{(r)}\) is the generated sample assigned to the \(k\)-th class. These samples are then evaluated by the fixed classifier to provide a classification-risk signal.

The classification risk loss is defined as:
\begin{equation}
\mathcal{L}_{\mathrm{risk}}^{(r)}
=\frac{1}{CM_g}
\sum_{k=1}^{C}
\sum_{j=1}^{M_g}
\ell_{\mathrm{CE}}
\left(\mathbf{f}_{\boldsymbol{\theta}^{(r-1)}}(\tilde{\mathbf{x}}_{k,j}^{(r)}),k\right),
\label{eq:classifier_feedback_loss}
\end{equation}
where \(M_g\) is the number of generated samples per class, \(C\) is the total number of classes, and \(\mathbf{f}_{\boldsymbol{\theta}^{(r-1)}}\) is the frozen classifier from the previous round. 
Maximizing $\mathcal{L}^{(r)}_{\rm risk}$ encourages the generator to produce samples that incur high classification loss for the current classifier $\mathbf{f}_{\theta^{(r-1)}}$. When combined with the semantic-reference constraint introduced below, these samples represent challenging within-class variations.

To preserve class semantics during risk-guided synthesis, we propose a semantic-reference denoising constraint. For the \(k\)-th class, a semantic reference image \(\mathbf{a}_{k,m}\) is sampled from the reference set \(\mathcal{A}_k\) and encoded by the VAE encoder:
\begin{equation}
\mathbf{z}_{k,m}^{a}
=
\mathcal{F}_E(\mathbf{a}_{k,m}),
\label{eq:reference_latent}
\end{equation}
where \(\mathbf{z}_{k,m}^{a}\) is the clean latent representation of \(\mathbf{a}_{k,m}\). The reference latent is then corrupted at timestep \(t\) according to:
\begin{equation}
\mathbf{z}_{k,m,t}^{a}
=
\sqrt{\bar{\alpha}_{t}}\mathbf{z}_{k,m}^{a}
+
\sqrt{1-\bar{\alpha}_{t}}\boldsymbol{\epsilon},
\label{eq:reference_forward_diffusion}
\end{equation}
where \(\mathbf{z}_{k,m,t}^{a}\) is the noisy reference latent, and 
\(\boldsymbol{\epsilon}\sim\mathcal{N}(\mathbf{0},\mathbf{I})\) is sampled Gaussian noise.
The denoising network predicts the added noise by:
\begin{equation}
\widehat{\boldsymbol{\epsilon}}_{k,m,t}
=
\boldsymbol{\epsilon}_{\boldsymbol{\phi}^{(r)}}
\left(
\mathbf{z}_{k,m,t}^{a},
t,
\mathbf{T}_{k,\mathrm{ref}}
\right),
\label{eq:reference_noise_prediction}
\end{equation}
where \(\widehat{\boldsymbol{\epsilon}}_{k,m,t}\) is the predicted noise, \(\boldsymbol{\epsilon}_{\boldsymbol{\phi}^{(r)}}(\cdot)\) denotes the denoising network at synthesis round \(r\), and \(\mathbf{T}_{k,\mathrm{ref}}=\tau_{\psi}(\mathbf{t}_{k,\mathrm{ref}}^c)\) is the text embedding of the \(k\)-th class prompt. The semantic-reference denoising loss is defined as:
\begin{equation}
\mathcal{L}_{\mathrm{den}}^{(r)}
=
\mathbb{E}_{k,m,t,\boldsymbol{\epsilon}}
\left[
\left\|
\boldsymbol{\epsilon}
-
\widehat{\boldsymbol{\epsilon}}_{k,m,t}
\right\|_2^2
\right],
\label{eq:reference_denoising_loss}
\end{equation}
where the expectation is taken over class index \(k\), reference index \(m\), timestep \(t\), and noise \(\boldsymbol{\epsilon}\). 
Notably, Eq.~\ref{eq:reference_denoising_loss} does not directly supervise each generated high-risk sample. Instead, it regularizes the LoRA parameters used for generation, preventing risk maximization from moving the generator away from semantic references.
The generation objective is:
\begin{equation}
\min_{\boldsymbol{\phi}^{(r)}}
\mathcal{L}_{\mathrm{gen}}^{(r)}
=\mathcal{L}_{\mathrm{den}}^{(r)}
-\lambda_{\mathrm{risk}}\mathcal{L}_{\mathrm{risk}}^{(r)},
\label{eq:generation_objective}
\end{equation}
where \(\lambda_{\mathrm{risk}}\) controls the strength of classification-risk signal. 
The denoising term acts as a reference-based regularizer, while the risk term guides the generator toward samples with high classification loss under the current classifier. 
Together, these terms guide the diffusion model toward samples with high classification loss while keeping the class-conditional generated distribution close to the semantic references.

After optimizing $\boldsymbol{\phi}^{(r)}$ by Eq.~\ref{eq:generation_objective}, we construct a class-balanced generated dataset for round $r$:
\begin{equation}
\mathcal{D}_{g}^{(r)}
=
\{(\tilde{\mathbf{x}}_{k,j}^{(r)},k)
\mid k=1,\ldots,C,\; j=1,\ldots,M_g\},
\label{eq:generated_dataset}
\end{equation}
where $\tilde{\mathbf{x}}_{k,j}^{(r)}$ is the $j$-th generated image of class $k$, and $M_g$ is the number of generated samples per class.
The classifier is then updated using both source-domain data $\mathcal{D}^{S}$ and current generated data $\mathcal{D}_{g}^{(r)}$:
\begin{equation}
\mathcal{D}_{\mathrm{train}}^{(r)}
=
\mathcal{D}^{S}\cup\mathcal{D}_{g}^{(r)}.
\label{eq:training_dataset}
\end{equation}
Starting from $\boldsymbol{\theta}^{(r-1)}$, the classifier is optimized by:
\begin{equation}
\begin{aligned}
\mathcal{L}_{\mathrm{clf}}^{(r)}
=&\frac{1}{N_s}\sum_{i=1}^{N_s}
\ell_{\mathrm{CE}}
\left(\mathbf{f}_{\boldsymbol{\theta}^{(r)}}(\mathbf{x}_{i}^{s}),y_i^s\right) \\
&+\mu\frac{1}{CM_g}\sum_{k=1}^{C}\sum_{j=1}^{M_g}
\ell_{\mathrm{CE}}
\left(\mathbf{f}_{\boldsymbol{\theta}^{(r)}}(\tilde{\mathbf{x}}_{k,j}^{(r)}),k\right),
\end{aligned}
\label{eq:classifier_objective}
\end{equation}
where $\mu$ controls the weight of generated-data supervision.
The first term preserves supervision from real source samples, while the second term trains the classifier on generated high-risk samples.
After this update, the resulting classifier $\mathbf{f}_{\boldsymbol{\theta}^{(r)}}$ provides feedback for the next synthesis round.

\begin{algorithm}[t]
\caption{Training pipeline of our proposed PAPT++}
\label{alg:cadg}
\begin{algorithmic}[1]
\Statex \textbf{Input:} Source dataset 
\(\mathcal{D}^{S}=\{(\mathbf{x}_i^s,y_i^s)\}_{i=1}^{N_s}\); 
class prompts \(\{\mathbf{t}_{k,\mathrm{ref}}^c\}_{k=1}^{C}\); 
pretrained diffusion generator \(\mathbf{G}_{\boldsymbol{\phi}}\); 
classifier \(\mathbf{f}_{\boldsymbol{\theta}}\); 
number of reference images per class \(M_a\); 
number of generated samples per class \(M_g\); 
generation-training rounds \(R\); 
weights \(\lambda_{\mathrm{div}}, \lambda_{\mathrm{risk}}, \mu\).
\Statex \textbf{Output:} Robust classifier \(\mathbf{f}_{\boldsymbol{\theta}^{(R)}}\).

\BeginBox[phaseduring]
\LComment{Class-Level Semantic Reference Learning}

\Repeat
    \State Optimize the diffusion with two objectives by Eq.~\ref{eq:reference_objective}:
    \[
    \boldsymbol{\phi}_{\mathrm{ref}}
    \leftarrow
    \arg\min_{\boldsymbol{\phi}_{\mathrm{ref}}}
    \{
    \mathcal{L}_{\mathrm{align}}
    +\lambda_{\mathrm{div}}\mathcal{L}_{\mathrm{div}}\}.
    \]
\Until{\(\mathcal{L}_{\mathrm{ref}}\) converges}

\State Merge $\boldsymbol{\phi}_{\mathrm{ref}}$ into the pretrained diffusion backbone $\boldsymbol{\omega}$ and obtain the reference-adapted backbone $\boldsymbol{\bar{\omega}}_{\mathrm{ref}}$.

\For{each class \(k=1,\ldots,C\)}
    \State Sample reference images $\mathcal{A}_k$ using $G_{\bar{\omega}_{\mathrm{ref}}}$.
\EndFor
\EndBox

\BeginBox[phaseafter]
\OrangeLComment{Classifier-Guided Adversarial Diffusion Synthesis}
\State Warm up the classifier on \(\mathcal{D}^{S}\) and obtain \(\boldsymbol{\theta}^{(0)}\) by minimizing Eq.~\ref{eq:source_warmup}.
\For{round \(r=1,\ldots,R\)}    
    
    \If{\(r=1\)} 
    \State Initialize a new LoRA adapter $\boldsymbol{\phi}^{(1)}$;;
    \ElsIf{\(r>1\)}
    \State Initialize the diffusion \(\boldsymbol{\phi}^{(r)} \leftarrow \boldsymbol{\phi}^{(r-1)}\);
    \EndIf

    \OrangeLComment{Adversarial Training between \(\boldsymbol{\phi}^{(r)}\) and \(\mathbf{f}_{\boldsymbol{\theta}^{(r-1)}}\)}
    \State \textcolor{red}{Fix the downstream classifier 
    \(\mathbf{f}_{\boldsymbol{\theta}^{(r-1)}}\);}
    \Repeat
        \State Optimize the diffusion model by Eq.~\ref{eq:generation_objective}:
        \[
        \boldsymbol{\phi}^{(r)}
        \leftarrow
        \arg\min_{\boldsymbol{\phi}^{(r)}}
        \{
        \mathcal{L}_{\mathrm{den}}^{(r)}
        -\lambda_{\mathrm{risk}}\mathcal{L}_{\mathrm{risk}}^{(r)}\};
        \]
    \Until{\(\mathcal{L}_{\mathrm{gen}}^{(r)}\) converges}
    
    \State Generate the class-balanced dataset 
    \(\mathcal{D}_{g}^{(r)}\) by Eq.~\ref{eq:generated_dataset}.
    
    \State Form the training set 
    \(\mathcal{D}_{\mathrm{train}}^{(r)}
    =\mathcal{D}^{S}\cup\mathcal{D}_{g}^{(r)}\).
    \State \textcolor{red}{Fix the diffusion model 
    \(\boldsymbol{\phi}^{(r)}\);}
    \Repeat
        \State Update the downstream classifier with Eq.~\ref{eq:classifier_objective}: 
        \[
        \boldsymbol{\theta}^{(r)}
        \leftarrow
        \arg\min_{\boldsymbol{\theta}}
        \mathcal{L}_{\mathrm{clf}}^{(r)}.
        \]
    \Until{\(\mathcal{L}_{\mathrm{clf}}^{(r)}\) converges}
\EndFor
\EndBox

\end{algorithmic}
\end{algorithm}

\begin{figure}[t]
\centering
\includegraphics[width=0.48\textwidth]{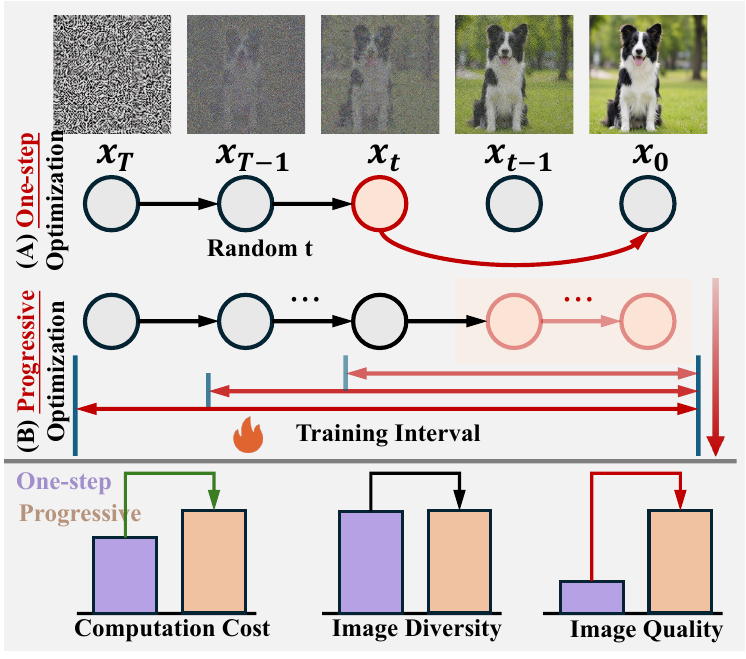}
\vspace{-3.0mm}
\caption{
Comparison between the one-step denoising optimization used in PAPT and our progressive denoising optimization strategy. The proposed strategy gradually increases the number of denoising steps for back-propagation, providing longer trajectory-level guidance and improving image quality.
}
\label{fig:method1}
\end{figure}

\subsection{LoRA Optimization and Sampling Strategy}

\noindent The objectives in Eq.~\ref{eq:reference_objective} and Eq.~\ref{eq:generation_objective} rely on feedback from external scoring and feature models as well as the current classifier. Backpropagating these signals to the trainable LoRA parameters through the full reverse denoising process is computationally expensive. PAPT therefore uses an efficient one-step approximation: it samples an intermediate low-noise timestep, predicts the clean image $\tilde{x}_0$, and backpropagates the loss through only this one-step denoising path. Although efficient, this strategy provides limited guidance along the full denoising trajectory and may produce noisy or overly abstract images when the optimization signal becomes more complex.

To improve generation quality and optimization stability, we adopt a progressive denoising optimization strategy. At the beginning of training, the loss is backpropagated through only the final denoising step, as shown in Fig.~\ref{fig:method1}. As training proceeds, earlier denoising steps are gradually included in the backward path. Specifically, after every $N_{\mathrm{opt}}$ optimization steps, the number of denoising steps used for gradient propagation is increased by one. This schedule stabilizes early optimization while allowing the objective to guide a longer part of the denoising trajectory later in training, resulting in images with fewer artifacts.

During sampling, we further adopt CFG++~\cite{chung2025cfg++} for both semantic reference generation and high-risk sample generation. 
CFG++ improves the visual quality of sampled images without changing the training objectives in Eq.~\ref{eq:reference_objective} and Eq.~\ref{eq:generation_objective}. 
Together, the progressive denoising optimization and CFG++ sampling help produce more semantically faithful reference images and higher-quality high-risk samples for downstream classifier training.

\subsection{Theoretical Analysis}

\noindent We provide four theoretical results to connect CADS with Distributionally Robust Optimization (DRO). Theorem~\ref{thm:semantic_consistency} shows that the smoothed reference-denoising objective controls an upper bound on the KL divergence between the generated and smoothed reference latent distributions. Theorem~\ref{thm:target_risk_bound} relates the worst-case risk over the resulting semantic ambiguity sets to the target-domain risk. Propositions~\ref{prop:semantic_worst_to_generated} accounts for the finite class-wise search gaps of CADS, while Propositions~\ref{prop:generated_risk_gap} analyzes the use of a finite generated sample set.

\begin{theorem}[\textbf{Distributional proximity induced by reference-based denoising regularization}]
\label{thm:semantic_consistency}
For class $k$, let $P_A^k$ denote the smoothed clean-latent distribution induced by the encoded reference images in $\mathcal A_k$ from Eq.~\ref{eq:reference_set}.
Let $P_{\boldsymbol{\phi}}^k$ denote the clean-latent distribution generated by the reverse diffusion process conditioned on $T_{k,\mathrm{ref}}$.

\vspace{2.0mm}

For class $k$, the reference denoising loss in Eq.~\ref{eq:reference_denoising_loss} can be written as:
\begin{equation}
\mathcal L_{\mathrm{den}}^k(\boldsymbol{\phi};\mathcal A_k)
=
\mathbb E_{\mathbf z_0^a\sim P_A^k,\;t,\;\boldsymbol{\epsilon}}
\left[
\left\|
\boldsymbol{\epsilon}
-
\epsilon_{\phi}
(z_t^a,t,T_{k,\mathrm{ref}})
\right\|_2^2
\right],
\end{equation}
where $z_t^a$ is obtained from $z_0^a$ by the forward diffusion process in Eq.~\ref{eq:reference_forward_diffusion}.

\vspace{2.0mm}

Let $q_{A,T}^k$ denote the distribution obtained by adding Gaussian diffusion noise to $\mathbf z_0^a\sim P_A^k$ at step $T$.
Assume that $D_{\mathrm{KL}}(q_{A,T}^k\Vert p_{\mathrm{prior}})<\infty$, where $p_{\mathrm{prior}}=\mathcal N(\mathbf 0,\mathbf I)$, and that the reverse process uses the same fixed variance schedule as the forward process.
Then there exist constants $\gamma_k>0$ and $C_k$, independent of $\phi$, such that:
\begin{equation}
D_{\mathrm{KL}}(P_A^k\Vert P_{\boldsymbol{\phi}}^k)
\le
\gamma_k
\mathcal L_{\mathrm{den}}^k(\phi;\mathcal A_k)
+
C_k .
\label{eq:first_statement}
\end{equation}

For a semantic tolerance $\delta_{\mathrm{sem}}$, define:
\begin{equation}
\mathcal B_{\mathrm{sem}}^k(\delta_{\mathrm{sem}})
=
\left\{
P_{\phi}^k
\;\middle|\;
\mathcal L_{\mathrm{den}}^k(\phi;\mathcal A_k)
\le
\delta_{\mathrm{sem}}
\right\}.
\label{eq:semantic_ambiguity_set}
\end{equation}
Then, for any $P_{\phi}^k\in\mathcal B_{\mathrm{sem}}^k(\delta_{\mathrm{sem}})$,
\begin{equation}
D_{\mathrm{KL}}(P_A^k\Vert P_{\phi}^k)
\le
\gamma_k\delta_{\mathrm{sem}}+C_k .
\label{eq:second_statement}
\end{equation}

\textbf{Implication.}
The reference denoising loss controls an upper bound on the KL divergence between the smoothed reference distribution and the generated distribution. Thus, Eq.~\ref{eq:generation_objective} can be interpreted as a penalized relaxation of searching for high-risk samples under a class-level semantic constraint.
\end{theorem}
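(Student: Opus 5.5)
We will prove the statement through the standard variational (ELBO) bound for diffusion models, as in DDPM and the score-based likelihood bounds. Write $q^k(\mathbf z_{0:T})$ for the forward Markov chain started at $\mathbf z_0^a\sim P_A^k$. Write $p_{\phi}^k(\mathbf z_{0:T})=p_{\mathrm{prior}}(\mathbf z_T)\prod_{t=1}^T p_\phi(\mathbf z_{t-1}\mid\mathbf z_t,T_{k,\mathrm{ref}})$ for the reverse chain, whose $\mathbf z_0$-marginal is $P_\phi^k$.

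\textbf{Step 1: lift to path space.} By the data-processing inequality (chain rule for KL),
\begin{equation*}
D_{\mathrm{KL}}(P_A^k\Vert P_\phi^k)\le D_{\mathrm{KL}}(q^k(\mathbf z_{0:T})\Vert p_\phi^k(\mathbf z_{0:T})).
\end{equation*}
The "smoothed" reference distribution is used here so that $P_A^k$ has a density and $-\mathbb E\log P_A^k$ is finite. We treat it as a Gaussian convolution of the empirical measure on $\{\mathcal F_E(\mathbf a_{k,m})\}$. This step is what absorbs the entropy term below.

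\textbf{Step 2: decompose the path KL.} We use the usual telescoping decomposition:
\begin{equation*}
D_{\mathrm{KL}}(q^k\Vert p_\phi^k)=D_{\mathrm{KL}}(q_{A,T}^k\Vert p_{\mathrm{prior}})+\sum_{t=2}^{T}\mathbb E_{q}\big[D_{\mathrm{KL}}(q(\mathbf z_{t-1}\mid\mathbf z_t,\mathbf z_0)\Vert p_\phi(\mathbf z_{t-1}\mid\mathbf z_t))\big]+\mathbb E_q[-\log p_\phi(\mathbf z_0\mid\mathbf z_1)]-H(P_A^k)+\text{const}.
\end{equation*}
The first term is finite by assumption and independent of $\phi$. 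For each $t\ge2$, both conditionals are Gaussian with the same fixed variance $\sigma_t^2$, by the fixed-schedule assumption. Under the $\epsilon$-parametrization of the reverse mean, each KL term therefore equals
\begin{equation*}
\frac{\beta_t^2}{2\sigma_t^2\alpha_t(1-\bar\alpha_t)}\,\mathbb E\|\boldsymbol\epsilon-\epsilon_\phi(\mathbf z_t^a,t,T_{k,\mathrm{ref}})\|_2^2.
\end{equation*}
The $t=1$ decoder term is a Gaussian log-likelihood. It equals the same type of weighted squared $\epsilon$-error plus a $\phi$-independent log-normalizer. The entropy $H(P_A^k)$ is finite and independent of $\phi$ because of the smoothing.

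\textbf{Step 3: compare with the uniform weighting.} Let $w_t$ denote the timestep weights above, and take $t$ uniform on $\{1,\dots,T\}$ in $\mathcal L_{\mathrm{den}}^k$. Then
\begin{equation*}
\sum_t w_t\,\mathbb E\|\cdot\|^2\le T\max_t w_t\cdot\mathcal L_{\mathrm{den}}^k,
\end{equation*}
so we set $\gamma_k=T\max_t w_t>0$. All remaining $\phi$-independent quantities are collected into $C_k$: the prior KL, the entropy, and the log-normalizers. This yields Eq.~\ref{eq:first_statement}.

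Eq.~\ref{eq:second_statement} then follows immediately. By Eq.~\ref{eq:semantic_ambiguity_set}, $\mathcal L_{\mathrm{den}}^k\le\delta_{\mathrm{sem}}$ on $\mathcal B_{\mathrm{sem}}^k$, and $\gamma_k>0$, so the bound is monotone in $\mathcal L_{\mathrm{den}}^k$.

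\textbf{Main obstacle.} The delicate part is the bookkeeping that makes $C_k$ genuinely finite and independent of $\phi$, in particular the $t=1$ decoder term and the entropy of $P_A^k$. Without smoothing, the empirical reference measure is atomic, so the KL to any continuous $P_\phi^k$ is infinite. We will resolve this by defining $P_A^k$ as a Gaussian-smoothed version. Alternatively, we can interpret $\mathbf z_0$ at a small positive noise level $t_0$, dropping the decoder term and starting the chain at $t_0$. Either way the entropy is finite, and the Gaussian decoder's log-normalizer is fixed by the schedule.

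A secondary issue is that the paper's expectation in Eq.~\ref{eq:reference_denoising_loss} may use a nonuniform timestep distribution $\pi(t)$. In that case we need $\pi(t)>0$ for every $t$ and take $\gamma_k=\max_t w_t/\pi(t)$.
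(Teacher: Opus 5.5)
Your proposal is correct and follows essentially the same route as the paper. Both arguments lift to the trajectory KL by data processing and use the DDPM variational decomposition with the $-h(P_A^k)$ term. Both rewrite the equal-covariance Gaussian transition KLs and the $t=1$ Gaussian reconstruction NLL as weighted $\epsilon$-prediction errors, take $\gamma_k=\max_t w_t/\nu(t)$, and obtain the second claim by monotonicity. The only cosmetic difference is the terminal term: you write it as the marginal KL $D_{\mathrm{KL}}(q_{A,T}^k\Vert p_{\mathrm{prior}})$ plus an unnamed constant, whereas the paper uses $\mathbb E_{\mathbf z_0}[D_{\mathrm{KL}}(q(\mathbf z_T\mid\mathbf z_0)\Vert p_{\mathrm{prior}})]$. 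These differ exactly by the finite, $\phi$-independent mutual information $I(\mathbf z_0;\mathbf z_T)$, so your constant is that quantity.
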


\begin{theorem}[\textbf{Target-domain risk bound in the decoded latent space}]
\label{thm:target_risk_bound}
Consider an unseen target domain $\mathcal D^{\mathcal T}$.
For class $k$, let $P_T^k$ denote its latent distribution and $\pi_T^k$ denote its class prior.
For any latent distribution $P$ of class-$k$ samples, we measure the risk of classifier $f_{\theta}$ by the expected classification loss:
\begin{equation}
\mathcal R_k(f_{\theta};P)
=
\mathbb E_{z\sim P}
\left[
\ell_{\mathrm{CE}}
\left(
f_{\theta}(\mathcal F_D(z)), k
\right)
\right],
\end{equation}
where $\mathcal F_D$ denotes the VAE decoder.
The target-domain risk can be defined as:
\begin{equation}
\mathcal R_T(f_{\theta})
=
\sum_{k=1}^{C}
\pi_T^k
\mathcal R_k(f_{\theta};P_T^k).
\end{equation}

For all classes $k$, assume that the classification loss is bounded as
$0\le \ell_{\mathrm{CE}}\le L_{\max}$
for samples from $P_T^k$ and from any
$P\in\mathcal B_{\mathrm{sem}}^k(\delta_{\mathrm{sem}})$.
Suppose that each $P_T^k$ has distance at most $\rho_k$ to the semantic diffusion ambiguity set in Theorem~\ref{thm:semantic_consistency}:
\begin{equation}
\inf_{P\in \mathcal B_{\mathrm{sem}}^k(\delta_{\mathrm{sem}})}
d_{\mathrm{TV}}(P_T^k,P)
\le
\rho_k,
\end{equation}
where $d_{\mathrm{TV}}(P_T^k,P)=\sup_A|P_T^k(A)-P(A)|$ denotes the total variation distance.
Then:
\begin{equation}
\begin{aligned}
\mathcal R_T(f_{\theta})
&\le
\sum_{k=1}^{C}
\pi_T^k
\sup_{P\in \mathcal B_{\mathrm{sem}}^k(\delta_{\mathrm{sem}})}
\mathcal R_k(f_{\theta};P) \\ 
&+
L_{\max}
\sum_{k=1}^{C}
\pi_T^k\rho_k .
\end{aligned}
\end{equation}

\textbf{Implication.}
The target risk is bounded by the worst-case risk over the semantic ambiguity set and the remaining coverage gap.
Therefore, reducing the worst-case risk over the semantic ambiguity sets $\mathcal B_{\mathrm{sem}}^k(\delta_{\mathrm{sem}})$ tightens the target-risk bound when the coverage gaps $\rho_k$ are small. This motivates training the classifier on high-risk samples searched within these sets.
\end{theorem}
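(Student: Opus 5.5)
The bound is proved class by class and then averaged with the target priors $\pi_T^k$. Since $\mathcal R_T(f_\theta)=\sum_k \pi_T^k\mathcal R_k(f_\theta;P_T^k)$ with $\pi_T^k\ge 0$, it suffices to show, for each class $k$,
\begin{equation*}
\mathcal R_k(f_\theta;P_T^k)\le \sup_{P\in\mathcal B_{\mathrm{sem}}^k(\delta_{\mathrm{sem}})}\mathcal R_k(f_\theta;P)+L_{\max}\rho_k .
\end{equation*}
Multiplying by $\pi_T^k$ and summing over $k$ then gives the stated bound.

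For fixed $k$, write $g(z)=\ell_{\mathrm{CE}}(f_\theta(\mathcal F_D(z)),k)$. This is a measurable function with $0\le g\le L_{\max}$ on the supports of $P_T^k$ and of every $P$ in the ambiguity set. The key step is the standard total variation inequality for bounded functions: for any such $P$,
\begin{equation*}
\bigl|\mathbb E_{P_T^k}[g]-\mathbb E_{P}[g]\bigr|\le L_{\max}\, d_{\mathrm{TV}}(P_T^k,P).
\end{equation*}

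To prove it, I use the layer-cake representation $\mathbb E_Q[g]=\int_0^{L_{\max}}Q(g>s)\,ds$, valid for $Q\in\{P_T^k,P\}$. Each level set $\{g>s\}$ is an event $A$, so $|P_T^k(A)-P(A)|\le d_{\mathrm{TV}}(P_T^k,P)$. Integrating over $s\in[0,L_{\max}]$ gives the factor $L_{\max}$.

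This factor is exactly why nonnegativity and the upper bound $L_{\max}$ are both needed. The naive estimate $\sup|g|\cdot 2d_{\mathrm{TV}}$ would lose a factor of $2$. Centering $g$ at $L_{\max}/2$ also works, but the layer-cake route is cleaner.

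Next, rewriting the inequality as $\mathcal R_k(f_\theta;P_T^k)\le \mathcal R_k(f_\theta;P)+L_{\max}d_{\mathrm{TV}}(P_T^k,P)$ and bounding the first term on the right by its supremum over $\mathcal B_{\mathrm{sem}}^k(\delta_{\mathrm{sem}})$, I get
\begin{equation*}
\mathcal R_k(f_\theta;P_T^k)\le \sup_{P'\in\mathcal B_{\mathrm{sem}}^k}\mathcal R_k(f_\theta;P')+L_{\max}d_{\mathrm{TV}}(P_T^k,P)
\end{equation*}
for every $P$ in the set. Taking the infimum over $P$ of the right-hand side only affects the last term. By the coverage assumption that term becomes at most $L_{\max}\rho_k$.

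The infimum need not be attained, so I argue with an $\varepsilon$-approximate minimizer $P_\varepsilon$ satisfying $d_{\mathrm{TV}}(P_T^k,P_\varepsilon)\le\rho_k+\varepsilon$. Letting $\varepsilon\to 0$ removes the $\varepsilon$. This also requires the ambiguity set to be nonempty, which the finiteness of the infimum guarantees.

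The main point needing care is not an estimate but well-posedness. The risks must be defined as integrals of a measurable, bounded loss against every distribution involved. The supremum over $\mathcal B_{\mathrm{sem}}^k$ is then finite, at most $L_{\max}$. If a class has $\pi_T^k=0$, it contributes nothing to either side, so no assumption is needed for it.

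Theorem~\ref{thm:semantic_consistency} is not needed for the inequality itself. It only gives the ambiguity set its interpretation as a KL-controlled neighborhood of the reference distribution $P_A^k$.
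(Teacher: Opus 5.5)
Your proof is correct and follows the paper's argument: a class-wise bound via the TV--risk inequality, an $\eta$-approximate minimizer of the coverage gap with $\eta\downarrow 0$, and then summation with the target priors $\pi_T^k$. The only difference is that the paper simply asserts $|\mathcal R_k(f_\theta;P_T^k)-\mathcal R_k(f_\theta;P)|\le L_{\max}\,d_{\mathrm{TV}}(P_T^k,P)$ as a consequence of its bounded-loss condition, whereas your layer-cake argument actually proves it, including why no factor of $2$ appears.
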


Theorem~\ref{thm:target_risk_bound} bounds the target-domain risk
in terms of the class-wise worst-case risks over the semantic
ambiguity sets. In CADS, the class-conditional distributions are
generated jointly by optimizing the shared LoRA parameters in
Eq.~\ref{eq:generation_objective}, while keeping the classifier fixed.
Since all classes share the same LoRA parameters and the optimization
is performed for a finite number of steps, the resulting distribution
for a given class may not attain the corresponding worst-case risk.
We describe this difference using a class-wise search gap.

\begin{assumption}[\textbf{Semantic constraint and class-wise search gap}]
\label{assump:approx_adv_search}
At synthesis round $r$, the classifier
$f_{\theta^{(r-1)}}$ is fixed.
For each class $k$, we assume that the distribution generated at
synthesis round $r$ satisfies the semantic constraint defined in
Eq.~\ref{eq:semantic_ambiguity_set}, i.e.,
\begin{equation}
P_{\phi^{(r)}}^k
\in
\mathcal B_{\rm sem}^k(\delta_{\rm sem}).
\label{eq:generated_distribution_constraint}
\end{equation}
Under Eq.~\ref{eq:generated_distribution_constraint}, we define the class-wise search gap as:
\begin{equation}
\begin{aligned}
\varepsilon^{(r)}_{{\rm adv},k}
&={}
\sup_{P\in\mathcal B^k_{\rm sem}(\delta_{\rm sem})}
\mathcal R_k
\left(
f_{\theta^{(r-1)}};P
\right) \\
&-
\mathcal R_k
\left(
f_{\theta^{(r-1)}};
P^k_{\phi^{(r)}}
\right)
\ge 0 .
\label{eq:approx_adv_search}
\end{aligned}
\end{equation}
This gap is zero if the CADS-generated distribution attains the
class-wise worst-case risk, and is positive otherwise.
\end{assumption}

Assumption~\ref{assump:approx_adv_search} links the class-conditional
distributions generated by CADS to the semantic ambiguity sets and
introduces $\varepsilon^{(r)}_{{\rm adv},k}$ as the class-wise search
gap.

\begin{proposition}[\textbf{Target-risk bound with class-wise search gaps}]
\label{prop:semantic_worst_to_generated}
Under Assumption~\ref{assump:approx_adv_search}, for any nonnegative class weights $\{\pi^k\}_{k=1}^{C}$ satisfying: $\sum_{k=1}^{C}\pi^k=1$, we have:
\begin{equation}
\begin{aligned}
&\sum_{k=1}^{C}\pi^k
\sup_{P\in \mathcal B^k_{\rm sem}(\delta_{\rm sem})}
\mathcal{R}_k(f_{\theta^{(r-1)}};P) \\
&=
\sum_{k=1}^{C}\pi^k
\mathcal{R}_k(f_{\theta^{(r-1)}};P^k_{\phi^{(r)}})
+ 
\sum_{k=1}^{C}\pi^k
\varepsilon^{(r)}_{{\rm adv},k}.
\label{eq:weighted_worst_to_generated}
\end{aligned}
\end{equation}
In particular, taking $\pi^k=\pi_T^k$ and combining
Eq.~\ref{eq:weighted_worst_to_generated} with
Theorem~\ref{thm:target_risk_bound} gives:
\begin{equation}
\begin{aligned}
\mathcal{R}_T(f_{\theta^{(r-1)}})
&\le 
\sum_{k=1}^{C}\pi_T^k
\mathcal{R}_k(f_{\theta^{(r-1)}};P^k_{\phi^{(r)}}) \\ 
&+ 
\sum_{k=1}^{C}\pi_T^k
\varepsilon^{(r)}_{{\rm adv},k}
+
L_{\max}
\sum_{k=1}^{C}\pi_T^k \rho_k .
\label{eq:target_bound_generated_distribution}
\end{aligned}
\end{equation}
For class-balanced risk $\pi^k=1/C$, let:
\begin{equation}
\mathcal R_g^{(r)}
\left(
f_{\theta^{(r-1)}}
\right)
:=
\frac{1}{C}
\sum_{k=1}^{C}
\mathcal R_k
\left(
f_{\theta^{(r-1)}};
P^k_{\phi^{(r)}}
\right),
\end{equation}
and define:
\begin{equation}
\varepsilon_{\rm adv}^{(r)}
=
\frac{1}{C}
\sum_{k=1}^{C}
\varepsilon^{(r)}_{{\rm adv},k}.
\label{eq:avg_adv_error}
\end{equation}
Then Eq.~\ref{eq:weighted_worst_to_generated} reduces to:
\begin{equation}
\frac{1}{C}
\sum_{k=1}^{C}
\sup_{P\in\mathcal B^k_{\rm sem}(\delta_{\rm sem})}
\mathcal R_k
\left(
f_{\theta^{(r-1)}};P
\right)
=
\mathcal R_g^{(r)}
\left(
f_{\theta^{(r-1)}}
\right)
+
\varepsilon_{\rm adv}^{(r)}.
\label{eq:balanced_worst_to_generated}
\end{equation}
\end{proposition}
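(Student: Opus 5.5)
The identity in Eq.~\ref{eq:weighted_worst_to_generated} is essentially a bookkeeping consequence of the definition of the class-wise search gap, so I will derive it directly from Assumption~\ref{assump:approx_adv_search}. For each class $k$, Eq.~\ref{eq:approx_adv_search} rearranges to
\begin{equation*}
\sup_{P\in\mathcal B^k_{\rm sem}(\delta_{\rm sem})}\mathcal R_k(f_{\theta^{(r-1)}};P)
=\mathcal R_k(f_{\theta^{(r-1)}};P^k_{\phi^{(r)}})+\varepsilon^{(r)}_{{\rm adv},k}.
\end{equation*}
Before doing so, I will check that all quantities are finite. By the boundedness assumption $0\le\ell_{\rm CE}\le L_{\max}$ on $\mathcal B^k_{\rm sem}(\delta_{\rm sem})$ in Theorem~\ref{thm:target_risk_bound}, the supremum lies in $[0,L_{\max}]$. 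Since $P^k_{\phi^{(r)}}$ belongs to the set by Eq.~\ref{eq:generated_distribution_constraint}, its risk is also finite. Hence the subtraction defining $\varepsilon^{(r)}_{{\rm adv},k}$ is legitimate, and $\varepsilon^{(r)}_{{\rm adv},k}\ge0$ holds automatically.

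Next, I multiply each class identity by the nonnegative weight $\pi^k$ and sum over $k=1,\ldots,C$. Finite sums are linear, so this gives Eq.~\ref{eq:weighted_worst_to_generated} with equality. Nonnegativity of the weights is not needed for the equality itself; it matters only later, when the identity is combined with an inequality. The normalization $\sum_k\pi^k=1$ is likewise not needed for the identity.

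For Eq.~\ref{eq:target_bound_generated_distribution}, I apply Theorem~\ref{thm:target_risk_bound} with the classifier $f_{\theta^{(r-1)}}$. This is valid because that theorem holds for any fixed $f_\theta$ satisfying the boundedness and coverage assumptions. It bounds $\mathcal R_T(f_{\theta^{(r-1)}})$ by the $\pi_T$-weighted sum of suprema plus $L_{\max}\sum_k\pi_T^k\rho_k$. I then substitute the identity with $\pi^k=\pi_T^k$, which is admissible since the target class priors are nonnegative and sum to one.

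The class-balanced statement Eq.~\ref{eq:balanced_worst_to_generated} follows by setting $\pi^k=1/C$ and recognizing the two resulting averages as $\mathcal R_g^{(r)}$ and $\varepsilon^{(r)}_{\rm adv}$ from Eq.~\ref{eq:avg_adv_error}.

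The only step needing any care is the finiteness justification above, which ensures the search gaps are well-defined real numbers rather than an $\infty-\infty$ expression. The assumed bound $L_{\max}$ handles it, so I expect no genuine obstacle.
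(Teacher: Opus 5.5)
Your proposal is correct and follows the paper's proof essentially step for step: rearrange the definition of the class-wise search gap, take the $\pi^k$-weighted sum, substitute $\pi^k=\pi_T^k$ into Theorem~\ref{thm:target_risk_bound}, and specialize to $\pi^k=1/C$. Your finiteness check is an extra the paper leaves implicit. It uses the bound $L_{\max}$ and the nonemptiness of $\mathcal B^k_{\rm sem}(\delta_{\rm sem})$, which contains $P^k_{\phi^{(r)}}$, and it is harmless and slightly more careful.
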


\begin{proposition}[\textbf{Finite-sample bound for generated risk}]
\label{prop:generated_risk_gap}
Under Assumption~\ref{assump:approx_adv_search}, the class-balanced
generated-distribution risk satisfies:
\begin{equation}
\mathcal R_g^{(r)}(f_{\theta})
\le
\frac{1}{C}
\sum_{k=1}^{C}
\sup_{P\in\mathcal B_{\mathrm{sem}}^k(\delta_{\mathrm{sem}})}
\mathcal R_k(f_{\theta};P).
\label{eq:generated_risk_semantic_bound}
\end{equation}

For the generated dataset $\mathcal D_g^{(r)}$, define the empirical
generated risk as:
\begin{equation}
\widehat{\mathcal R}_g^{(r)}
(f_{\theta})
=
\frac{1}{CM_g}
\sum_{k=1}^{C}
\sum_{j=1}^{M_g}
\ell_{\mathrm{CE}}
\left(
f_{\theta}
(\tilde{\mathbf x}_{k,j}^{(r)}),k
\right).
\label{eq:empirical_generated_risk}
\end{equation}
This is the generated-data loss in
Eq.~\ref{eq:classifier_objective} before multiplication by $\mu$.

\vspace{2.0mm}

Conditioned on the distributions
$\{P_{\phi^{(r)}}^k\}_{k=1}^{C}$,
assume that these distributions are fixed before constructing
$\mathcal D_g^{(r)}$. The $CM_g$ latent samples are then drawn
independently, with $M_g$ samples drawn from each
$P_{\phi^{(r)}}^k$, and decoded to obtain
$\{\tilde{x}_{k,j}^{(r)}\}_{k,j}$. 
We further assume that:
\begin{equation}
0
\le
\ell_{\mathrm{CE}}
\left(
f_{\theta}
(\tilde{x}_{k,j}^{(r)}),k
\right)
\le
L_{\max}
\end{equation}
for all $f_{\theta}\in\mathcal H$.

\vspace{2.0mm}

Let $\mathfrak R_{CM_g}^{(r)}
(\ell_{\mathrm{CE}}\circ\mathcal H)$
denote the expected Rademacher complexity of the loss class under
this class-balanced sampling process. Then, with probability at least
$1-\delta$, for all
$f_{\theta}\in\mathcal H$,
\begin{equation}
\begin{aligned}
\left|
\mathcal R_g^{(r)}(f_{\theta})
-
\widehat{\mathcal R}_g^{(r)}(f_{\theta})
\right|
&\le{}
2\mathfrak R_{CM_g}^{(r)}
(\ell_{\mathrm{CE}}\circ\mathcal H) \\
&+
L_{\max}
\sqrt{
\frac{\log(2/\delta)}{2CM_g}
}.
\label{eq:generated_risk_gap_bound}
\end{aligned}
\end{equation}

For convenience, define:
\begin{equation}
\Delta_g^{(r)}(\delta)=
2\mathfrak R_{CM_g}^{(r)}
(\ell_{\mathrm{CE}}\circ\mathcal H)
+
L_{\max}
\sqrt{
\frac{\log(2/\delta)}{2CM_g}
}.
\label{eq:finite_sample_gap}
\end{equation}

On the same event, in the class-balanced setting, where $\pi_T^k=1/C$, combining
Proposition~\ref{prop:semantic_worst_to_generated} with
Eq.~\ref{eq:generated_risk_gap_bound} gives:
\begin{equation}
\begin{aligned}
\underbrace{
\mathcal R_T
\left(
f_{\theta^{(r-1)}}
\right)
}_{\text{target risk}}
\le{}&
\underbrace{
\widehat{\mathcal R}_g^{(r)}
\left(
f_{\theta^{(r-1)}}
\right)
}_{\text{empirical generated risk}}
+
\underbrace{
\varepsilon_{\rm adv}^{(r)}
}_{\text{search gap}}
\\
&+
\underbrace{
\frac{L_{\max}}{C}
\sum_{k=1}^{C}\rho_k
}_{\text{coverage gap}}
+
\underbrace{
\Delta_g^{(r)}(\delta)
}_{\text{finite-sample gap}} .
\end{aligned}
\end{equation}

\textbf{Implication.}
Eq.~\ref{eq:generated_risk_gap_bound} shows that the generated-data
loss in Eq.~\ref{eq:classifier_objective} is a finite-sample estimate
of the classifier risk under the class-conditional distributions
generated by CADS. For a fixed hypothesis class whose Rademacher
complexity decreases with the sample size, this estimation gap
decreases as $M_g$ increases.
\end{proposition}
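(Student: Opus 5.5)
The proposition has three parts, and I will prove them in order.

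\emph{First inequality, Eq.~\ref{eq:generated_risk_semantic_bound}.} This part is immediate from Assumption~\ref{assump:approx_adv_search}. Since $P^k_{\phi^{(r)}}\in\mathcal B^k_{\rm sem}(\delta_{\rm sem})$ for every $k$, the class-wise risk under the generated distribution is bounded by the supremum over the set:
\[
\mathcal R_k(f_\theta;P^k_{\phi^{(r)}})\le\sup_{P\in\mathcal B^k_{\rm sem}(\delta_{\rm sem})}\mathcal R_k(f_\theta;P).
\]
This holds for any $f_\theta$, not only $f_{\theta^{(r-1)}}$, because membership in the set does not depend on the classifier. Averaging over $k$ with weights $1/C$ gives the claim.

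\emph{Uniform deviation bound, Eq.~\ref{eq:generated_risk_gap_bound}.} I will use the standard Rademacher argument, adapted to sampling that is independent but not identically distributed. Let $\Phi(S)=\sup_{f\in\mathcal H}\bigl(\mathcal R_g^{(r)}(f)-\widehat{\mathcal R}_g^{(r)}(f)\bigr)$, where $S$ is the $CM_g$ samples. Conditioning on the fixed $\{P^k_{\phi^{(r)}}\}$, the samples are independent. Replacing one sample changes $\Phi$ by at most $L_{\max}/(CM_g)$, since each loss lies in $[0,L_{\max}]$ and carries weight $1/(CM_g)$. McDiarmid's inequality then gives, with probability at least $1-\delta/2$,
\[
\Phi\le\mathbb E\Phi+L_{\max}\sqrt{\log(2/\delta)/(2CM_g)}.
\]
To bound $\mathbb E\Phi$, I will use symmetrization with a ghost sample drawn with the same class-balanced structure. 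Because sample $(k,j)$ and its ghost copy share the distribution $P^k_{\phi^{(r)}}$, swapping them is distribution-preserving. That is exactly what introducing Rademacher signs requires, and it yields $\mathbb E\Phi\le2\mathfrak R^{(r)}_{CM_g}(\ell_{\rm CE}\circ\mathcal H)$, with the complexity defined under this sampling process as in the statement.

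The same argument applied to $-\Phi$ gives the other tail, and a union bound over the two tails produces the absolute value. This step is the main technical point: I must check that symmetrization works for the non-i.i.d.\ but independent, matched-pair structure, and I expect it to go through as described. The identity $\widehat{\mathcal R}_g^{(r)}$ equal to the second term of Eq.~\ref{eq:classifier_objective} divided by $\mu$ is immediate by inspection.

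\emph{Final chain.} Set $\pi_T^k=1/C$ in Eq.~\ref{eq:target_bound_generated_distribution} of Proposition~\ref{prop:semantic_worst_to_generated}. This gives
\[
\mathcal R_T(f_{\theta^{(r-1)}})\le\mathcal R_g^{(r)}(f_{\theta^{(r-1)}})+\varepsilon_{\rm adv}^{(r)}+\frac{L_{\max}}{C}\sum_k\rho_k.
\]
On the high-probability event, I then bound $\mathcal R_g^{(r)}(f_{\theta^{(r-1)}})\le\widehat{\mathcal R}_g^{(r)}(f_{\theta^{(r-1)}})+\Delta_g^{(r)}(\delta)$.

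This step needs $f_{\theta^{(r-1)}}\in\mathcal H$. That is legitimate because the bound is uniform over $\mathcal H$. Moreover, $\theta^{(r-1)}$ is fixed before $\mathcal D_g^{(r)}$ is drawn, so even a pointwise Hoeffding bound would suffice for this particular classifier.
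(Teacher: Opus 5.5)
Your proposal is correct and follows essentially the paper's route: round-wise feasibility gives the first inequality, and McDiarmid's inequality plus ghost-sample symmetrization gives the uniform deviation bound. The symmetrization is valid because each pair $(\mathbf z_{k,j}^{(r)},\mathbf z_{k,j}^{\prime(r)})$ is exchangeable under the conditional class-balanced sampling. The final chain is then the substitution $\pi_T^k=1/C$ in Proposition~\ref{prop:semantic_worst_to_generated}, followed by the deviation bound at $f_{\theta^{(r-1)}}\in\mathcal H$.

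The only difference is in how the absolute value is handled. The paper applies McDiarmid once, directly to $\sup_{f_\theta}|\mathcal R_g^{(r)}(f_\theta)-\widehat{\mathcal R}_g^{(r)}(f_\theta)|$, and uses the absolute-value Rademacher complexity defined in the appendix, so it needs no union bound and its $\log(2/\delta)$ is slack. Your two one-sided bounds at level $\delta/2$ produce that constant exactly. Note that the second tail comes from running the argument on $\sup_{f_\theta}(\widehat{\mathcal R}_g^{(r)}-\mathcal R_g^{(r)})$, not literally on $-\Phi$.
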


\begin{table*}[!t]
\begin{center}
\caption{
SDG results (\%) on CIFAR-10-C with the backbone of ResNet-18. 
Each level (name in column) is viewed as a target domain, and a higher level denotes more severe corruption.
The best results are in \textcolor{red}{red} and the second highest results are in \textcolor{blue}{blue}.
}
\vspace{-2.0mm}
\label{tab:single-source-cifar10c}
\renewcommand\arraystretch{1.0}
\scalebox{1.0}{
\begin{tabular}{l@{\hspace{16pt}}|@{\hspace{14pt}}c@{\hspace{14pt}}|@{\hspace{16pt}}c@{\hspace{16pt}}c@{\hspace{16pt}}c@{\hspace{16pt}}c@{\hspace{16pt}}c@{\hspace{16pt}}|>{\hspace{13pt}}>{\columncolor{gray!30}}c<{\hspace{13pt}}}
\hline
Method & Venue 
& level1 & level2 & level3 & level4 & level5 & Avg. \\ 
\hline
\multicolumn{8}{c}{\textit{ResNet-18}} \\
\hline
DANN~\cite{ganin2016domain}       
& IJCAI'16    
& 75.40\tsb{\(\pm\)0.40} 
& 72.60\tsb{\(\pm\)0.30}
& 69.70\tsb{\(\pm\)0.20}  
& 65.60\tsb{\(\pm\)0.00}
& 59.60\tsb{\(\pm\)0.20}
& 68.60 \\

CORAL~\cite{sun2016deep}       
& ICCV'16     
& 76.00\tsb{\(\pm\)0.40}
& 72.90\tsb{\(\pm\)0.20}
& 69.90\tsb{\(\pm\)0.00}  
& 65.80\tsb{\(\pm\)0.10}
& 59.60\tsb{\(\pm\)0.10}
& 68.80 \\

MMD~\cite{li2018domain}
& ICCV'18     
& 76.40\tsb{\(\pm\)0.40}
& 72.90\tsb{\(\pm\)0.20}
& 69.90\tsb{\(\pm\)0.00}  
& 65.80\tsb{\(\pm\)0.10}
& 59.60\tsb{\(\pm\)0.40}
& 68.80 \\


Mixup~\cite{yan2020improve}
& -           
& 76.30\tsb{\(\pm\)0.30}
& 73.20\tsb{\(\pm\)0.20}
& 70.20\tsb{\(\pm\)0.20}  
& 66.10\tsb{\(\pm\)0.10}
& 60.10\tsb{\(\pm\)0.10}
& 69.20 \\



GroupDRO~\cite{sagawa2019distributionally}    
& ICLR'20     
& 76.00\tsb{\(\pm\)0.10}
& 72.90\tsb{\(\pm\)0.10}
& 69.80\tsb{\(\pm\)0.20}  
& 65.50\tsb{\(\pm\)0.30}
& 59.50\tsb{\(\pm\)0.50}
& 68.70 \\

RSC~\cite{huang2020self}         
& ECCV'20     
& 76.10\tsb{\(\pm\)0.40}
& 73.20\tsb{\(\pm\)0.50}
& 70.10\tsb{\(\pm\)0.50}
& 66.20\tsb{\(\pm\)0.50}
& 60.10\tsb{\(\pm\)0.50}
& 69.10 \\


ARM~\cite{zhang2021adaptive}
& NeurIPS'21  
& 75.70\tsb{\(\pm\)0.10}
& 72.90\tsb{\(\pm\)0.10}
& 69.90\tsb{\(\pm\)0.20}  
& 65.90\tsb{\(\pm\)0.20}
& 59.80\tsb{\(\pm\)0.30}
& 68.80 \\

VREx~\cite{krueger2021out}        
& ICML'21     
& 76.00\tsb{\(\pm\)0.20}
& 73.00\tsb{\(\pm\)0.20}
& 70.00\tsb{\(\pm\)0.20}
& 66.00\tsb{\(\pm\)0.10}
& 60.00\tsb{\(\pm\)0.20}
& 69.00 \\

ERM~\cite{vapnik2013nature}
& ICLR'21     
& 75.90\tsb{\(\pm\)0.50}
& 72.90\tsb{\(\pm\)0.40}
& 70.00\tsb{\(\pm\)0.40}  
& 65.90\tsb{\(\pm\)0.40}
& 59.90\tsb{\(\pm\)0.50}
& 68.90 \\

SAM~\cite{foret2021sharpness}
& ICLR'21     
& 79.00\tsb{\(\pm\)0.30}
& 76.00\tsb{\(\pm\)0.30}
& 72.90\tsb{\(\pm\)0.30}  
& 68.70\tsb{\(\pm\)0.20}
& 62.50\tsb{\(\pm\)0.30}
& 71.80 \\

SagNet~\cite{nam2021reducing}
& CVPR'21     
& 76.60\tsb{\(\pm\)0.20}
& 73.60\tsb{\(\pm\)0.30}
& 70.50\tsb{\(\pm\)0.40}  
& 66.40\tsb{\(\pm\)0.40}
& 60.10\tsb{\(\pm\)0.40}
& 69.50 \\

Fishr~\cite{rame2022fishr}
& ICML'22     
& 76.30\tsb{\(\pm\)0.30}
& 73.40\tsb{\(\pm\)0.30}
& 70.40\tsb{\(\pm\)0.50}  
& 66.30\tsb{\(\pm\)0.80}
& 60.10\tsb{\(\pm\)1.10}
& 69.30 \\

SAGM~\cite{wang2023sharpness}
& CVPR'23     
& 79.00\tsb{\(\pm\)0.10}
& 76.20\tsb{\(\pm\)0.00}
& 73.20\tsb{\(\pm\)0.20}  
& 69.00\tsb{\(\pm\)0.30}
& 62.70\tsb{\(\pm\)0.40}
& 72.00 \\


UDIM w/ SAM~\cite{shinunknown}
& ICLR'24
& 80.30\tsb{\(\pm\)0.00}
& 77.70\tsb{\(\pm\)0.10}
& 75.10\tsb{\(\pm\)0.00}
& 71.50\tsb{\(\pm\)0.10}
& 66.20\tsb{\(\pm\)0.10}
& 74.20 \\

UDIM w/ SAGM~\cite{shinunknown}
& ICLR'24
& 80.10\tsb{\(\pm\)0.10}
& 77.50\tsb{\(\pm\)0.10}
& 74.80\tsb{\(\pm\)0.10}
& 71.20\tsb{\(\pm\)0.20}
& 65.90\tsb{\(\pm\)0.20}
& 73.90 \\

UDIM w/ GAM~\cite{shinunknown}
& ICLR'24
& 81.40\tsb{\(\pm\)0.10}
& \textcolor{blue}{78.90\tsb{\(\pm\)0.00}}
& \textcolor{blue}{76.30\tsb{\(\pm\)0.00}}
& \textcolor{blue}{72.80\tsb{\(\pm\)0.10}}
& \textcolor{blue}{67.40\tsb{\(\pm\)0.10}}
& \textcolor{blue}{75.30} \\
\hline              

PAPT~\cite{PAPT}
& CVPR'25
& \textcolor{blue}{83.02\tsb{\(\pm\)0.09}}
& 78.82\tsb{\(\pm\)0.31}
& 75.51\tsb{\(\pm\)0.13}
& 70.13\tsb{\(\pm\)0.20}
& 63.05\tsb{\(\pm\)0.35}
& 74.11 \\

PAPT++ (Ours)  
& -        
& \textcolor{red}{90.44\tsb{\(\pm\)2.53}}
& \textcolor{red}{87.72\tsb{\(\pm\)1.86}}
& \textcolor{red}{84.65\tsb{\(\pm\)2.01}}
& \textcolor{red}{79.46\tsb{\(\pm\)3.14}}
& \textcolor{red}{69.45\tsb{\(\pm\)2.88}}
& \textcolor{red}{82.34} \\
\hline
\end{tabular}}
\end{center}
\vspace{-2.0mm}
\end{table*}

\section{Experiments}
\label{sec:exp}

\subsection{Datasets and Evaluation Protocols}
\label{sec:exp_data}

\noindent \textbf{Benchmark Datasets.}
We conduct experiments on three widely used SDG
benchmarks included in DomainBed~\cite{gulrajani2020search}, namely
PACS~\cite{li2017deeper}, VLCS~\cite{li2017deeper}, and
OfficeHome~\cite{venkateswara2017deep}. We further evaluate corruption
generalization by training on clean CIFAR-10~\cite{krizhevsky2009learning}
and testing on CIFAR-10-C~\cite{hendrycks2019benchmarking}.

PACS contains 9,991 images from seven categories distributed across four
domains: Art Painting, Cartoon, Photo, and Sketch.
VLCS contains 10,729 images from five categories collected from four
datasets: Caltech101, LabelMe, SUN09, and VOC2007.
OfficeHome contains 15,588 images from 65 categories distributed across
four domains: Art, Clipart, Product, and Real World.
CIFAR-10 contains 50,000 training images and 10,000 test images from ten
categories. CIFAR-10-C applies multiple common corruption types at five
severity levels to the CIFAR-10 test images, providing a controlled
benchmark for evaluating robustness to unseen corruptions.

\vspace{2.0mm}

\noindent \textbf{Single-Domain Evaluation Protocol.}
For PACS, VLCS, and OfficeHome, each domain is used once as the sole
labeled source domain, while all remaining domains are treated as unseen
target domains. The result for each source-domain setting is averaged
over its corresponding unseen target domains, and the overall performance
is averaged over all source-domain settings. Target-domain data are not
used during training or model selection. Each experiment is repeated
three times with different random seeds, and the mean accuracy is reported.

\vspace{2.0mm}

\noindent \textbf{Corruption Evaluation Protocol.}
For CIFAR-10-C, the model is trained using only the clean CIFAR-10
training set and is directly evaluated on CIFAR-10-C without adaptation.
We report the average classification accuracy across corruption types at
each severity level. CIFAR-10-C images are not used during training or
model selection.

\begin{table}[!t]
\caption{
SDG results (\%) on PACS with the backbone of ResNet-18.
One domain is used as the source domain and the others are used as the target domain.
The best results are in \textcolor{red}{red} and the second highest results are in \textcolor{blue}{blue}.
}
\vspace{-2.0mm}
\renewcommand\arraystretch{1.0}
\scalebox{0.88}{
\centering
\begin{tabular}{l|c|cccc|>{\columncolor{gray!30}}c}
\hline
Method & Venue & A & C & P & S & Avg. \\
\hline
\multicolumn{7}{c}{\textit{ResNet-18}} \\
\hline
Augmix~\cite{hendrycks2019augmix}
& ICLR'20
& 66.54
& 70.16
& 38.30
& 52.48
& 56.87 \\

RSC~\cite{huang2020self}
& ECCV'20
& 73.40
& 75.90
& 41.60
& 56.20
& 61.80 \\

L2D~\cite{wang2021learning}
& ICCV'21
& 76.91
& 77.88
& 52.29
& 53.66
& 65.18 \\

RSC+ASR~\cite{huang2020self}
& CVPR'21
& 76.70
& 79.30
& 54.60
& 61.60
& 68.10 \\

pAdaIn~\cite{nuriel2021permuted}
& CVPR'21
& 64.96
& 65.24
& 33.66
& 32.04
& 49.98 \\

ERM~\cite{vapnik2013nature}
& ICLR'21
& 65.38
& 64.20
& 33.65
& 34.15
& 49.34 \\

Mixstyle~\cite{zhou2021domain}
& ICLR'21
& 67.60
& 70.38
& 37.44
& 34.57
& 52.50 \\

EFDMix~\cite{zhang2022exact}
& CVPR'22
& 63.20
& 73.90
& 42.50
& 38.10
& 54.40 \\

DSU~\cite{li2022uncertainty}
& ICLR'22
& 71.54
& 74.51
& 42.10
& 47.75
& 58.97 \\

ACVC~\cite{cugu2022attention}
& CVPRW'22
& 73.68
& 77.39
& 48.05
& 55.30
& 63.61 \\

MAD~\cite{qu2023modality}
& CVPR'23
& 75.51
& 77.25
& 52.95
& 57.75
& 65.87 \\

P-RC~\cite{choi2023progressive}
& CVPR'23
& 76.98
& 78.54
& 57.11
& 62.89
& 68.88 \\

Meta-Casual~\cite{chen2023meta}
& CVPR'23
& 77.13
& 80.14
& 59.60
& 62.55
& 69.86 \\

ITTA~\cite{chen2023improved}
& CVPR'23
& 78.40
& 79.80
& 56.50
& 60.70
& 68.80 \\

Prompt-Driven~\cite{li2024prompt}
& CVPR'24
& 78.77
& \textcolor{red}{82.69}
& 60.09
& 62.94
& 71.12 \\

UDIM~\cite{shinunknown}
& ICLR'24
& \textcolor{blue}{80.49}
& 80.28
& 60.94
& 65.64
& 71.83 \\

StyDeSty~\cite{liu2024stydesty}
& ICML'24
& 78.90
& 79.20
& \textcolor{red}{62.50}
& 59.40
& 70.00 \\

PSDG~\cite{yang2024practical}
& KDD'24
& 78.20
& 78.30
& 62.10
& 63.00
& 70.40 \\

ProMEA~\cite{wang2025promea}
& IJCAI'25
& \textcolor{red}{82.58}
& \textcolor{blue}{81.43}
& 61.09
& 62.52
& 71.91 \\

MISA~\cite{yang2026mutual}
& TIST'26
& 79.20
& 78.50
& \textcolor{blue}{62.30}
& 64.20
& 71.10 \\
\hline

PAPT~\cite{PAPT}
& CVPR'25
& 69.84
& 78.57
& 62.12
& \textcolor{blue}{80.34}
& \textcolor{blue}{72.72} \\

PAPT++ (Ours)
& -
& 70.01
& 80.45
& 61.08
& \textcolor{red}{82.18}
& \textcolor{red}{73.43} \\
\hline
\end{tabular}}
\label{tab:single_pacs}
\vspace{-2.0mm}
\end{table}

\subsection{Implementation Details}

\noindent \textbf{Diffusion Optimization.}
We adopt Stable Diffusion v1.5~\cite{rombach2022high} as the pretrained text-to-image diffusion model. The pretrained U-Net backbone and text encoder are kept frozen, and only the LoRA parameters inserted into the self-attention layers of the U-Net are optimized. The LoRA rank is set to 4. We optimize the LoRA parameters using AdamW with a learning rate of $6\times10^{-4}$ and a batch size of 64.

For CSRL module, the LoRA parameters are optimized for 80 gradient steps, after which we generate $M_a=32$ semantic reference images for each class. For CADS module, we perform $R=8$ generation-training rounds and optimize the LoRA parameters for 10 gradient steps in each round, resulting in 80 gradient steps in total. The progressive denoising optimization interval $N_{\rm opt}$ is set to 16. The loss weights $\lambda_{\rm div}$ and $\lambda_{\rm risk}$ are set to 0.02 and 0.002, respectively.
After optimizing the LoRA parameters in the $r$-th round, we fix $\phi^{(r)}$ and independently sample a fresh class-balanced set containing $M_g=16$ high-risk images per class to construct $\mathcal D_g^{(r)}$. During both semantic reference generation and high-risk sample generation, we employ the DDIM~\cite{song2021ddim} sampler with 50 denoising steps and CFG++ with a guidance scale of 0.6.

\vspace{2.0mm}

\noindent \textbf{Backbone Training.}
We use ResNet-18 and ResNet-50~\cite{he2016deep} as the downstream classifiers. Following common SDG settings, ResNet-18 is used for PACS, VLCS, and CIFAR-10 whereas ResNet-50 is used for OfficeHome. The classifier is optimized using SGD for 20 epochs in each classifier-update stage, with a batch size of 64 and an initial learning rate of $1\times10^{-3}$. The weight $\mu$ of the generated-data loss in Eq.~\ref{eq:classifier_objective} is set to 1.0.

\begin{table}[!t]
\begin{center}
\caption{
SDG results (\%) on PACS with the backbone of ResNet-50.
One domain (name in column) is used as the source domain and the others are used as the target domains.
The best results are in \textcolor{red}{red} and the second highest results are in \textcolor{blue}{blue}.
}
\vspace{-2.0mm}
\label{tab:single-source-pacs-res50}
\renewcommand\arraystretch{1.0}
\scalebox{0.88}{
\begin{tabular}{l|c|cccc|>{\columncolor{gray!30}}c}
\hline
Method & Venue & A & C & P & S & Avg. \\
\hline
\multicolumn{7}{c}{\textit{ResNet-50}} \\
\hline
DANN~\cite{ganin2016domain}
& IJCAI'16
& 79.00
& 76.50
& 48.70
& 57.90
& 65.50 \\

CORAL~\cite{sun2016deep}
& ICCV'16
& 76.30
& 79.20
& 45.90
& 57.00
& 64.60 \\


MMD~\cite{li2018domain}
& ICCV'18
& 75.40
& 80.10
& 45.20
& 58.20
& 64.70 \\


Mixup~\cite{yan2020improve}
& -
& 77.40
& 80.00
& 47.30
& 58.20
& 65.70 \\



GroupDRO~\cite{sagawa2019distributionally}
& ICLR'20
& 79.00
& 79.00
& 42.00
& 60.80
& 65.20 \\


ARM~\cite{zhang2021adaptive}
& NeurIPS'21
& 76.20
& 75.50
& 45.20
& 61.90
& 64.70 \\

VREx~\cite{krueger2021out}
& ICML'21
& 75.30
& 80.20
& 44.90
& 56.80
& 64.30 \\

Mixstyle~\cite{zhou2021domain}
& ICLR'21
& 78.10
& 78.80
& 56.10
& 54.70
& 66.90 \\

ERM~\cite{vapnik2013nature}
& ICLR'21
& 79.90
& 79.90
& 48.10
& 59.60
& 66.90 \\

SAM~\cite{foret2021sharpness}
& ICLR'21
& 77.70
& 80.50
& 46.70
& 54.20
& 64.80 \\

SagNet~\cite{nam2021reducing}
& CVPR'21
& 77.40
& 78.90
& 47.60
& 56.40
& 65.10 \\

Fishr~\cite{rame2022fishr}
& ICML'22
& 75.90
& 81.10
& 46.90
& 57.20
& 65.30 \\

RIDG~\cite{chen2023domain}
& ICCV'23
& 76.20
& 80.00
& 48.50
& 54.80
& 64.90 \\

SAGM~\cite{wang2023sharpness}
& CVPR'23
& 78.90
& 79.80
& 44.70
& 55.60
& 64.80 \\

ITTA~\cite{chen2023improved}
& CVPR'23
& 78.40
& 79.80
& 56.50
& 60.70
& 68.80 \\

UDIM~\cite{shinunknown}
& ICLR'24
& \textcolor{red}{82.40}
& \textcolor{red}{84.20}
& \textcolor{blue}{68.80}
& 64.00
& 74.90 \\

FSAM~\cite{li2024friendly}
& CVPR'24
& 79.23
& 82.55
& 48.83
& 60.40
& 67.75 \\

Crafting-Shifts~\cite{efthymiadis2025crafting}
& WACV'24
& 81.14
& 78.34
& 60.59
& 68.13
& 72.05 \\

StyDeSty~\cite{liu2024stydesty}
& ICML'24
& 80.60
& \textcolor{blue}{83.90}
& 63.90
& 66.10
& 73.60 \\

PSDG~\cite{yang2024practical}
& KDD'24
& 81.10
& 83.40
& 63.40
& 63.60
& 72.90 \\

GCSAM~\cite{liu2024generalizable}
& TMM'25
& 78.00
& 82.84
& 50.35
& 57.50
& 67.17 \\

SSESAM~\cite{lyu2025sse}
& AAAI'25
& 77.14
& 82.08
& 46.72
& 65.78
& 67.93 \\

PhysAug~\cite{xu2025physaug}
& AAAI'25
& 79.04
& 81.28
& 57.45
& 62.16
& 69.98 \\

SAML~\cite{zhou2025sharpness}
& ICLR'25
& 68.18
& 81.45
& 49.53
& 63.89
& 65.77 \\

MISA~\cite{yang2026mutual}
& TIST'26
& \textcolor{blue}{82.30}
& 82.80
& 64.00
& 67.50
& 74.20 \\
\hline

PAPT~\cite{PAPT}
& CVPR'25
& 73.56
& 80.10
& 67.83
& \textcolor{red}{85.85}
& \textcolor{blue}{76.84} \\

PAPT++ (Ours)
& -
& 79.21
& 82.07
& \textcolor{red}{74.49}
& \textcolor{blue}{81.72}
& \textcolor{red}{79.37} \\
\hline
\end{tabular}}
\end{center}
\vspace{-2.0mm}
\end{table}

\begin{table}[!t]
\caption{
SDG results (\%) on VLCS with the backbone of ResNet-18.
One domain is used as the source domain and the others are used as the target domains.
The best results are in \textcolor{red}{red} and the second highest results are in \textcolor{blue}{blue}.
}
\vspace{-2.0mm}
\centering
\renewcommand\arraystretch{1.0}
\scalebox{0.91}{
\begin{tabular}{l|c|cccc|>{\columncolor{gray!30}}c}
\hline
Method & Venue & V & L & C & S & Avg. \\
\hline
\multicolumn{7}{c}{\textit{ResNet-18}} \\
\hline
Augmix~\cite{hendrycks2019augmix}
& ICLR'20
& 75.25
& 59.52
& 45.90
& 57.43
& 59.53 \\

ERM~\cite{vapnik2013nature}
& ICLR'21
& 76.72
& 58.86
& 44.95
& 57.71
& 59.56 \\

pAdaIn~\cite{nuriel2021permuted}
& CVPR'21
& 76.03
& 65.21
& 43.17
& 57.94
& 60.59 \\

Mixstyle~\cite{zhou2021domain}
& ICLR'21
& 75.73
& 61.29
& 44.66
& 56.57
& 59.56 \\

EFDMix~\cite{zhang2022exact}
& CVPR'22
& 72.35
& 61.41
& 52.34
& 63.28
& 62.33 \\

DSU~\cite{li2022uncertainty}
& ICLR'22
& \textcolor{blue}{76.93}
& 69.20
& 46.54
& 58.36
& 62.76 \\

ACVC~\cite{cugu2022attention}
& CVPRW'22
& 76.15
& 61.23
& 47.43
& 60.18
& 61.25 \\

MAD~\cite{qu2023modality}
& CVPR'23
& 76.15
& 69.36
& 48.04
& 61.74
& 63.82 \\

StyDeSty~\cite{liu2024stydesty}
& ICML'24
& 76.87
& 62.87
& 53.73
& 65.41
& 64.72 \\

ProMEA~\cite{wang2025promea}
& IJCAI'25
& \textcolor{red}{79.07}
& 71.97
& 59.02
& 59.94
& 67.50 \\
\hline

PAPT~\cite{PAPT}
& CVPR'25
& 73.16
& \textcolor{blue}{74.69}
& \textcolor{blue}{69.66}
& \textcolor{red}{75.96}
& \textcolor{blue}{73.37} \\

PAPT++ (Ours)
& -
& 73.07
& \textcolor{red}{77.22}
& \textcolor{red}{71.42}
& \textcolor{blue}{75.14}
& \textcolor{red}{74.21} \\
\hline
\end{tabular}}
\label{tab:single_vlcs}
\vspace{-2.0mm}
\end{table}

\begin{table}[!t]
\begin{center}
\caption{
SDG results (\%) on OfficeHome with the backbone of ResNet-50.
One domain is used as the source domain and the others are used as the target domains.
The best results are in \textcolor{red}{red} and the second highest results are in \textcolor{blue}{blue}.
}
\vspace{-2.0mm}
\label{tab:single_officehome}
\renewcommand\arraystretch{1.0}
\scalebox{0.86}{
\begin{tabular}{l|c|cccc|>{\columncolor{gray!30}}c}
\hline
Method & Venue & A & C & P & R & Avg. \\
\hline
\multicolumn{7}{c}{\textit{ResNet-50}} \\
\hline
DANN~\cite{ganin2016domain}
& IJCAI'16
& 55.20
& 49.30
& 48.40
& 58.40
& 52.80 \\

CORAL~\cite{sun2016deep}
& ICCV'16
& 55.60
& 52.80
& 50.30
& 59.40
& 54.50 \\


MMD~\cite{li2018domain}
& ICCV'18
& 55.10
& 52.00
& 50.30
& 59.30
& 54.20 \\


Mixup~\cite{yan2020improve}
& -
& 55.50
& 54.10
& 49.40
& 59.40
& 54.60 \\



GroupDRO~\cite{sagawa2019distributionally}
& ICLR'20
& 55.10
& 52.00
& 50.30
& 59.30
& 54.20 \\


ARM~\cite{zhang2021adaptive}
& NeurIPS'21
& 55.00
& 51.60
& 47.30
& 59.30
& 53.30 \\

VREx~\cite{krueger2021out}
& ICML'21
& 55.50
& 52.60
& 49.10
& 59.30
& 54.10 \\

Mixstyle~\cite{zhou2021domain}
& ICLR'21
& 44.30
& 29.80
& 33.60
& 48.50
& 39.00 \\

ERM~\cite{vapnik2013nature}
& ICLR'21
& 55.60
& 52.80
& 50.30
& 59.40
& 54.50 \\

SAM~\cite{foret2021sharpness}
& ICLR'21
& 56.90
& 53.80
& 50.90
& 61.50
& 55.80 \\

SagNet~\cite{nam2021reducing}
& CVPR'21
& 56.90
& 53.40
& 50.80
& 61.20
& 55.60 \\

Fishr~\cite{rame2022fishr}
& ICML'22
& 55.10
& 51.20
& 49.20
& 59.90
& 53.90 \\

RIDG~\cite{chen2023domain}
& ICCV'23
& 56.80
& 55.40
& 50.50
& 60.90
& 55.90 \\

SAGM~\cite{wang2023sharpness}
& CVPR'23
& 57.70
& 54.80
& 51.50
& 61.40
& 56.30 \\

ITTA~\cite{chen2023improved}
& CVPR'23
& 56.00
& 51.50
& 50.50
& 61.60
& 54.90 \\

UDIM~\cite{shinunknown}
& ICLR'24
& 58.50
& 55.70
& 54.50
& 64.50
& 58.30 \\

FSAM~\cite{li2024friendly}
& CVPR'24
& 58.15
& 56.41
& 52.59
& 62.20
& 57.34 \\

Crafting-Shifts~\cite{efthymiadis2025crafting}
& WACV'24
& 59.77
& 55.31
& 51.46
& 63.10
& 57.31 \\

StyDeSty~\cite{liu2024stydesty}
& ICML'24
& 59.09
& 57.78
& 54.73
& \textcolor{blue}{65.02}
& 59.16 \\

ProMEA~\cite{wang2025promea}
& IJCAI'25
& \textcolor{blue}{62.52}
& 56.98
& 56.02
& \textcolor{red}{65.64}
& 60.29 \\

GCSAM~\cite{liu2024generalizable}
& TMM'25
& 56.86
& 56.45
& 53.06
& 62.24
& 57.15 \\

SSESAM~\cite{lyu2025sse}
& AAAI'25
& 55.16
& 54.82
& 50.65
& 60.08
& 55.18 \\

PhysAug~\cite{xu2025physaug}
& AAAI'25
& 54.30
& 52.43
& 49.57
& 60.10
& 54.10 \\

SAML~\cite{zhou2025sharpness}
& ICLR'25
& 44.56
& 56.48
& 53.04
& 62.18
& 54.07 \\
\hline

PAPT~\cite{PAPT}
& CVPR'25
& 59.81
& \textcolor{blue}{68.30}
& \textcolor{blue}{59.12}
& 60.87
& \textcolor{blue}{62.03} \\

PAPT++ (Ours)
& -
& \textcolor{red}{64.72}
& \textcolor{red}{70.91}
& \textcolor{red}{61.94}
& 63.87
& \textcolor{red}{65.36} \\
\hline
\end{tabular}}
\end{center}
\vspace{-2.0mm}
\end{table}

\subsection{Comparison with State-of-the-Art Methods}

\noindent We compare PAPT++ with ERM~\cite{vapnik2013nature} and a broad
range of representative single-source DG methods, including data
augmentation-based approaches~\cite{hendrycks2019augmix,
nuriel2021permuted, zhou2021domain, zhang2022exact,
li2022uncertainty, cugu2022attention, qu2023modality},
domain-invariant representation learning approaches~\cite{ganin2016domain,
sun2016deep,
krueger2021out, li2018domain}, feature disentanglement and
distributionally robust optimization methods~\cite{nam2021reducing,
sagawa2019distributionally}, gradient- and flatness-aware optimization
methods~\cite{huang2020self, rame2022fishr, wang2023sharpness,
shinunknown, lyu2025sse, liu2024generalizable, li2024friendly},
and recent SDG methods~\cite{PAPT, zhou2025sharpness, xu2025physaug}.

Following recent DG studies~\cite{qu2023modality,guo2023domaindrop},
we adopt ResNet-18 on PACS and VLCS and ResNet-50 on OfficeHome.
As reported in Tabs.~\ref{tab:single_pacs}--\ref{tab:single_officehome},
PAPT++ achieves the highest average accuracy on all three benchmarks,
reaching 73.43\%, 74.21\%, and 65.36\%, respectively. These results
improve upon PAPT by 0.71, 0.84, and 3.33 pp, respectively. The gains
are particularly pronounced when Cartoon and Sketch serve as the
source domains on PACS and when LabelMe and Caltech serve as the
source domains on VLCS. Consistent improvements are also observed
across all four source-domain settings on OfficeHome. Overall, these
results show that risk-aware synthesis improves generalization across
diverse semantic domain shifts.

Following the experimental setting of UDIM~\cite{shinunknown}, we
additionally evaluate PAPT++ with ResNet-50 on PACS. As shown in
Tab.~\ref{tab:single-source-pacs-res50}, PAPT++ achieves the highest
average accuracy of 79.37\%, surpassing PAPT and UDIM by 2.53 and
4.47 pp, respectively. Together with the ResNet-18 results, this
comparison indicates that the gains of PAPT++ persist when scaling
the backbone from ResNet-18 to ResNet-50.

We further assess corruption generalization by training ResNet-18 on clean CIFAR-10 and directly evaluating it on CIFAR-10-C, following UDIM~\cite{shinunknown}. As reported in Tab.~\ref{tab:single-source-cifar10c}, PAPT++ achieves the highest accuracy at all five corruption severity levels, with an average accuracy of 82.34\%. It outperforms PAPT and UDIM with GAM by 8.23 and 7.04 pp, respectively. The consistent gains across severity levels demonstrate the robustness of PAPT++ to unseen corruptions.

\begin{table}[!t]
\begin{center}
\caption{
Multi-source DG results (\%) on OfficeHome with the backbone of ResNet-50.
One domain is used as the target domain and the others are used as the source domains.
The best results are in \textcolor{red}{red} and the second highest results are in \textcolor{blue}{blue}.
}
\vspace{-2.0mm}
\label{tab:multi-source-officehome}
\renewcommand\arraystretch{1.05}
\scalebox{0.92}{
\begin{tabular}{l|c|cccc|>{\columncolor{gray!30}}c}
\hline
Method & Venue & A & C & P & R & Avg. \\
\hline
\multicolumn{7}{c}{\textit{ResNet-50}} \\
\hline
DANN~\cite{ganin2016domain}
& IJCAI'16
& 59.90
& 53.00
& 73.60
& 76.90
& 65.90 \\

CORAL~\cite{sun2016deep}
& ICCV'16
& 64.10
& 54.50
& 76.20
& 77.80
& 68.20 \\


GroupDRO~\cite{sagawa2019distributionally}
& ICLR'20
& 61.30
& 53.30
& 75.40
& 76.00
& 66.50 \\

RSC~\cite{huang2020self}
& ECCV'20
& 60.70
& 51.40
& 74.80
& 75.10
& 65.50 \\

VREx~\cite{krueger2021out}
& ICML'21
& 60.70
& 53.00
& 75.30
& 76.60
& 66.40 \\

Mixstyle~\cite{zhou2021domain}
& ICLR'21
& 51.10
& 53.20
& 68.20
& 69.20
& 60.40 \\

ERM~\cite{vapnik2013nature}
& ICLR'21
& 61.40
& 53.50
& 75.90
& 77.10
& 67.00 \\

SAM~\cite{foret2021sharpness}
& ICLR'21
& 62.20
& 55.90
& 77.00
& 78.80
& 68.50 \\

SagNet~\cite{nam2021reducing}
& CVPR'21
& 62.30
& 51.70
& 75.40
& 78.10
& 66.90 \\

Miro~\cite{cha2022domain}
& ECCV'22
& 67.50
& 54.60
& 78.00
& 81.60
& 70.50 \\

GSAM~\cite{zhuang2022surrogate}
& ICLR'22
& 64.90
& 55.20
& 77.80
& 79.20
& 69.30 \\

SAGM~\cite{wang2023sharpness}
& CVPR'23
& 65.40
& 57.00
& 78.00
& 80.00
& 70.10 \\

DomainDrop~\cite{guo2023domaindrop}
& ICCV'23
& -
& -
& -
& -
& 68.70 \\


XDomainMix~\cite{liu2024cross}
& IJCAI'24
& -
& -
& -
& -
& 68.10 \\

GMDG~\cite{tan2024rethinking}
& CVPR'24
& 68.90
& 56.20
& 79.90
& \textcolor{blue}{82.00}
& 70.70 \\


RES~\cite{huangrepresentation}
& ECCV'24
& -
& -
& -
& -
& 71.80 \\

SFT~\cite{li2025seeking}
& CVPR'25
& 65.80
& \textcolor{blue}{58.80}
& 78.30
& 80.60
& 70.90 \\

GGA~\cite{ballas2025gradient}
& CVPR'25
& -
& -
& -
& -
& 67.00 \\

SDK+SCA~\cite{wei2025indirect}
& IJCAI'25
& 70.20
& \textcolor{red}{59.50}
& 78.50
& 81.20
& \textcolor{blue}{72.30} \\

CBD-Gen~\cite{wang2025rethinking}
& NeurIPS'25
& 66.50
& 58.70
& 78.80
& 81.60
& 71.40 \\

GUIDE~\cite{thomas2025latent}
& ICCV'25
& -
& -
& -
& -
& 68.60 \\
\hline

PAPT~\cite{PAPT}
& CVPR'25
& \textcolor{blue}{71.12}
& 52.51
& \textcolor{blue}{80.72}
& 80.47
& 71.21 \\

PAPT++ (Ours)
& -
& \textcolor{red}{72.77}
& 53.26
& \textcolor{red}{80.84}
& \textcolor{red}{82.55}
& \textcolor{red}{72.36} \\
\hline
\end{tabular}}
\end{center}
\vspace{-2.0mm}
\end{table}

\begin{table}[!t]
\begin{center}
\caption{
Multi-source DG results (\%) on VLCS with the backbone of ResNet-50.
One domain is used as the target domain and the others are used as the source domains.
The best results are in \textcolor{red}{red} and the second highest results are in \textcolor{blue}{blue}.
}
\vspace{-2.0mm}
\label{tab:multi-source-vlcs}
\renewcommand\arraystretch{1.0}
\scalebox{0.90}{
\begin{tabular}{l|c|cccc|>{\columncolor{gray!30}}c}
\hline
Method & Venue & V & L & C & S & Avg. \\
\hline
\multicolumn{7}{c}{\textit{ResNet-50}} \\
\hline
DANN~\cite{ganin2016domain}
& IJCAI'16
& 51.10
& 65.10
& 99.00
& 73.10
& 78.60 \\

CORAL~\cite{sun2016deep}
& ICCV'16
& 77.50
& 66.10
& 98.30
& 73.40
& 78.80 \\

MLDG~\cite{li2018learning}
& AAAI'18
& 75.30
& 65.20
& 97.40
& 71.00
& 77.20 \\

GroupDRO~\cite{sagawa2019distributionally}
& ICLR'20
& 76.70
& 63.40
& 97.30
& 69.50
& 76.70 \\

RSC~\cite{huang2020self}
& ECCV'20
& 75.60
& 62.50
& 97.90
& 72.30
& 77.10 \\

VREx~\cite{krueger2021out}
& ICML'21
& 76.20
& 64.40
& 98.40
& 74.10
& 78.30 \\

Mixstyle~\cite{zhou2021domain}
& ICLR'21
& 75.70
& 64.50
& 98.60
& 72.60
& 77.90 \\

ERM~\cite{vapnik2013nature}
& ICLR'21
& 75.20
& 64.70
& 98.00
& 71.40
& 77.30 \\

SAM~\cite{foret2021sharpness}
& ICLR'21
& 79.80
& 65.00
& 99.10
& 73.70
& 79.40 \\

SagNet~\cite{nam2021reducing}
& CVPR'21
& 77.50
& 64.50
& 97.90
& 71.40
& 77.80 \\

Miro~\cite{cha2022domain}
& ECCV'22
& 77.80
& 64.70
& 98.30
& 75.30
& 79.00 \\

GSAM~\cite{zhuang2022surrogate}
& ICLR'22
& 78.50
& 64.90
& 98.70
& 74.30
& 79.10 \\

CoOp~\cite{zhou2022learning}
& IJCV'22
& \textcolor{blue}{82.53}
& 60.85
& \textcolor{red}{100.0}
& 76.14
& 79.88 \\

SAGM~\cite{wang2023sharpness}
& CVPR'23
& 80.70
& 65.20
& 99.00
& 75.10
& 80.00 \\

DomainDrop~\cite{guo2023domaindrop}
& ICCV'23
& -
& -
& -
& -
& 79.80 \\

XDomainMix~\cite{liu2024cross}
& IJCAI'24
& -
& -
& -
& -
& 76.30 \\

GMDG~\cite{tan2024rethinking}
& CVPR'24
& 79.30
& 65.90
& 98.30
& 73.40
& 79.20 \\


RES~\cite{huangrepresentation}
& ECCV'24
& -
& -
& -
& -
& 79.80 \\

SFT~\cite{li2025seeking}
& CVPR'25
& 78.70
& 66.20
& \textcolor{blue}{99.50}
& 74.80
& 79.80 \\

GGA~\cite{ballas2025gradient}
& CVPR'25
& 77.40
& 65.40
& 98.40
& 73.80
& 78.70 \\

SDK+SCA~\cite{wei2025indirect}
& IJCAI'25
& 79.40
& \textcolor{red}{69.60}
& 98.00
& 74.50
& 80.40 \\

CBD-Gen~\cite{wang2025rethinking}
& NeurIPS'25
& 80.50
& \textcolor{blue}{67.30}
& 99.10
& 75.10
& 80.50 \\

GUIDE~\cite{thomas2025latent}
& ICCV'25
& -
& -
& -
& -
& 78.50 \\
\hline

PAPT~\cite{PAPT}
& CVPR'25
& \textcolor{red}{84.50}
& 61.73
& \textcolor{red}{100.0}
& \textcolor{blue}{77.16}
& \textcolor{blue}{80.84} \\

PAPT++ (Ours)
& -
& 81.64
& 67.01
& \textcolor{red}{100.0}
& \textcolor{red}{77.42}
& \textcolor{red}{81.52} \\
\hline
\end{tabular}}
\end{center}
\vspace{-2.0mm}
\end{table}

\subsection{Results on Multi-Source Domain Generalization}

\noindent We further evaluate PAPT++ in the multi-source domain generalization setting, where multiple labeled source domains are available during training. We follow the standard leave-one-domain-out protocol: for each benchmark, one domain is held out as the unseen target domain, while the remaining domains are jointly used for training. 

We compare PAPT++ with representative multi-source DG methods spanning domain alignment and invariant representation learning~\cite{ganin2016domain,sun2016deep,krueger2021out,nam2021reducing}, data and feature augmentation~\cite{zhou2021domain}, robust learning and representation regularization~\cite{sagawa2019distributionally,huang2020self}, flatness-aware optimization~\cite{wang2023sharpness}, and recent multi-source DG approaches~\cite{guo2023domaindrop,tan2024rethinking,huangrepresentation}. Following the evaluation protocol of GMDG~\cite{tan2024rethinking}, we conduct experiments on TerraInc, OfficeHome, and VLCS using ResNet-50. The results are reported in Tabs.~\ref{tab:multi-source-officehome}--\ref{tab:multi-source-terrainc}.

PAPT++ achieves the highest average accuracy on all three benchmarks, reaching 55.18\%, 72.36\%, and 81.52\% on TerraInc, OfficeHome, and VLCS, respectively. Compared with PAPT, the corresponding improvements are 0.91, 1.15, and 0.68 pp. PAPT++ also outperforms SDK+SCA by 1.28 pp on TerraInc and CBD-Gen by 1.02 pp on VLCS, while slightly surpassing SDK+SCA on OfficeHome. At the domain level, particularly pronounced gains over PAPT are observed when L100 on TerraInc and LabelMe on VLCS are held out, with improvements of 5.47 and 5.28 pp, respectively. Moreover, PAPT++ consistently improves upon PAPT across all four target-domain settings on OfficeHome.

These results show that the benefits of PAPT++ extend beyond the single-source setting. Even when multiple source domains already provide diverse visual variations, synthesizing challenging yet class-consistent samples can expose complementary failure modes of the current classifier and further improve generalization to unseen domains.

\begin{table}[!t]
\begin{center}
\caption{
Multi-source DG results (\%) on TerraInc with the backbone of ResNet-50.
One domain is used as the target domain and the others are used as the source domains.
The best results are in \textcolor{red}{red} and the second highest results are in \textcolor{blue}{blue}.
}
\vspace{-2.0mm}
\label{tab:multi-source-terrainc}
\renewcommand\arraystretch{1.0}
\scalebox{0.9}{
\begin{tabular}{l|c|cccc|>{\columncolor{gray!30}}c}
\hline
Method & Venue & L100 & L38 & L43 & L46 & Avg. \\
\hline
\multicolumn{7}{c}{\textit{ResNet-50}} \\
\hline
DANN~\cite{ganin2016domain}
& IJCAI'16
& 51.10
& 40.60
& 57.40
& 37.70
& 46.70 \\

CORAL~\cite{sun2016deep}
& ICCV'16
& 51.60
& 42.20
& 57.00
& 39.80
& 47.70 \\


GroupDRO~\cite{sagawa2019distributionally}
& ICLR'20
& 41.20
& 38.60
& 56.70
& 36.40
& 43.20 \\

RSC~\cite{huang2020self}
& ECCV'20
& 50.20
& 39.20
& 56.30
& 40.80
& 46.60 \\

VREx~\cite{krueger2021out}
& ICML'21
& 48.20
& 41.70
& 56.80
& 38.70
& 46.40 \\

Mixstyle~\cite{zhou2021domain}
& ICLR'21
& 54.30
& 34.10
& 55.90
& 31.70
& 44.00 \\

ERM~\cite{vapnik2013nature}
& ICLR'21
& 49.80
& 42.10
& 56.90
& 35.70
& 46.10 \\

SAM~\cite{foret2021sharpness}
& ICLR'21
& 46.30
& 38.40
& 54.00
& 34.50
& 43.30 \\

SagNet~\cite{nam2021reducing}
& CVPR'21
& 53.00
& 43.00
& \textcolor{blue}{57.90}
& 40.40
& 48.60 \\

Miro~\cite{cha2022domain}
& ECCV'22
& 61.10
& 43.90
& 56.90
& 39.60
& 50.40 \\

GSAM~\cite{zhuang2022surrogate}
& ICLR'22
& 50.80
& 39.30
& \textcolor{red}{59.60}
& 38.20
& 47.00 \\

SAGM~\cite{wang2023sharpness}
& CVPR'23
& 54.80
& 41.40
& 57.70
& 41.30
& 48.80 \\

DomainDrop~\cite{guo2023domaindrop}
& ICCV'23
& -
& -
& -
& -
& 51.50 \\

XDomainMix~\cite{liu2024cross}
& IJCAI'24
& -
& -
& -
& -
& 48.20 \\

GMDG~\cite{tan2024rethinking}
& CVPR'24
& 59.80
& 45.30
& 57.10
& 38.20
& 50.10 \\

RES~\cite{huangrepresentation}
& ECCV'24
& -
& -
& -
& -
& 51.40 \\

SFT~\cite{li2025seeking}
& CVPR'25
& 57.50
& 44.60
& \textcolor{red}{59.60}
& 41.00
& 50.70 \\

GGA~\cite{ballas2025gradient}
& CVPR'25
& -
& -
& -
& -
& 48.50 \\

SDK+SCA~\cite{wei2025indirect}
& IJCAI'25
& 61.90
& \textcolor{red}{53.00}
& 57.30
& 43.30
& 53.90 \\

CBD-Gen~\cite{wang2025rethinking}
& NeurIPS'25
& 56.40
& 45.00
& \textcolor{red}{59.60}
& 41.00
& 50.50 \\

GUIDE~\cite{thomas2025latent}
& ICCV'25
& -
& -
& -
& -
& 51.30 \\
\hline

PAPT~\cite{PAPT}
& CVPR'25
& \textcolor{blue}{64.15}
& \textcolor{blue}{48.93}
& 55.27
& \textcolor{blue}{48.74}
& \textcolor{blue}{54.27} \\

PAPT++ (Ours)
& -
& \textcolor{red}{69.62}
& 47.87
& 52.66
& \textcolor{red}{50.57}
& \textcolor{red}{55.18} \\
\hline
\end{tabular}}
\end{center}
\vspace{-2.0mm}
\end{table}
\begin{table*}[t]
\centering
\caption{Ablation results (\%) on PACS and VLCS with the ResNet-18 backbone.}
\vspace{-2.0mm}
\label{tab:ablation}
\setlength{\tabcolsep}{5.8pt}
\renewcommand{\arraystretch}{0.9}
\begin{tabular}{c|l|ccccc|c|ccccc|c}
\toprule
\multirow{2}{*}{Idx}
& \multirow{2}{*}{Method}
& \multicolumn{6}{c|}{PACS}
& \multicolumn{6}{c}{VLCS} \\
\cmidrule(lr){3-8}
\cmidrule(lr){9-14}
& & A & C & P & S & Avg. & $\Delta$
  & V & L & C & S & Avg. & $\Delta$ \\
\midrule
1 & Source Only
& 58.60 & 66.40 & 34.00 & 27.50 & 46.60
& \textcolor{darkgreen}{\(\uparrow 26.83\)}
& 71.81 & 61.06 & 52.60 & 62.32 & 61.95
& \textcolor{darkgreen}{\(\uparrow 12.26\)} \\

2 & PAPT
& 69.84 & 78.57 & 62.12 & 80.34 & 72.72
& \textcolor{darkgreen}{\(\uparrow 0.71\)}
& 73.16 & 74.69 & 69.66 & 75.96 & 73.37
& \textcolor{darkgreen}{\(\uparrow 0.84\)} \\
\midrule

3 & w/o CSRL
& 72.42 & 79.39 & 60.40 & 70.63 & 70.71
& \textcolor{darkgreen}{\(\uparrow 2.72\)}
& 73.15 & 75.02 & 62.87 & 75.41 & 71.61
& \textcolor{darkgreen}{\(\uparrow 2.60\)} \\

4 & w/o CADS
& 67.44 & 74.17 & 56.30 & 72.04 & 67.49
& \textcolor{darkgreen}{\(\uparrow 5.94\)}
& 73.57 & 73.93 & 67.95 & 70.21 & 71.42
& \textcolor{darkgreen}{\(\uparrow 2.79\)} \\

5 & w/o Diversity ($\lambda_{\mathrm{div}}=0$)
& 70.23 & 76.83 & 49.60 & 76.01 & 68.17
& \textcolor{darkgreen}{\(\uparrow 5.26\)}
& 73.23 & 76.99 & 62.35 & 75.53 & 72.03
& \textcolor{darkgreen}{\(\uparrow 2.18\)} \\

6 & w/o Risk Guidance ($\lambda_{\mathrm{risk}}=0$)
& 71.73 & 78.92 & 61.53 & 78.50 & 72.67
& \textcolor{darkgreen}{\(\uparrow 0.76\)}
& 74.20 & 76.84 & 62.68 & 75.26 & 72.25
& \textcolor{darkgreen}{\(\uparrow 1.96\)} \\

7 & w/o Reference Denoising ($\mathcal{L}_{\mathrm{den}}$)
& 73.02 & 74.37 & 58.98 & 75.82 & 70.65
& \textcolor{darkgreen}{\(\uparrow 2.78\)}
& 73.32 & 75.69 & 60.64 & 75.21 & 71.71
& \textcolor{darkgreen}{\(\uparrow 2.50\)} \\
\midrule

8 & \textbf{PAPT++}
& \textbf{70.01} & \textbf{80.45} & \textbf{61.08} 
& \textbf{82.18} & \textbf{73.43} & -
& \textbf{73.07} & \textbf{77.22} & \textbf{71.42} 
& \textbf{75.14} & \textbf{74.21} & - \\
\bottomrule
\end{tabular}
\vspace{-3.0mm}
\end{table*}

\subsection{Ablation Study}

\noindent We conduct ablation studies on PACS and VLCS using ResNet-18 to study the contribution of each component in PAPT++. As reported in Tab.~\ref{tab:ablation}, rows 3 and 4 show the results after removing CSRL and CADS, respectively, while rows 5--7 study the main objectives within these two modules. Compared with Source Only, PAPT++ improves the average accuracy from 46.60\% to 73.43\% on PACS and from 61.95\% to 74.21\% on VLCS, corresponding to gains of 26.83 and 12.26 pp, respectively (rows 1 and 8).

Although PAPT and PAPT++ both use a pretrained diffusion model to expand the training distribution, their generation mechanisms are different. PAPT learns category and domain prompts to generate diverse domain styles, whereas PAPT++ learns image-level semantic references and uses feedback from the current classifier to search for high-risk samples. As shown in rows 2 and 8, PAPT++ improves upon PAPT by 0.71 pp on PACS and 0.84 pp on VLCS. These results show that the risk-aware generation strategy of PAPT++ provides more useful training variations than diversity-oriented prompt generation on both benchmarks.

\begin{figure}[!t]
\centering
\includegraphics[width=0.47\textwidth]{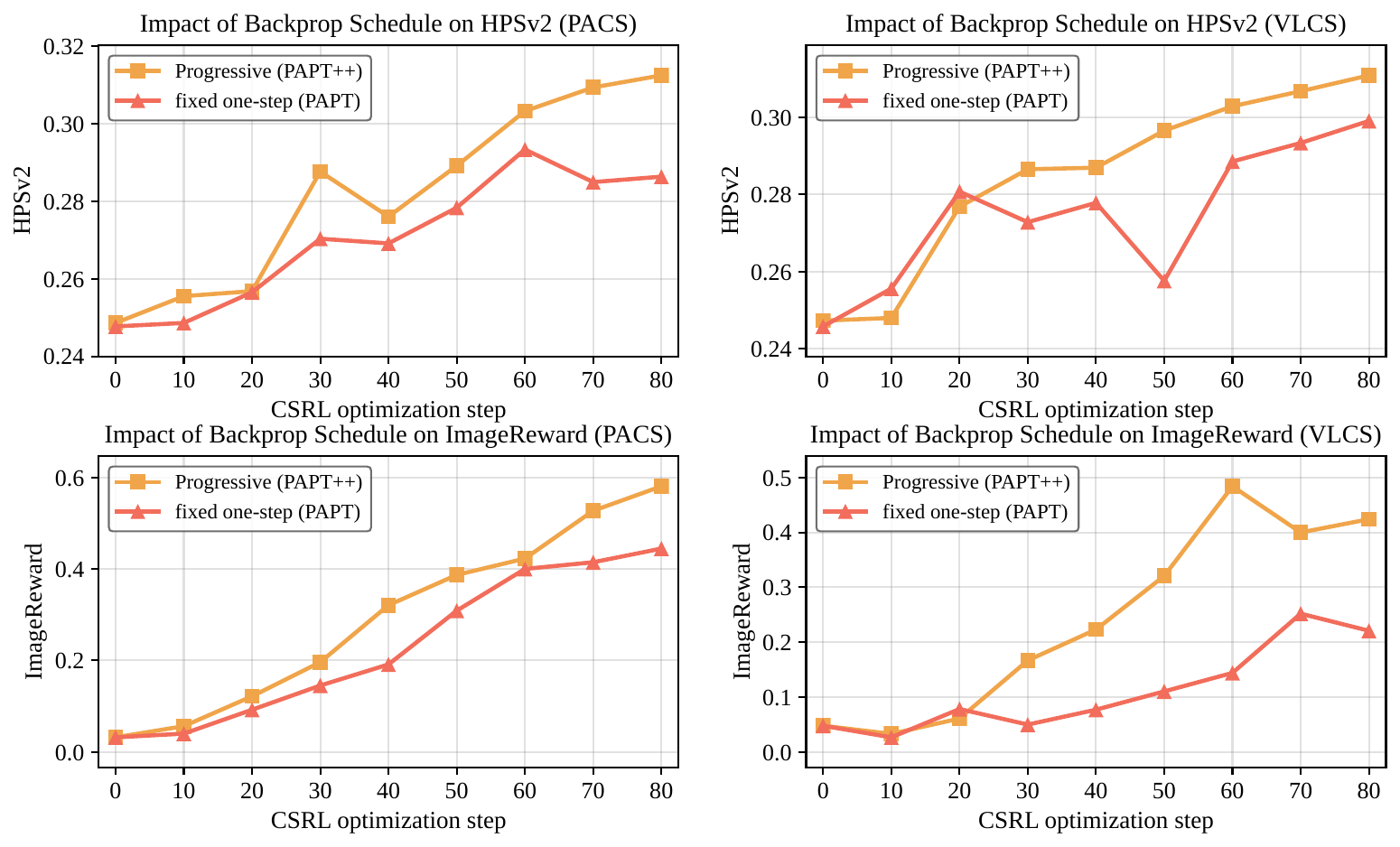
}
\vspace{-2.0mm}
\caption{Effect of the denoising backpropagation schedule in CSRL. Progressive (PAPT++) and fixed one-step optimization (PAPT) are compared on PACS and VLCS using HPSv2 and ImageReward.}
\label{fig:csrl_image_quality}
\vspace{-3.0mm}
\end{figure}

\vspace{2.0mm}

\noindent\textbf{Effectiveness of CSRL.}
We first evaluate the overall contribution of the CSRL module. Compared with the variant without CSRL, incorporating CSRL improves the average accuracy by 2.72 pp on PACS and 2.60 pp on VLCS (rows 3 and 8). CSRL learns image-level semantic references that guide subsequent high-risk synthesis. These references provide reliable class-level information, allowing CADS to explore new domain variations while maintaining class consistency. The improvements show that semantic references are important for generating challenging yet class-consistent samples.

We further study the diversity objective in CSRL by setting $\lambda_{\mathrm{div}}=0$. Compared with this setting, adding the diversity objective improves the  accuracy by 5.26 pp on PACS and 2.18 pp on VLCS (rows 5 and 8). The alignment objective encourages the learned references to match their class prompts, but it does not explicitly encourage different references from the same class to capture different visual appearances. The diversity objective encourages these references to represent complementary intra-class variations, providing a broader range of semantic references for subsequent high-risk synthesis. These results confirm the overall contribution of CSRL and the importance of the diversity objective within this module.

We also study the progressive denoising backpropagation schedule in CSRL. To isolate the effect of this schedule, we conduct the comparison using CSRL alone without CADS and keep the other settings unchanged. As shown in Fig.~\ref{fig:csrl_image_quality}, the progressive schedule achieves higher HPSv2 and ImageReward scores than the fixed one-step schedule during the later optimization steps on both PACS and VLCS. These results show that gradually increasing the number of denoising steps involved in backpropagation helps CSRL learn semantic references with better semantic alignment and visual quality.

\vspace{2.0mm}

\begin{figure}[!t]
\centering
\includegraphics[width=0.47\textwidth]{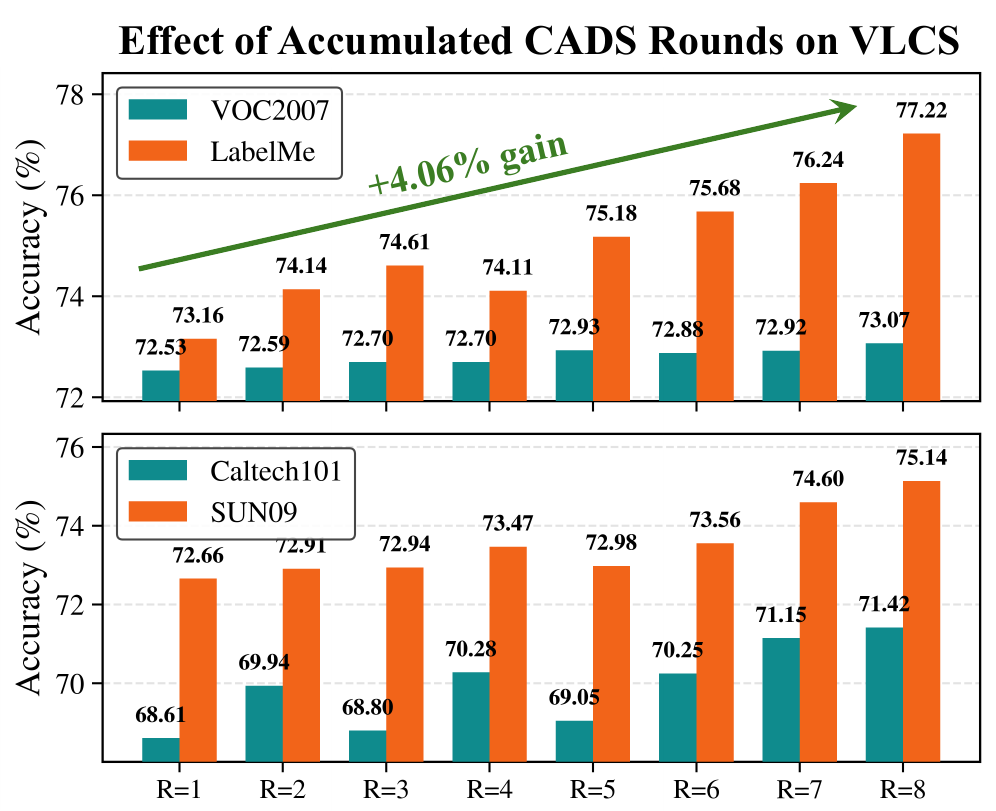}
\vspace{-3.0mm}
\caption{Effect of the number of accumulated CADS rounds $R$ on VLCS.}
\label{fig:cads_round_r}
\vspace{-3.0mm}
\end{figure}

\begin{table*}[!t]
\centering
\caption{Analysis of generated samples and downstream performance for different CADS variants on PACS.}
\vspace{-2.0mm}
\label{tab:mech-pacs-predicted}
\setlength{\tabcolsep}{5.8pt}
\renewcommand{\arraystretch}{0.9}
\begin{tabular}{c|l|cccccc}
\toprule
Idx
& Method
& Guide CE $\uparrow$
& Transfer CE $\uparrow$
& CLIP Cons. (\%) $\uparrow$
& ImageReward $\uparrow$
& LPIPS $\uparrow$
& PACS Avg. (\%) $\uparrow$ \\
\midrule

1 & w/o CADS
& 0.76
& 3.76
& 90.3
& 0.42
& 0.67
& 67.49 \\

2 & w/o Risk Guidance ($\lambda_{\mathrm{risk}}=0$)
& 0.78
& 6.05
& 93.1
& 0.78
& 0.59
& 72.67 \\

3 & w/o Reference Denoising ($\mathcal{L}_{\mathrm{den}}$)
& 0.93
& 7.30
& 61.8
& -1.47
& 0.60
& 70.65 \\

\midrule

4 & \textbf{PAPT++}
& \textbf{0.79}
& \textbf{6.70}
& \textbf{87.5}
& \textbf{-0.58}
& \textbf{0.62}
& \textbf{73.43} \\

\bottomrule
\end{tabular}
\vspace{-3.0mm}
\end{table*}
\begin{table*}[t]
\centering
\caption{Comparison of computational and storage overheads and multi-source DG performance (\%) among diffusion-based methods.
The best results are in \textcolor{red}{red} and the second highest results are in \textcolor{blue}{blue}.
}
\vspace{-2.0mm}
\label{tab:diffusion_dg_efficiency}
\setlength{\tabcolsep}{3.0pt}
\renewcommand{\arraystretch}{1.0}
\resizebox{\textwidth}{!}{
\begin{tabular}{l|l|l|c|cc|cc|ccc}
\toprule
\multirow{2}{*}{Method}
& \multirow{2}{*}{Venue}
& \multirow{2}{*}{Diffusion}
& \multirow{2}{*}{\shortstack{Diffusion\\Params}}
& \multicolumn{2}{c|}{GPU Usage (GB,BS=1)}
& \multirow{2}{*}{\shortstack{Trainable\\Params}}
& \multirow{2}{*}{\shortstack{Extra Image\\Storage}}
& \multirow{2}{*}{VLCS}
& \multirow{2}{*}{TerraInc}
& \multirow{2}{*}{OfficeHome} \\
\cline{5-6}
& & & & Avg. & Max. & & & & & \\
\midrule

CDGA~\cite{hemati2023cross}
& --
& SD-v1.4
& 1.07B
& 5.0
& 6.0
& 23.50M
& 2.25 MiB
& 78.90
& --
& 68.20 \\

DomainFusion~\cite{huang2024domainfusion}
& ECCV'24
& SD-v1.4
& 1.07B
& 4.7
& 5.5
& 23.50M
& 0.29 MiB
& 79.20
& 51.10
& \textcolor{red}{72.40} \\

Terra~\cite{zhuang2024time}
& NeurIPS'24
& SDXL
& 3.50B
& 20.0
& 24.0
& 70.50M
& 0.75 MiB
& 78.25
& --
& 69.63 \\

FDS-ERM~\cite{noori2025fds}
& WACV'25
& SD-v1.5
& 1.07B
& 13.5
& 16.0
& 883.5M
& 3.73 MiB
& 79.80
& --
& 71.10 \\

TRIDENT~\cite{choi2025trident}
& --
& SD-v2.1-unCLIP
& 1.93B
& 8.0
& 10.0
& 34.00M
& 3.38 MiB
& 77.80
& --
& 70.00 \\

PAPT~\cite{PAPT}
& CVPR'25
& SD-v1.4
& 1.07B
& 8.0
& 10.5
& 23.55M
& 0.75 MiB
& \textcolor{blue}{80.84}
& \textcolor{blue}{54.27}
& 71.21 \\

GUIDE~\cite{thomas2025latent}
& ICCV'25
& SD-v2.1-base
& 1.29B
& 4.8
& 5.8
& 23.65M
& 0
& 77.00
& 51.30
& 68.60 \\

\midrule

PAPT++ (Ours)
& --
& SD-v1.5
& 1.07B
& 11.7
& 17.7
& 23.86M
& 0.75 MiB
& \textcolor{red}{81.52}
& \textcolor{red}{55.18}
& \textcolor{blue}{72.36} \\

\bottomrule
\end{tabular}}
\end{table*}

\noindent\textbf{Effectiveness of CADS.}
We next evaluate the contribution of CADS module. Compared with the variant without CADS, incorporating CADS improves the accuracy by 5.94 pp on PACS and 2.79 pp on VLCS (rows 4 and 8). While CSRL provides class-level semantic references, it does not consider the weaknesses of the current classifier. CADS uses classifier feedback to search for high-risk variations around these references, allowing the synthesized samples to focus on cases that the classifier finds difficult. The improvements show that classifier-guided synthesis makes the learned semantic references more useful for improving generalization.

We further study the two objectives in CADS. To evaluate the classifier-risk objective, we set $\lambda_{\mathrm{risk}}=0$. Compared with this setting, adding the classifier-risk objective improves the average accuracy by 0.76 pp on PACS and 1.96 pp on VLCS (rows 6 and 8). We also evaluate the reference denoising loss by removing $\mathcal{L}_{\mathrm{den}}$. Adding this loss improves the average accuracy by 2.78 pp on PACS and 2.50 pp on VLCS (rows 7 and 8). The classifier-risk objective guides the synthesis process toward challenging variations that expose the weaknesses of the current classifier, while the reference denoising loss keeps the synthesized samples consistent with the class semantics of the learned references. These results show that the two objectives play complementary roles in balancing sample difficulty and class consistency, enabling CADS to generate challenging yet class-consistent samples for improving generalization to unseen domains.

We finally study the effect of accumulating synthesized samples across CADS rounds. The samples generated in each round are retained and used together with those from previous rounds for subsequent classifier training. As shown in Fig.~\ref{fig:cads_round_r}, the final-round accuracy is higher than the first-round accuracy on all four held-out domains of VLCS, with the largest improvement of 4.06 pp observed on LabelMe. As the classifier is updated, CADS uses its latest feedback to find new high-risk variations, while accumulating samples from different rounds provides a broader set of challenging training examples. These results show that iterative classifier-guided synthesis and cross-round sample accumulation jointly improve generalization to unseen domains.

\subsection{Analysis of the CADS Generation Mechanism}

\noindent The four settings in Tab.~\ref{tab:mech-pacs-predicted} correspond to \emph{w/o CADS}, \emph{w/o Risk Guidance}, \emph{w/o Reference Denoising}, and the full PAPT++ in Tab.~\ref{tab:ablation}. For each generated image, \emph{Guide CE} is the cross-entropy loss with respect to its conditioning label, computed using the source-trained ResNet-18 that guides CADS. \emph{Transfer CE} is computed in the same manner using an independently ResNet-50 that is not involved in generation, thereby measuring whether sample difficulty transfers to another classifier. \emph{CLIP Cons.} is the percentage of generated images whose zero-shot CLIP prediction matches the conditioning class. \emph{ImageReward} is the average image--text alignment score between each generated image and its conditioning prompt. \emph{LPIPS}~\cite{zhang2018perceptual} is computed between images conditioned on the same class and then averaged across classes to measure intra-class perceptual diversity. All metrics are averaged over the generated samples. For \emph{w/o CADS}, they are computed directly on the semantic references learned by CSRL.

The results show that Risk Guidance and Reference Denoising play complementary roles. Adding Risk Guidance increases Transfer CE from 6.05 to 6.70 and improves accuracy from 72.67\% to 73.43\%, showing that classifier feedback helps generate more challenging and useful samples. Without Reference Denoising, Guide CE and Transfer CE reach 0.93 and 7.30, but the loss of semantic consistency limits the accuracy to 70.65\%. Reference Denoising improves class consistency and ImageReward to 87.5\% and -0.58 while maintaining a high Transfer CE of 6.70, enabling PAPT++ to achieve the best accuracy of 73.43\%. Moreover, although the CSRL references in \emph{w/o CADS} have the highest LPIPS, their lower Transfer CE and accuracy show that perceptual diversity alone does not necessarily produce useful domain variations. Overall, CADS balances sample difficulty and class consistency to generate informative samples for domain generalization.

\begin{figure*}[!t]
\centering
\includegraphics[width=0.98\textwidth]{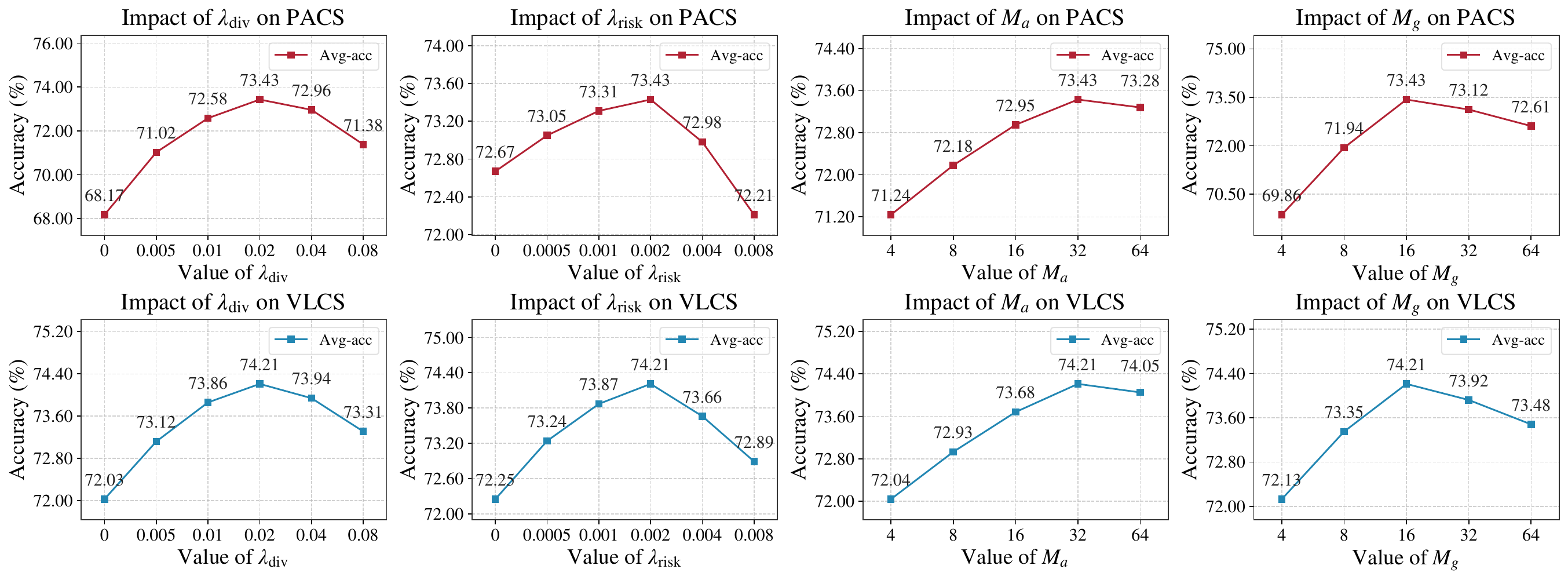}
\vspace{-3.0mm}
\caption{Hyper-parameter analysis of $\lambda_{\mathrm{div}}$,
$\lambda_{\mathrm{risk}}$, $M_a$, and $M_g$ on PACS and VLCS.}
\label{fig:hyperparameter}
\vspace{-3.0mm}
\end{figure*}

\begin{table}[!t]
\centering
\caption{Impact of CFG/CFG++ on VLCS (\%) with ResNet-18 under a high risk-guidance weight ($\lambda_{\mathrm{risk}}=0.5$). The best results are in \textcolor{red}{red}, and the second-highest results are in \textcolor{blue}{blue}.}
\label{tab:vlcs_cfg_ablation}
\setlength{\tabcolsep}{5.8pt}
\renewcommand{\arraystretch}{0.95}
\begin{tabular}{c|l|cccc|c}
\toprule
Idx & Method & V & L & C & S & Avg. \\
\midrule
1 & CFG ($g=1$)
& 71.34
& 69.78
& 62.98
& \textcolor{red}{73.79}
& 69.47 \\
2 & CFG ($g=3$)
& \textcolor{red}{72.50}
& 70.77
& \textcolor{blue}{65.66}
& 72.85
& \textcolor{blue}{70.44} \\
3 & CFG ($g=5$)
& 71.07
& 69.49
& 63.90
& 71.45
& 68.98 \\
\midrule
4 & CFG++ ($\lambda=0.4$)
& 70.77
& 70.93
& 64.36
& 72.43
& 69.62 \\
5 & CFG++ ($\lambda=0.6$)
& \textcolor{blue}{72.24}
& \textcolor{red}{73.43}
& \textcolor{red}{66.50}
& 72.87
& \textcolor{red}{71.26} \\
6 & CFG++ ($\lambda=0.8$)
& 71.91
& \textcolor{blue}{71.32}
& 62.78
& \textcolor{blue}{73.01}
& 69.76 \\
\bottomrule
\end{tabular}
\end{table}
\subsection{Further Analysis}

\noindent\textbf{Computational cost.}
Tab.~\ref{tab:diffusion_dg_efficiency} compares the efficiency and performance of diffusion-based DG methods. PAPT++ achieves the highest accuracies on VLCS (81.52\%) and TerraInc (55.18\%), and the second-highest accuracy on OfficeHome (72.36\%). Its average and maximum GPU usage are 11.7 GB and 17.7 GB, respectively, which are higher than those of PAPT but lower than those of Terra. Despite using a 1.07B-parameter diffusion pipeline, PAPT++ updates only 23.86M parameters (2.23\%). Compared with PAPT, it adds only 0.31M trainable parameters and retains the same storage overhead of 0.75 MiB per source sample, while improving accuracy by 0.68, 0.91, and 1.15 pp on the three benchmarks, respectively. Moreover, the diffusion pipeline is used only during training and introduces no additional inference cost.

\vspace{2.0mm}

\noindent\textbf{Effect of CFG/CFG++ Guidance.}
We compare different CFG scales and CFG++ coefficients on VLCS in Tab.~\ref{tab:vlcs_cfg_ablation}. For this controlled analysis, we set $\lambda_{\mathrm{risk}}=0.5$ to construct a more challenging synthesis setting. Under this stronger risk signal, the comparison focuses on how different guidance configurations balance semantic consistency and sample difficulty. Among the evaluated configurations, CFG++ with $\lambda=0.6$ achieves the highest average accuracy of 71.26\%, outperforming the best CFG configuration ($g=3$) by 0.82 pp. For both guidance methods, moderate guidance performs better than smaller or larger values, indicating a more effective balance between class consistency and challenging within-class variations.

\vspace{2.0mm}

\noindent \textbf{Hyperparameter Analysis.}
We analyze the influence of $\lambda_{\mathrm{div}}$, $\lambda_{\mathrm{risk}}$, $M_a$, and $M_g$ on PACS and VLCS.
As shown in Fig.~\ref{fig:hyperparameter}, increasing
$\lambda_{\mathrm{div}}$ and $\lambda_{\mathrm{risk}}$ initially improves accuracy, whereas overly large weights lead to performance degradation. For $M_a$ and $M_g$, increasing the number of samples is beneficial within a moderate range.

\section{Conclusion}
\noindent In this work, we presented PAPT++, a diffusion-based framework for domain
generalization that improves the semantic reliability and generalization value
of synthesized samples. PAPT++ first employs CSRL to construct semantically
meaningful and diverse references from source-domain data. Based on these
references, CADS explores challenging domain variations through classifier-guided
risk maximization, while reference denoising preserves category-level semantics
during synthesis. 
Extensive experiments demonstrate that PAPT++ consistently improves upon
PAPT and achieves strong performance on standard DG benchmarks. These
results highlight the importance of synthesizing challenging yet semantically
reliable samples for generalization to unseen domains.
Future work will explore more efficient high-risk synthesis strategies and extend PAPT++ to broader generalization settings with complex and evolving domain shifts.

\bibliographystyle{IEEEtran}
\bibliography{egbib}

@String(CVPR= {IEEE Conf. Comput. Vis. Pattern Recog.})

@String(ICCV= {Int. Conf. Comput. Vis.})

@String(ECCV= {Eur. Conf. Comput. Vis.})

@String(AAAI = {AAAI})

@String(CVPR  = {CVPR})

@String(ICCV  = {ICCV})

@String(ECCV  = {ECCV})

@inproceedings{cha2022domain,
      title        = {Domain Generalization by Mutual-Information Regularization with Pre-trained Models}, 
      author       = {Junbum Cha and
                  Kyungjae Lee and
                  Sungrae Park and
                  Sanghyuk Chun},
      year={2022},
      booktitle    = ECCV,
      pages        = {440--457}
}

@inproceedings{volpi2019addressing,
  title={Addressing model vulnerability to distributional shifts over image transformation sets},
  author={Volpi, Riccardo and Murino, Vittorio},
  booktitle={Proceedings of the IEEE/CVF International Conference on Computer Vision},
  pages={7980--7989},
  year={2019}
}

@article{volpi2018generalizing,
  title={Generalizing to unseen domains via adversarial data augmentation},
  author={Volpi, Riccardo and Namkoong, Hongseok and Sener, Ozan and Duchi, John C and Murino, Vittorio and Savarese, Silvio},
  journal={Advances in neural information processing systems},
  volume={31},
  year={2018}
}

@article{zhao2020maximum,
  title={Maximum-entropy adversarial data augmentation for improved generalization and robustness},
  author={Zhao, Long and Liu, Ting and Peng, Xi and Metaxas, Dimitris},
  journal={Advances in Neural Information Processing Systems},
  volume={33},
  pages={14435--14447},
  year={2020}
}

@inproceedings{wang2021learning,
  title={Learning to diversify for single domain generalization},
  author={Wang, Zijian and Luo, Yadan and Qiu, Ruihong and Huang, Zi and Baktashmotlagh, Mahsa},
  booktitle={Proceedings of the IEEE/CVF International Conference on Computer Vision},
  pages={834--843},
  year={2021}
}

@article{zhou2021domain,
  title={Domain generalization with mixstyle},
  author={Zhou, Kaiyang and Yang, Yongxin and Qiao, Yu and Xiang, Tao},
  journal={arXiv preprint arXiv:2104.02008},
  year={2021}
}

@inproceedings{fan2021adversarially,
  title={Adversarially adaptive normalization for single domain generalization},
  author={Fan, Xinjie and Wang, Qifei and Ke, Junjie and Yang, Feng and Gong, Boqing and Zhou, Mingyuan},
  booktitle={Proceedings of the IEEE/CVF conference on Computer Vision and Pattern Recognition},
  pages={8208--8217},
  year={2021}
}

@inproceedings{ramesh2021zero,
  title={Zero-shot text-to-image generation},
  author={Ramesh, Aditya and Pavlov, Mikhail and Goh, Gabriel and Gray, Scott and Voss, Chelsea and Radford, Alec and Chen, Mark and Sutskever, Ilya},
  booktitle={International conference on machine learning},
  pages={8821--8831},
  year={2021},
  organization={Pmlr}
}

@inproceedings{rombach2022high,
  title={High-resolution image synthesis with latent diffusion models},
  author={Rombach, Robin and Blattmann, Andreas and Lorenz, Dominik and Esser, Patrick and Ommer, Bj{\"o}rn},
  booktitle={Proceedings of the IEEE/CVF conference on computer vision and pattern recognition},
  pages={10684--10695},
  year={2022}
}

@article{xu2024imagereward,
  title={Imagereward: Learning and evaluating human preferences for text-to-image generation},
  author={Xu, Jiazheng and Liu, Xiao and Wu, Yuchen and Tong, Yuxuan and Li, Qinkai and Ding, Ming and Tang, Jie and Dong, Yuxiao},
  journal={Advances in Neural Information Processing Systems},
  volume={36},
  year={2024}
}

@article{sagawa2019distributionally,
  title={Distributionally robust neural networks for group shifts: On the importance of regularization for worst-case generalization},
  author={Sagawa, Shiori and Koh, Pang Wei and Hashimoto, Tatsunori B and Liang, Percy},
  journal={arXiv preprint arXiv:1911.08731},
  year={2019}
}

@article{sinha2017certifying,
  title={Certifying some distributional robustness with principled adversarial training},
  author={Sinha, Aman and Namkoong, Hongseok and Volpi, Riccardo and Duchi, John},
  journal={arXiv preprint arXiv:1710.10571},
  year={2017}
}

@article{namkoong2016stochastic,
  title={Stochastic gradient methods for distributionally robust optimization with f-divergences},
  author={Namkoong, Hongseok and Duchi, John C},
  journal={Advances in neural information processing systems},
  volume={29},
  year={2016}
}

@article{staib2019distributionally,
  title={Distributionally robust optimization and generalization in kernel methods},
  author={Staib, Matthew and Jegelka, Stefanie},
  journal={Advances in Neural Information Processing Systems},
  volume={32},
  year={2019}
}

@article{liu2021towards,
  title={Towards out-of-distribution generalization: A survey},
  author={Liu, Jiashuo and Shen, Zheyan and He, Yue and Zhang, Xingxuan and Xu, Renzhe and Yu, Han and Cui, Peng},
  journal={arXiv preprint arXiv:2108.13624},
  year={2021}
}

@inproceedings{hu2018does,
  title={Does distributionally robust supervised learning give robust classifiers?},
  author={Hu, Weihua and Niu, Gang and Sato, Issei and Sugiyama, Masashi},
  booktitle={International Conference on Machine Learning},
  pages={2029--2037},
  year={2018},
  organization={PMLR}
}

@article{frogner2019incorporating,
  title={Incorporating unlabeled data into distributionally robust learning},
  author={Frogner, Charlie and Claici, Sebastian and Chien, Edward and Solomon, Justin},
  journal={arXiv preprint arXiv:1912.07729},
  year={2019}
}

@article{liu2022distributionally,
  title={Distributionally robust learning with stable adversarial training},
  author={Liu, Jiashuo and Shen, Zheyan and Cui, Peng and Zhou, Linjun and Kuang, Kun and Li, Bo},
  journal={IEEE Transactions on Knowledge and Data Engineering},
  volume={35},
  number={11},
  pages={11288--11300},
  year={2022},
  publisher={IEEE}
}

@article{qiao2023topology,
  title={Topology-aware robust optimization for out-of-distribution generalization},
  author={Qiao, Fengchun and Peng, Xi},
  journal={arXiv preprint arXiv:2307.13943},
  year={2023}
}

@inproceedings{chung2025cfg++,
  title={Cfg++: Manifold-constrained classifier free guidance for diffusion models},
  author={Chung, Hyungjin and Kim, Jeongsol and Park, Geon Yeong and Nam, Hyelin and Ye, Jong Chul},
  booktitle={International Conference on Learning Representations},
  volume={2025},
  pages={30824--30850},
  year={2025}
}

@article{gulrajani2020search,
      title={In search of lost domain generalization},
      author={Gulrajani, Ishaan and Lopez-Paz, David},
      journal={arXiv preprint arXiv:2007.01434},
      year={2020}
}

@inproceedings{li2017deeper,
      title={Deeper, broader and artier domain generalization},
      author={Li, Da and Yang, Yongxin and Song, Yi-Zhe and Hospedales, Timothy M},
      booktitle=ICCV,
      pages={5542--5550},
      year={2017}
}

@inproceedings{venkateswara2017deep,
      title={Deep hashing network for unsupervised domain adaptation},
      author={Venkateswara, Hemanth and Eusebio, Jose and Chakraborty, Shayok and Panchanathan, Sethuraman},
      booktitle=CVPR,
      pages={5018--5027},
      year={2017}
}

@inproceedings{shinunknown,
  title     = {Unknown Domain Inconsistency Minimization for Domain Generalization},
  author    = {Shin, Seungjae and Bae, HeeSun and Na, Byeonghu and Kim, Yoon-Yeong and Moon, Il-Chul},
  booktitle = {International Conference on Learning Representations},
  year      = {2024}
}

@inproceedings{chen2023improved,
  title={Improved test-time adaptation for domain generalization},
  author={Chen, Liang and Zhang, Yong and Song, Yibing and Shan, Ying and Liu, Lingqiao},
  booktitle={Proceedings of the IEEE/CVF Conference on Computer Vision and Pattern Recognition},
  pages={24172--24182},
  year={2023}
}

@article{yan2020improve,
  title={Improve unsupervised domain adaptation with mixup training},
  author={Yan, Shen and Song, Huan and Li, Nanxiang and Zou, Lincan and Ren, Liu},
  journal={arXiv preprint arXiv:2001.00677},
  year={2020}
}

@inproceedings{rame2022fishr,
  title={Fishr: Invariant gradient variances for out-of-distribution generalization},
  author={Rame, Alexandre and Dancette, Corentin and Cord, Matthieu},
  booktitle={International Conference on Machine Learning},
  pages={18347--18377},
  year={2022},
  organization={PMLR}
}

@inproceedings{chen2023domain,
  title={Domain generalization via rationale invariance},
  author={Chen, Liang and Zhang, Yong and Song, Yibing and Van Den Hengel, Anton and Liu, Lingqiao},
  booktitle={Proceedings of the IEEE/CVF International Conference on Computer Vision},
  pages={1751--1760},
  year={2023}
}

@article{hendrycks2019benchmarking,
  title={Benchmarking neural network robustness to common corruptions and perturbations},
  author={Hendrycks, Dan and Dietterich, Thomas},
  journal={arXiv preprint arXiv:1903.12261},
  year={2019}
}

@techreport{krizhevsky2009learning,
  title       = {Learning Multiple Layers of Features from Tiny Images},
  author      = {Krizhevsky, Alex},
  institution = {University of Toronto},
  year        = {2009}
}

@article{hemati2023cross,
  title={Cross Domain Generative Augmentation: Domain Generalization with Latent Diffusion Models},
  author={Hemati, Sobhan and Beitollahi, Mahdi and Estiri, Amir Hossein and Omari, Bassel Al and Chen, Xi and Zhang, Guojun},
  journal={arXiv preprint arXiv:2312.05387},
  year={2023}
}

@inproceedings{guo2023domaindrop,
  title={Domaindrop: Suppressing domain-sensitive channels for domain generalization},
  author={Guo, Jintao and Qi, Lei and Shi, Yinghuan},
  booktitle={Proceedings of the IEEE/CVF international conference on computer vision},
  pages={19114--19124},
  year={2023}
}

@inproceedings{tan2024rethinking,
  title={Rethinking Multi-domain Generalization with A General Learning Objective},
  author={Tan, Zhaorui and Yang, Xi and Huang, Kaizhu},
  booktitle={Proceedings of the IEEE/CVF Conference on Computer Vision and Pattern Recognition},
  pages={23512--23522},
  year={2024}
}

@inproceedings{choi2023progressive,
  title={Progressive random convolutions for single domain generalization},
  author={Choi, Seokeon and Das, Debasmit and Choi, Sungha and Yang, Seunghan and Park, Hyunsin and Yun, Sungrack},
  booktitle={Proceedings of the IEEE/CVF Conference on Computer Vision and Pattern Recognition},
  pages={10312--10322},
  year={2023}
}

@article{hendrycks2019augmix,
  title={Augmix: A simple data processing method to improve robustness and uncertainty},
  author={Hendrycks, Dan and Mu, Norman and Cubuk, Ekin D and Zoph, Barret and Gilmer, Justin and Lakshminarayanan, Balaji},
  journal={arXiv preprint arXiv:1912.02781},
  year={2019}
}

@book{vapnik2013nature,
  title={The nature of statistical learning theory},
  author={Vapnik, Vladimir},
  year={2013},
  publisher={Springer science \& business media}
}

@inproceedings{nuriel2021permuted,
  title={Permuted adain: Reducing the bias towards global statistics in image classification},
  author={Nuriel, Oren and Benaim, Sagie and Wolf, Lior},
  booktitle={Proceedings of the IEEE/CVF conference on computer vision and pattern recognition},
  pages={9482--9491},
  year={2021}
}

@inproceedings{zhang2022exact,
  title={Exact feature distribution matching for arbitrary style transfer and domain generalization},
  author={Zhang, Yabin and Li, Minghan and Li, Ruihuang and Jia, Kui and Zhang, Lei},
  booktitle={Proceedings of the IEEE/CVF conference on computer vision and pattern recognition},
  pages={8035--8045},
  year={2022}
}

@article{li2022uncertainty,
  title={Uncertainty modeling for out-of-distribution generalization},
  author={Li, Xiaotong and Dai, Yongxing and Ge, Yixiao and Liu, Jun and Shan, Ying and Duan, Ling-Yu},
  journal={arXiv preprint arXiv:2202.03958},
  year={2022}
}

@inproceedings{cugu2022attention,
  title={Attention consistency on visual corruptions for single-source domain generalization},
  author={Cugu, Ilke and Mancini, Massimiliano and Chen, Yanbei and Akata, Zeynep},
  booktitle={Proceedings of the IEEE/CVF Conference on Computer Vision and Pattern Recognition},
  pages={4165--4174},
  year={2022}
}

@inproceedings{qu2023modality,
  title={Modality-agnostic debiasing for single domain generalization},
  author={Qu, Sanqing and Pan, Yingwei and Chen, Guang and Yao, Ting and Jiang, Changjun and Mei, Tao},
  booktitle={Proceedings of the IEEE/CVF Conference on Computer Vision and Pattern Recognition},
  pages={24142--24151},
  year={2023}
}

@article{ganin2016domain,
  title={Domain-adversarial training of neural networks},
  author={Ganin, Yaroslav and Ustinova, Evgeniya and Ajakan, Hana and Germain, Pascal and Larochelle, Hugo and Laviolette, Fran{\c{c}}ois and March, Mario and Lempitsky, Victor},
  journal={Journal of machine learning research},
  volume={17},
  number={59},
  pages={1--35},
  year={2016}
}

@inproceedings{sun2016deep,
  title={Deep coral: Correlation alignment for deep domain adaptation},
  author={Sun, Baochen and Saenko, Kate},
  booktitle={Computer Vision--ECCV 2016 Workshops: Amsterdam, The Netherlands, October 8-10 and 15-16, 2016, Proceedings, Part III 14},
  pages={443--450},
  year={2016},
  organization={Springer}
}

@inproceedings{li2018domain,
  title={Domain generalization via conditional invariant representations},
  author={Li, Ya and Gong, Mingming and Tian, Xinmei and Liu, Tongliang and Tao, Dacheng},
  booktitle={Proceedings of the AAAI conference on artificial intelligence},
  volume={32},
  number={1},
  year={2018}
}

@article{zhang2021adaptive,
  title={Adaptive risk minimization: Learning to adapt to domain shift},
  author={Zhang, Marvin and Marklund, Henrik and Dhawan, Nikita and Gupta, Abhishek and Levine, Sergey and Finn, Chelsea},
  journal={Advances in Neural Information Processing Systems},
  volume={34},
  pages={23664--23678},
  year={2021}
}

@inproceedings{krueger2021out,
  title={Out-of-distribution generalization via risk extrapolation (rex)},
  author={Krueger, David and Caballero, Ethan and Jacobsen, Joern-Henrik and Zhang, Amy and Binas, Jonathan and Zhang, Dinghuai and Le Priol, Remi and Courville, Aaron},
  booktitle={International conference on machine learning},
  pages={5815--5826},
  year={2021},
  organization={PMLR}
}

@inproceedings{nam2021reducing,
  title={Reducing domain gap by reducing style bias},
  author={Nam, Hyeonseob and Lee, HyunJae and Park, Jongchan and Yoon, Wonjun and Yoo, Donggeun},
  booktitle={Proceedings of the IEEE/CVF Conference on Computer Vision and Pattern Recognition},
  pages={8690--8699},
  year={2021}
}

@inproceedings{wang2023sharpness,
  title={Sharpness-aware gradient matching for domain generalization},
  author={Wang, Pengfei and Zhang, Zhaoxiang and Lei, Zhen and Zhang, Lei},
  booktitle={Proceedings of the IEEE/CVF Conference on Computer Vision and Pattern Recognition},
  pages={3769--3778},
  year={2023}
}

@inproceedings{PAPT,
  title={Adversarial domain prompt tuning and generation for single domain generalization},
  author={Xu, Zhipeng and Cheng, De and Jiang, Xinyang and Wang, Nannan and Li, Dongsheng and Gao, Xinbo},
  booktitle={Proceedings of the IEEE/CVF Conference on Computer Vision and Pattern Recognition},
  pages={18584--18595},
  year={2025}
}

@inproceedings{huang2020self,
  title={Self-challenging improves cross-domain generalization},
  author={Huang, Zeyi and Wang, Haohan and Xing, Eric P and Huang, Dong},
  booktitle={Computer vision--ECCV 2020: 16th European conference, Glasgow, UK, August 23--28, 2020, proceedings, part II 16},
  pages={124--140},
  year={2020},
  organization={Springer}
}

@inproceedings{chen2023meta,
  title={Meta-causal learning for single domain generalization},
  author={Chen, Jin and Gao, Zhi and Wu, Xinxiao and Luo, Jiebo},
  booktitle={Proceedings of the IEEE/CVF Conference on Computer Vision and Pattern Recognition},
  pages={7683--7692},
  year={2023}
}

@inproceedings{li2024prompt,
  title={Prompt-Driven Dynamic Object-Centric Learning for Single Domain Generalization},
  author={Li, Deng and Wu, Aming and Wang, Yaowei and Han, Yahong},
  booktitle={Proceedings of the IEEE/CVF Conference on Computer Vision and Pattern Recognition},
  pages={17606--17615},
  year={2024}
}

@inproceedings{li2024friendly,
  title={Friendly sharpness-aware minimization},
  author={Li, Tao and Zhou, Pan and He, Zhengbao and Cheng, Xinwen and Huang, Xiaolin},
  booktitle={Proceedings of the IEEE/CVF conference on computer vision and pattern recognition},
  pages={5631--5640},
  year={2024}
}

@article{liu2024generalizable,
  title={Generalizable prompt learning via gradient constrained sharpness-aware minimization},
  author={Liu, Liangchen and Wang, Nannan and Zhou, Dawei and Liu, Decheng and Yang, Xi and Gao, Xinbo and Liu, Tongliang},
  journal={IEEE Transactions on Multimedia},
  volume={27},
  pages={1100--1113},
  year={2024},
  publisher={IEEE}
}

@inproceedings{efthymiadis2025crafting,
  title={Crafting distribution shifts for validation and training in single source domain generalization},
  author={Efthymiadis, Nikos and Tolias, Giorgos and Chum, Ond{\v{r}}ej},
  booktitle={2025 IEEE/CVF Winter Conference on Applications of Computer Vision (WACV)},
  pages={1883--1892},
  year={2025},
  organization={IEEE}
}

@inproceedings{lyu2025sse,
  title={SSE-SAM: balancing head and tail classes gradually through stage-wise SAM},
  author={Lyu, Xingyu and Xu, Qianqian and Yang, Zhiyong and Lyu, Shaojie and Huang, Qingming},
  booktitle={Proceedings of the AAAI Conference on Artificial Intelligence},
  volume={39},
  number={18},
  pages={19278--19286},
  year={2025}
}

@inproceedings{zhou2025sharpness,
  title={Sharpness-aware minimization efficiently selects flatter minima late in training},
  author={Zhou, Zhanpeng and Wang, Mingze and Mao, Yuchen and Li, Bingrui and Yan, Junchi},
  booktitle={International Conference on Learning Representations},
  volume={2025},
  pages={20949--20980},
  year={2025}
}

@inproceedings{xu2025physaug,
  title={Physaug: A physical-guided and frequency-based data augmentation for single-domain generalized object detection},
  author={Xu, Xiaoran and Yang, Jiangang and Shi, Wenhui and Ding, Siyuan and Luo, Luqing and Liu, Jian},
  booktitle={Proceedings of the AAAI Conference on Artificial Intelligence},
  volume={39},
  number={20},
  pages={21815--21823},
  year={2025}
}

@article{PADG,
  title={Prompt Disentanglement via Language Guidance and Representation Alignment for Domain Generalization},
  author={Cheng, De and Xu, Zhipeng and Jiang, Xinyang and Li, Dongsheng and Wang, Nannan and Gao, Xinbo},
  journal={IEEE Transactions on Pattern Analysis and Machine Intelligence},
  year={2026},
  publisher={IEEE}
}

@inproceedings{RD-MLDG,
  title     = {Reasoning-Driven Multimodal LLM for Domain Generalization},
  author    = {Xu, Zhipeng and Wang, Zilong and Jiang, Xinyang and Li, Dongsheng and Cheng, De and Wang, Nannan},
  booktitle = {The Fourteenth International Conference on Learning Representations},
  year      = {2026},
  url       = {https://openreview.net/forum?id=psJiUopUt7}
}

@inproceedings{StPR,
  title     = {StPR: Spatiotemporal Preservation and Routing for Exemplar-Free Video Class-Incremental Learning},
  author    = {Wang, Huaijie and Cheng, De and Li, Guozhang and Xu, Zhipeng and He, Lingfeng and Li, Jie and Wang, Nannan and Gao, Xinbo},
  booktitle = {Proceedings of the Fourteenth International Conference on Learning Representations},
  year      = {2026},
  url       = {https://openreview.net/forum?id=VAn2YVMuZC}
}

@inproceedings{IKI,
  title     = {Interference-Isolated Elastic Weight Consolidation and Knowledge Calibration for Incremental Object Detection},
  author    = {Cheng, De and Zeng, Mingyue and Xu, Zhipeng and Xu, Di and Wang, Nannan and Gao, Xinbo},
  booktitle = {Proceedings of the Fourteenth International Conference on Learning Representations},
  year      = {2026},
  url       = {https://openreview.net/forum?id=VrXdmCjni4}
}

@inproceedings{DPR,
  title={Disentangled prompt representation for domain generalization},
  author={Cheng, De and Xu, Zhipeng and Jiang, Xinyang and Wang, Nannan and Li, Dongsheng and Gao, Xinbo},
  booktitle={Proceedings of the IEEE/CVF Conference on Computer Vision and Pattern Recognition},
  pages={23595--23604},
  year={2024}
}

@inproceedings{ronneberger2015u,
  title={U-net: Convolutional networks for biomedical image segmentation},
  author={Ronneberger, Olaf and Fischer, Philipp and Brox, Thomas},
  booktitle={Medical image computing and computer-assisted intervention--MICCAI 2015: 18th international conference, Munich, Germany, October 5-9, 2015, proceedings, part III 18},
  pages={234--241},
  year={2015},
  organization={Springer}
}

@inproceedings{radford2021learning,
  title={Learning transferable visual models from natural language supervision},
  author={Radford, Alec and Kim, Jong Wook and Hallacy, Chris and Ramesh, Aditya and Goh, Gabriel and Agarwal, Sandhini and Sastry, Girish and Askell, Amanda and Mishkin, Pamela and Clark, Jack and others},
  booktitle={International conference on machine learning},
  pages={8748--8763},
  year={2021},
  organization={PMLR}
}

@article{kingma2013auto,
  title={Auto-encoding variational bayes},
  author={Kingma, Diederik P and Welling, Max},
  journal={arXiv preprint arXiv:1312.6114},
  year={2013}
}

@article{liu2024stydesty,
  title={Stydesty: Min-max stylization and destylization for single domain generalization},
  author={Liu, Songhua and Jin, Xin and Yang, Xingyi and Ye, Jingwen and Wang, Xinchao},
  journal={arXiv preprint arXiv:2406.00275},
  year={2024}
}

@inproceedings{yang2024practical,
  title={Practical single domain generalization via training-time and test-time learning},
  author={Yang, Shuai and Zhang, Zhen and Gu, Lichuan},
  booktitle={Proceedings of the 30th ACM SIGKDD Conference on Knowledge Discovery and Data Mining},
  pages={3794--3805},
  year={2024}
}

@article{yang2026mutual,
  title={Mutual Information-Guided Style Augmentation for Single Domain Generalization},
  author={Yang, Shuai and Zhang, Zhen and Yu, Kui and Gu, Lichuan and Wu, Xindong},
  journal={ACM Transactions on Intelligent Systems and Technology},
  volume={17},
  number={3},
  pages={1--32},
  year={2026},
  publisher={ACM New York, NY}
}

@inproceedings{li2018learning,
  title={Learning to generalize: Meta-learning for domain generalization},
  author={Li, Da and Yang, Yongxin and Song, Yi-Zhe and Hospedales, Timothy},
  booktitle={Proceedings of the AAAI conference on artificial intelligence},
  volume={32},
  number={1},
  year={2018}
}

@article{zhuang2022surrogate,
  title={Surrogate gap minimization improves sharpness-aware training},
  author={Zhuang, Juntang and Gong, Boqing and Yuan, Liangzhe and Cui, Yin and Adam, Hartwig and Dvornek, Nicha and Tatikonda, Sekhar and Duncan, James and Liu, Ting},
  journal={arXiv preprint arXiv:2203.08065},
  year={2022}
}

@article{liu2024cross,
  title={Cross-Domain Feature Augmentation for Domain Generalization},
  author={Liu, Yingnan and Zou, Yingtian and Qiao, Rui and Liu, Fusheng and Lee, Mong Li and Hsu, Wynne},
  journal={arXiv preprint arXiv:2405.08586},
  year={2024}
}

@inproceedings{huangrepresentation,
  title     = {Representation Enhancement-Stabilization: Reducing Bias-Variance of Domain Generalization},
  author    = {Huang, Wei and Shi, Yilei and Xiong, Zhitong and Zhu, Xiao Xiang},
  booktitle = {European Conference on Computer Vision},
  pages     = {108--125},
  year      = {2024},
  organization = {Springer}
}

@article{zhou2022learning,
  title={Learning to prompt for vision-language models},
  author={Zhou, Kaiyang and Yang, Jingkang and Loy, Chen Change and Liu, Ziwei},
  journal={International Journal of Computer Vision},
  volume={130},
  number={9},
  pages={2337--2348},
  year={2022},
  publisher={Springer}
}

@inproceedings{li2025seeking,
  author    = {Li, Aodi and Zhuang, Liansheng and Long, Xiao and Yao, Minghong and Wang, Shafei},
  title     = {Seeking Consistent Flat Minima for Better Domain Generalization via Refining Loss Landscapes},
  booktitle = {Proceedings of the IEEE/CVF Conference on Computer Vision and Pattern Recognition},
  pages     = {15349--15359},
  year      = {2025}
}

@inproceedings{ballas2025gradient,
  author    = {Ballas, Aristotelis and Diou, Christos},
  title     = {Gradient-Guided Annealing for Domain Generalization},
  booktitle = {Proceedings of the IEEE/CVF Conference on Computer Vision and Pattern Recognition},
  pages     = {20558--20568},
  year      = {2025}
}

@inproceedings{wei2025indirect,
  author    = {Wei, Wei and Li, Zixiong and Yan, Jing and Shao, Mingwen and Li, Lin},
  title     = {Indirect Alignment and Relationship Preservation for Domain Generalization},
  booktitle = {Proceedings of the Thirty-Fourth International Joint Conference on Artificial Intelligence},
  pages     = {2054--2062},
  year      = {2025},
  doi       = {10.24963/ijcai.2025/229}
}

@inproceedings{wang2025rethinking,
  author    = {Wang, Zhenbin and Zhang, Lei and Huang, Wei and Zhang, Zhao and Wang, Zizhou},
  title     = {Rethinking Out-of-Distribution Detection and Generalization with Collective Behavior Dynamics},
  booktitle = {Advances in Neural Information Processing Systems},
  volume    = {38},
  pages     = {88401--88450},
  year      = {2025}
}

@inproceedings{thomas2025latent,
  author    = {Thomas, Xavier and Ghadiyaram, Deepti},
  title     = {What's in a Latent? Leveraging Diffusion Latent Space for Domain Generalization},
  booktitle = {Proceedings of the IEEE/CVF International Conference on Computer Vision},
  pages     = {2183--2194},
  year      = {2025}
}

@inproceedings{huang2024domainfusion,
  title     = {DomainFusion: Generalizing to Unseen Domains with Latent Diffusion Models},
  author    = {Huang, Yuyang and Chen, Yabo and Liu, Yuchen and Zhang, Xiaopeng and Dai, Wenrui and Xiong, Hongkai and Tian, Qi},
  booktitle = {European Conference on Computer Vision},
  pages     = {480--498},
  year      = {2024},
  publisher = {Springer},
  doi       = {10.1007/978-3-031-72940-9_27}
}

@inproceedings{zhuang2024time,
  title     = {Time-Varying LoRA: Towards Effective Cross-Domain Fine-Tuning of Diffusion Models},
  author    = {Zhuang, Zhan and Zhang, Yulong and Wang, Xuehao and Lu, Jiangang and Wei, Ying and Zhang, Yu},
  booktitle = {Advances in Neural Information Processing Systems},
  volume    = {37},
  pages     = {73920--73951},
  year      = {2024},
  doi       = {10.52202/079017-2351}
}

@inproceedings{noori2025fds,
  title     = {{FDS}: Feedback-Guided Domain Synthesis with Multi-Source Conditional Diffusion Models for Domain Generalization},
  author    = {Noori, Mehrdad and Cheraghalikhani, Milad and Bahri, Ali and Vargas Hakim, Gustavo Adolfo and Osowiechi, David and Yazdanpanah, Moslem and Ben Ayed, Ismail and Desrosiers, Christian},
  booktitle = {Proceedings of the IEEE/CVF Winter Conference on Applications of Computer Vision},
  pages     = {8504--8514},
  year      = {2025},
  doi       = {10.1109/WACV61041.2025.00824}
}

@article{choi2025trident,
  title   = {{TRIDENT}: Text-Free Data Augmentation Using Image Embedding Decomposition for Domain Generalization},
  author  = {Choi, Yoonyoung and Yu, Geunhyeok and Hwang, Hyoseok},
  journal = {IEEE Access},
  volume  = {13},
  pages   = {139816--139830},
  year    = {2025},
  doi     = {10.1109/ACCESS.2025.3596371}
}

@inproceedings{wang2025promea,
  title     = {{ProMEA}: Prompt-driven Expansion and Alignment for Single Domain Generalization},
  author    = {Wang, Yunyun and Guo, Yi and Liu, Xiaodong and Chen, Songcan},
  booktitle = {Proceedings of the Thirty-Fourth International Joint Conference on Artificial Intelligence},
  pages     = {2018--2026},
  year      = {2025}
}

@inproceedings{foret2021sharpness,
  title     = {Sharpness-Aware Minimization for Efficiently Improving Generalization},
  author    = {Foret, Pierre and Kleiner, Ariel and Mobahi, Hossein and Neyshabur, Behnam},
  booktitle = {International Conference on Learning Representations},
  year      = {2021}
}

@inproceedings{hu2022lora,
  title     = {{LoRA}: Low-Rank Adaptation of Large Language Models},
  author    = {Hu, Edward J. and Shen, Yelong and Wallis, Phillip and Allen-Zhu, Zeyuan and Li, Yuanzhi and Wang, Shean and Wang, Lu and Chen, Weizhu},
  booktitle = {International Conference on Learning Representations},
  year      = {2022}
}

@article{wu2023hpsv2,
  title   = {Human Preference Score v2: A Solid Benchmark for Evaluating Human Preferences of Text-to-Image Synthesis},
  author  = {Wu, Xiaoshi and Hao, Yiming and Sun, Keqiang and Chen, Yixiong and Zhu, Feng and Zhao, Rui and Li, Hongsheng},
  journal = {arXiv preprint arXiv:2306.09341},
  year    = {2023}
}

@article{oquab2024dinov2,
  title   = {{DINOv2}: Learning Robust Visual Features without Supervision},
  author  = {Oquab, Maxime and Darcet, Timoth{\'e}e and Moutakanni, Th{\'e}o and Vo, Huy and Szafraniec, Marc and Khalidov, Vasil and others},
  journal = {Transactions on Machine Learning Research},
  year    = {2024}
}

@inproceedings{zhang2018perceptual,
  title     = {The Unreasonable Effectiveness of Deep Features as a Perceptual Metric},
  author    = {Zhang, Richard and Isola, Phillip and Efros, Alexei A. and Shechtman, Eli and Wang, Oliver},
  booktitle = {Proceedings of the IEEE Conference on Computer Vision and Pattern Recognition},
  pages     = {586--595},
  year      = {2018}
}

@inproceedings{song2021ddim,
  title     = {Denoising Diffusion Implicit Models},
  author    = {Song, Jiaming and Meng, Chenlin and Ermon, Stefano},
  booktitle = {International Conference on Learning Representations},
  year      = {2021}
}

@inproceedings{he2016deep,
  title     = {Deep Residual Learning for Image Recognition},
  author    = {He, Kaiming and Zhang, Xiangyu and Ren, Shaoqing and Sun, Jian},
  booktitle = {Proceedings of the IEEE Conference on Computer Vision and Pattern Recognition},
  pages     = {770--778},
  year      = {2016}
}

@article{EKPC,
  title={EKPC: Elastic Knowledge Preservation and Compensation for Class-Incremental Learning: H. Wang et al.},
  author={Wang, Huaijie and Cheng, De and He, Lingfeng and Li, Yan and Li, Jie and Wang, Nannan and Gao, Xinbo},
  journal={International Journal of Computer Vision},
  volume={134},
  number={5},
  pages={238},
  year={2026},
  publisher={Springer}
}

@inproceedings{he2026harnessing,
  title={Harnessing textual semantic priors for knowledge transfer and refinement in clip-driven continual learning},
  author={He, Lingfeng and Cheng, De and Xu, Di and Wang, Huaijie and Wang, Nannan},
  booktitle={Proceedings of the AAAI Conference on Artificial Intelligence},
  volume={40},
  number={26},
  pages={21645--21653},
  year={2026}
}

@inproceedings{DoRA,
title={Task-Driven Subspace Decomposition for Knowledge Sharing and Isolation in LoRA-based Continual Learning},
author={Lingfeng He and De Cheng and Huaijie Wang and Xiaofeng Zhu and Xi Yang and Nannan Wang and Xinbo Gao},
booktitle={Forty-third International Conference on Machine Learning},
year={2026}
}

@inproceedings{li2026few,
  title={Few-Shot Hybrid Incremental Learning: Continually Learning under Data Scarcity and Task Uncertainty},
  author={Li, Yan and Shi, Yuzhu and Zhou, Kan and Zhang, Shu and He, Diqi and Zhang, Dingwen and Han, Junwei},
  booktitle={Proceedings of the IEEE/CVF Conference on Computer Vision and Pattern Recognition},
  pages={32334--32344},
  year={2026}
}

@article{CBCM,
  title={Dual-Branch Cross-Projection Debiasing through Diffusion-based Disentanglement},
  author={Zhao, Xiangqian and Jiang, Xinyang and Xu, Zhipeng and He, Lingfeng and Wang, Zilong and Li, Dongsheng and Cheng, De and Wang, Nannan},
  journal={arXiv preprint arXiv:2606.24161},
  year={2026}
}

@inproceedings{SIKD,
  title     = {Symbiosis-Inspired Knowledge Distillation for Incremental Object Detection},
  author    = {Zeng, Mingyue and Cheng, De and Xu, Zhipeng and Wang, Huaijie and Wang, Nannan and Gao, Xinbo},
  booktitle = {Proceedings of the 43rd International Conference on Machine Learning},
  year      = {2026}
}

@article{CKAA,
  title={Ckaa: Cross-subspace knowledge alignment and aggregation for robust continual learning},
  author={He, Lingfeng and Cheng, De and Ma, Zhiheng and Wang, Huaijie and Zhang, Dingwen and Wang, Nannan and Gao, Xinbo},
  journal={arXiv preprint arXiv:2507.09471},
  year={2025}
}

\begin{IEEEbiography}[{\includegraphics[width=1in,height=1.25in,clip,keepaspectratio]{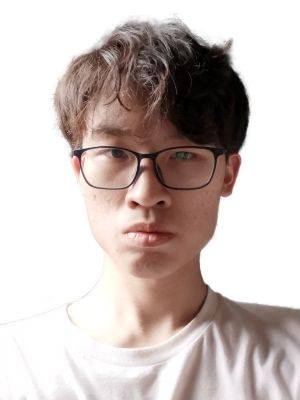}}]{Zhipeng Xu}
received the B.Eng. and M.Eng. degrees in Information and Communication Engineering from Xidian University, Xi’an, China, in 2023 and 2026, respectively. He is currently pursuing the Ph.D. degree at The Hong Kong University of Science and Technology. His research interests include domain generalization, parameter-efficient adaptation of foundation models, multimodal large language models, and AI agents.
\end{IEEEbiography}
\vspace{-5.5mm}

\begin{IEEEbiography}[{\includegraphics[width=1in,height=1.25in,clip,keepaspectratio]{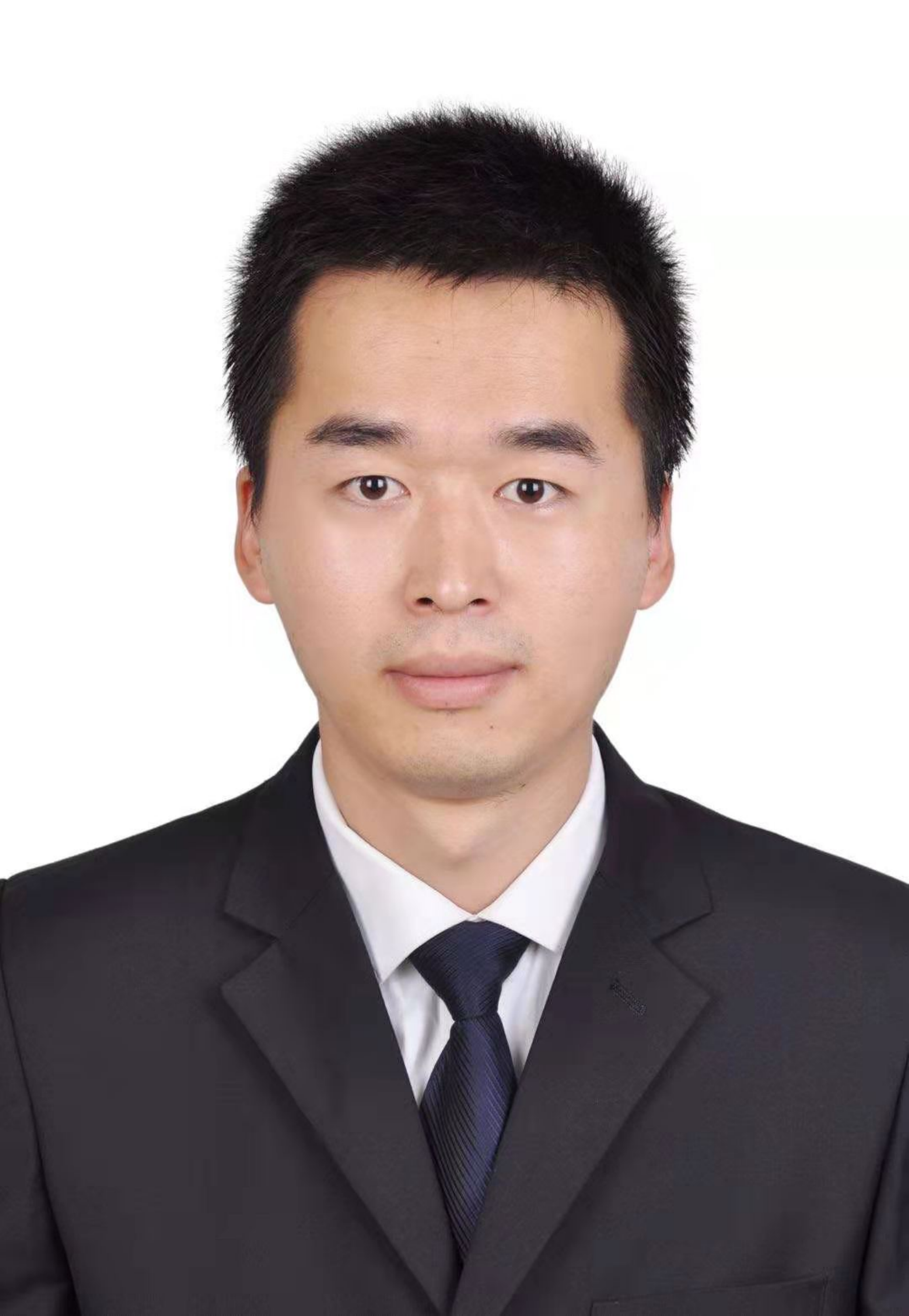}}]{De Cheng} is an associate professor with the School of Telecommunications Engineering, Xidian University, China. He received the B.S. and Ph.D. degrees from Xi'an Jiaotong University, Xi'an, China, in 2011 and 2017, respectively. From 2015 to 2017, he was a visiting scholar at Carnegie Mellon University, Pittsburgh, USA. His research interests include pattern recognition, machine learning, and multimedia analysis.
\end{IEEEbiography}
\vspace{-5.5mm}

\begin{IEEEbiography}[{\includegraphics[width=1in,height=1.25in,clip,keepaspectratio]{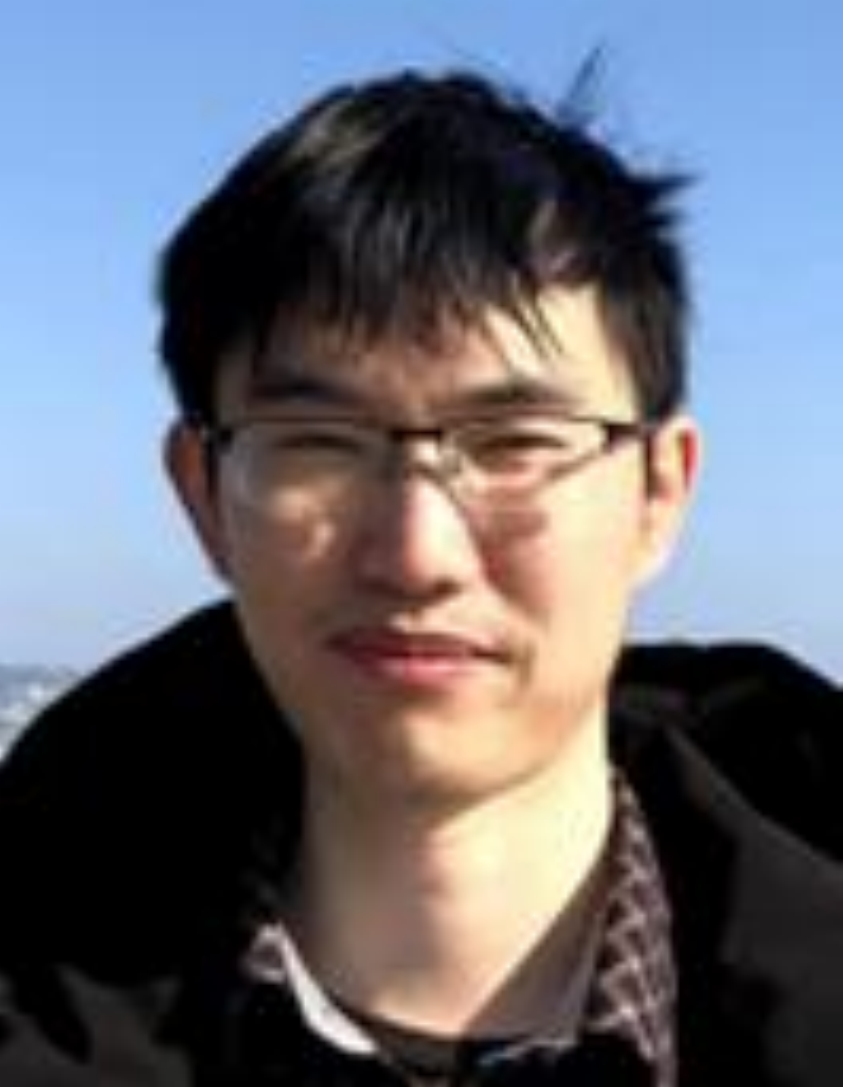}}]{Xinyang Jiang} received B.E. from Zhejiang University in 2012 and Ph.D. from Zhejiang University in 2017. He is currently a researcher at Microsoft Research Asia. Before joining MSRA, he was a researcher from Tencent Youtu Lab. His main research field is computer vision, including person Re-identification, vector graphics recognition and medical image understanding.
\end{IEEEbiography}
\vspace{-5.5mm}

\begin{IEEEbiography}[{\includegraphics[width=1in,height=1.23in,clip,keepaspectratio]{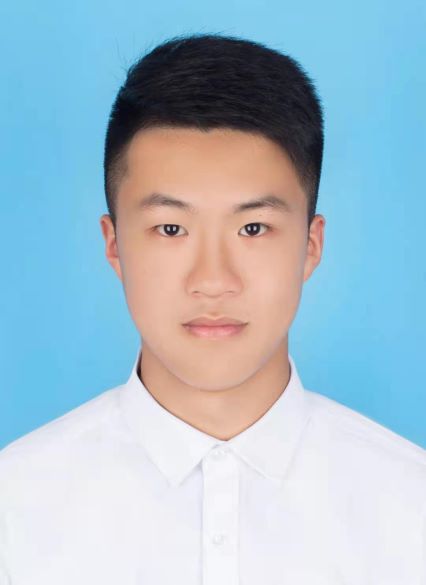}}]{Lingfeng He}
received the B.Sc. and M.Eng degree from Xidian
University, Xi'an, China, in 2023 and 2026, respectively. He is currently
pursuing his Ph.D. degree in Electronic and Computer Engineering in the Hong Kong University of Science and Technology. His research interests include continual learning, parameter-efficient fine-tuning and person ReID.
\end{IEEEbiography}
\vspace{-5.5mm}

\begin{IEEEbiography}[{\includegraphics[width=1in,height=1.25in,clip,keepaspectratio]{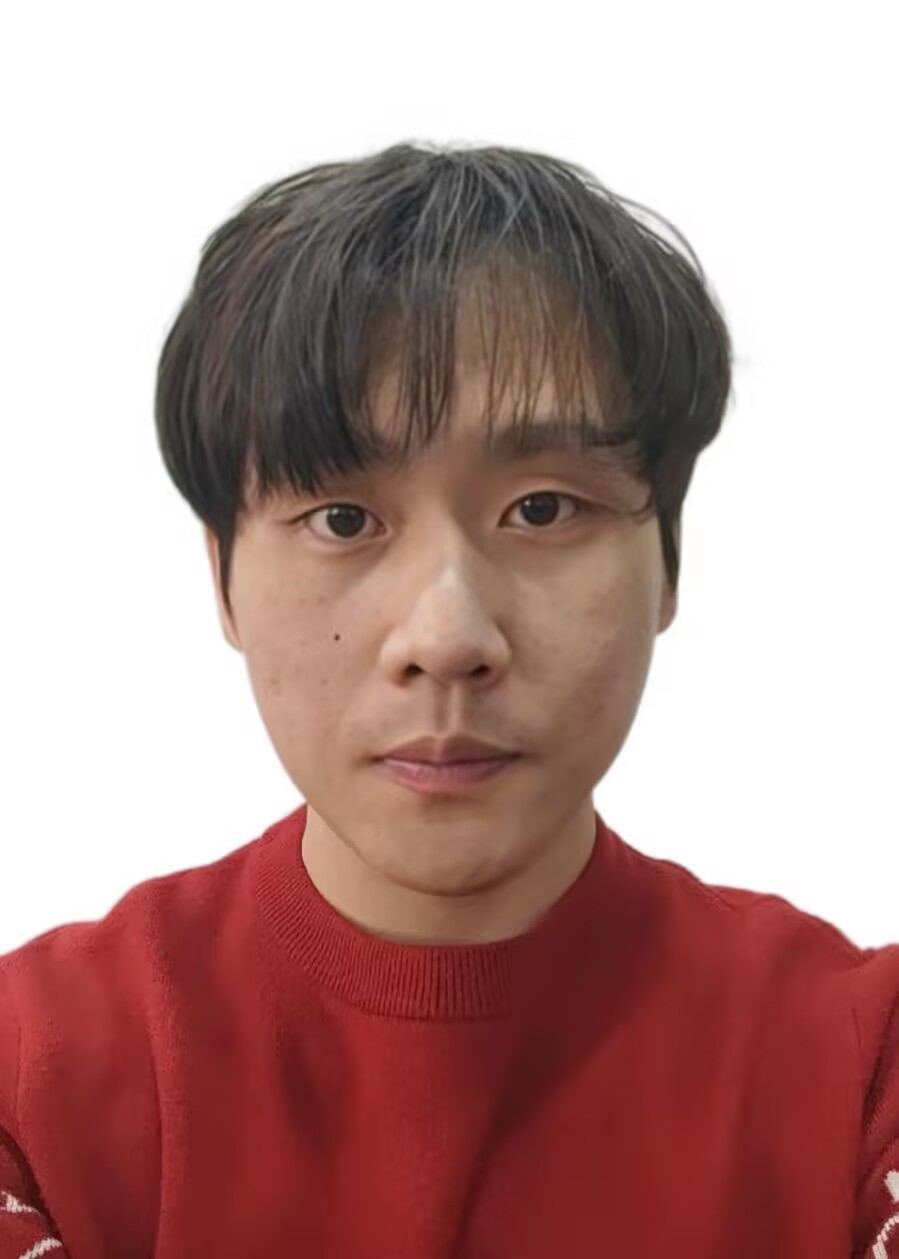}}]{Huaijie Wang}
received the B.Sc. degree from Xidian
University, Xi'an, China, in 2024. He is currently
pursuing his Ph.D. degree in School of Electronic Engineering in Xidian University. His research interest is continual learning.
\end{IEEEbiography}
\vspace{-5.5mm}

\begin{IEEEbiography}[{\includegraphics[width=1in,height=1.25in,clip,keepaspectratio]{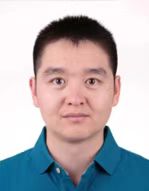}}]{Dongsheng Li} received B.E. from University of Science and Technology of China in 2007 and Ph.D. from Fudan University in 2012. He is now a principal research manager with Microsoft Research Asia (MSRA) since February 2020.  Before joining MSRA, he was a research staff member with IBM Research – China  since April 2015. He is also an adjunct professor with  School of Computer Science, Fudan University, Shanghai, China. His research interests include recommender systems and machine learning applications. His work on cognitive recommendation engine won the 2018 IBM Corporate Award.
\end{IEEEbiography}
\vspace{-5.5mm}

\begin{IEEEbiography}[{\includegraphics[width=1in,height=1.25in,clip,keepaspectratio]{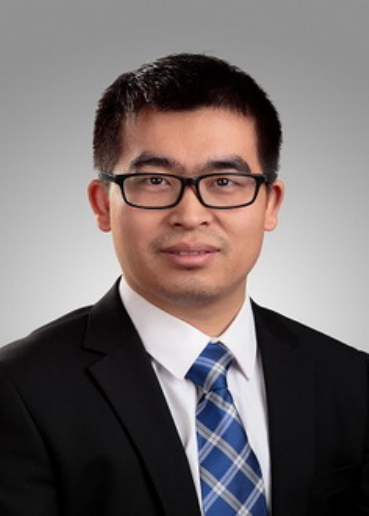}}]{Nannan Wang}
(Senior Member, IEEE) received the B.Sc. degree in information and computation science from the Xi'an University of Posts and Telecommunications in 2009 and the Ph.D. degree in information and telecommunications engineering from Xidian University in 2015. From September 2011 to September 2013, he was a Visiting Ph.D. Student with the University of Technology, Sydney, NSW, Australia. He is currently a Professor with the State Key Laboratory of Integrated Services Networks, Xidian University. He has published over 100 articles in refereed journals and proceedings, including IEEE T-PAMI, IJCV, CVPR, ICCV, etc. His current research interests include computer vision and machine learning.
\end{IEEEbiography}
\vspace{-5.5mm}

\begin{IEEEbiography}[{\includegraphics[width=1in,height=1.25in,clip,keepaspectratio]{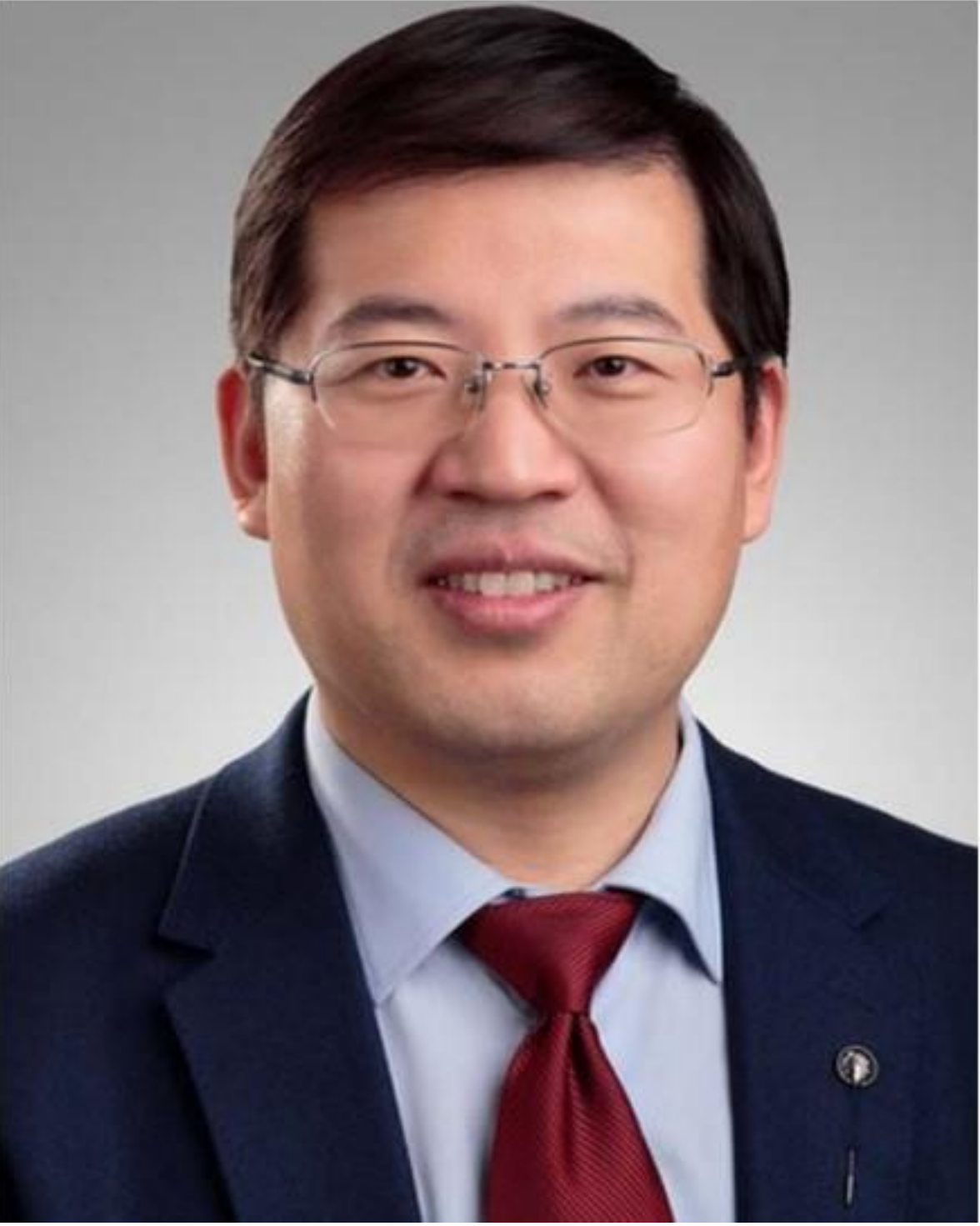}}]{Xinbo Gao}
(M'02-SM'07-F'24) received the B.Eng., M.Sc. and Ph.D. degrees in electronic engineering, signal and information processing from Xidian University, Xi’an, China, in 1994, 1997, and 1999, respectively. From 1997 to 1998, he was a research fellow at the Department of Computer Science, Shizuoka University, Shizuoka, Japan. From 2000 to 2001, he was a post-doctoral research fellow at the Department of Information Engineering, the Chinese University of Hong Kong, Hong Kong.
Since 1999, he has been at the School of Electronic Engineering, Xidian University and now he is a Professor of Pattern Recognition and Intelligent System of Xidian University. Since 2020, he has been also a Professor of Computer Science and Technology of Chongqing University of Posts and Telecommunications. His current research interests include computer vision, machine learning and pattern recognition. He has published seven books and around 300 technical articles in refereed journals and proceedings. Prof. Gao is on the Editorial Boards of several journals, including Signal Processing (Elsevier) and Neurocomputing (Elsevier). He served as the General Chair/Co-Chair, Program Committee Chair/Co-Chair, or PC Member for around 30 major international conferences. He is a Fellow of the IEEE, IET, AAIA, CIE, CCF, and CAAI.
\end{IEEEbiography}

\appendices
\section{Theoretical Analysis}\label{appx:theory}
\subsection{Notation and Regularity Conditions}
\label{app:notation_and_assumptions}

\noindent Before presenting the proofs, we collect the distributional
objects and regularity conditions used in the theoretical analysis. Unless
otherwise specified, all notation follows
Secs.~\ref{sec:preliminaries}, and
\ref{sec:classifier_guided_generation}.

\vspace{2.0mm}

\noindent\textbf{SDG setting and diffusion notation.}
PAPT++ is trained on labeled samples from the source domain,
$\mathcal{D}^{\mathcal{S}}=
\{(\mathbf{x}_i^s,y_i^s)\}_{i=1}^{N_s}.
$ We evaluate the learned model on an unseen target domain \(\mathcal{D}^{\mathcal{T}}\). Consistent with the single-domain
generalization setting, no data from \(\mathcal{D}^{\mathcal{T}}\) are
used during training.
Let \(\mathcal{Z}\subseteq\mathbb{R}^{d_z}\) denote the latent space of the pretrained VAE. As defined in Sec.~\ref{sec:preliminaries}, \(\mathcal{F}_E\) and \(\mathcal{F}_D\) denote the frozen VAE encoder
and decoder, respectively. For an input image \(\mathbf{x}\),
\(\mathbf{z}_0=\mathcal{F}_E(\mathbf{x})\) denotes its latent
representation before noise is added in the forward diffusion process.
For class \(k\), the class-level text prompt and its corresponding text
embedding are defined as:
\begin{equation}
    \mathbf{t}_{k,\mathrm{ref}}^c
    =
    \text{``a photo of a [class]''},
    \qquad
    \mathbf{T}_{k,\mathrm{ref}}
    =
    \boldsymbol{\tau}_{\boldsymbol{\psi}}
    (\mathbf{t}_{k,\mathrm{ref}}^c),
    \label{eq:app_text_condition}
\end{equation}
consistent with Eq.~\ref{eq:prompt}. 
After CSRL, the reference-learning LoRA parameters
\(\boldsymbol{\phi}_{\mathrm{ref}}\) are merged into the
pretrained U-Net backbone to obtain the fixed
reference-adapted backbone
\(\bar{\boldsymbol{\omega}}_{\mathrm{ref}}\).
During CADS,
\(\bar{\boldsymbol{\omega}}_{\mathrm{ref}}\) and the
text-encoder parameters \(\boldsymbol{\psi}\) remain frozen,
and only the self-attention LoRA parameters
\(\boldsymbol{\phi}\) are optimized. The LoRA parameters used
in synthesis round \(r\) are denoted by
\(\boldsymbol{\phi}^{(r)}\). For simplicity, we write:
\begin{equation}
\boldsymbol{\epsilon}_{\boldsymbol{\phi}}
=
\boldsymbol{\epsilon}_{
\bar{\boldsymbol{\omega}}_{\mathrm{ref}},
\boldsymbol{\phi}}
\label{eq:app_noise_predictor_shorthand}
\end{equation}
throughout the following analysis.

Let \(T\) denote the total number of diffusion steps. Given the
variance schedule \(\{\beta_t\}_{t=1}^{T}\), we define
\(\alpha_t=1-\beta_t\) and
\(\bar{\alpha}_t=\prod_{s=1}^{t}\alpha_s\). The forward diffusion
process progressively adds Gaussian noise to the clean latent
\(\mathbf{z}_0\) according to:
\begin{equation}
    q(\mathbf{z}_t\mid\mathbf{z}_{t-1})
    =
    \mathcal{N}\!\left(
        \mathbf{z}_t;
        \sqrt{\alpha_t}\,\mathbf{z}_{t-1},
        \beta_t\mathbf{I}
    \right).
\end{equation}
As shown in Eq.~\ref{eq:forward_diffusion_noising}, the noisy latent
at any diffusion step \(t\) can be sampled directly from
\(\mathbf{z}_0\) as:
\begin{equation}
    \mathbf{z}_t
    =
    \sqrt{\bar{\alpha}_t}\,\mathbf{z}_0
    +
    \sqrt{1-\bar{\alpha}_t}\,\boldsymbol{\epsilon},
    \qquad
    \boldsymbol{\epsilon}
    \sim\mathcal{N}(\mathbf{0},\mathbf{I}).
    \label{eq:app_forward_noising}
\end{equation}
Equivalently,
\begin{equation}
    q(\mathbf{z}_t\mid\mathbf{z}_0)
    =
    \mathcal{N}\!\left(
        \mathbf{z}_t;
        \sqrt{\bar{\alpha}_t}\,\mathbf{z}_0,
        (1-\bar{\alpha}_t)\mathbf{I}
    \right).
\end{equation}

\vspace{2.0mm}

\noindent\textbf{Reference and generated latent distributions.}
For class \(k\), let
\(\mathcal{A}_k=\{\mathbf{a}_{k,m}\}_{m=1}^{M_a}\)
denote the semantic reference set constructed in
Eq.~\ref{eq:reference_set}. Following
Eq.~\ref{eq:reference_latent}, we encode each reference image as:
\begin{equation}
    \mathbf{z}_{k,m}^{a}
    =
    \mathcal{F}_E(\mathbf{a}_{k,m}),
    \label{eq:app_reference_latent}
\end{equation}
where \(\mathbf{z}_{k,m}^{a}\) is the latent representation of the \(m\)-th reference image before forward noising. The encoded reference set defines the empirical class-conditional measure:
\begin{equation}
    \widehat{P}_A^k
    =
    \frac{1}{M_a}
    \sum_{m=1}^{M_a}
    \delta_{\mathbf{z}_{k,m}^{a}},
    \label{eq:app_empirical_reference_distribution}
\end{equation}
where \(\delta_{\mathbf z}\) denotes the Dirac probability measure at
\(\mathbf z\).

For a given LoRA parameter
\(\boldsymbol{\phi}\in\Phi\), the text-conditioned reverse diffusion process defines the trajectory density:
\begin{equation}
    p_{\boldsymbol{\phi}}^k(\mathbf{z}_{0:T})
    =
    p_{\mathrm{prior}}(\mathbf{z}_T)
    \prod_{t=1}^{T}
    p_{\boldsymbol{\phi}}
    \left(
        \mathbf{z}_{t-1}
        \mid
        \mathbf{z}_t,
        \mathbf{T}_{k,\mathrm{ref}}
    \right),
    \label{eq:app_reverse_trajectory}
\end{equation}
where
\begin{equation}
    p_{\mathrm{prior}}(\mathbf{z}_T)
    =
    \mathcal{N}
    \left(
        \mathbf{z}_T;
        \mathbf{0},
        \mathbf{I}
    \right).
\end{equation}
Under the fixed variance reverse process parameterization considered in the analysis, each reverse transition is given by:
\begin{equation}
\begin{aligned}
    &p_{\boldsymbol{\phi}}
    \left(
        \mathbf{z}_{t-1}
        \mid
        \mathbf{z}_t,
        \mathbf{T}_{k,\mathrm{ref}}
    \right)
    \\
    &\quad =
    \mathcal{N}\!\left(
        \mathbf{z}_{t-1};
        \boldsymbol{\mu}_{\boldsymbol{\phi}}
        \left(
            \mathbf{z}_t,
            t,
            \mathbf{T}_{k,\mathrm{ref}}
        \right),
        \sigma_t^2\mathbf{I}
    \right),
    \qquad
    \sigma_t^2>0,
\end{aligned}
\label{eq:app_reverse_transition}
\end{equation}
where the reverse mean
\(\boldsymbol{\mu}_{\boldsymbol{\phi}}\) is parameterized through the
noise-prediction network
\(\boldsymbol{\epsilon}_{\boldsymbol{\phi}}\).
We define \(P_{\boldsymbol{\phi}}^k\) as the
\(\mathbf{z}_0\)-marginal distribution of
Eq.~\ref{eq:app_reverse_trajectory}. Its density is:
\begin{equation}
    p_{\boldsymbol{\phi}}^k(\mathbf{z}_0)
    =
    \int
    p_{\boldsymbol{\phi}}^k(\mathbf{z}_{0:T})
    \,\mathrm{d}\mathbf{z}_{1:T}.
    \label{eq:app_generated_latent_marginal}
\end{equation}
Thus, \(P_{\boldsymbol{\phi}}^k\) is the class-conditional latent
distribution generated by the LoRA-adapted diffusion model under the
text condition \(\mathbf{T}_{k,\mathrm{ref}}\). The distributions
attainable by varying the LoRA parameters form the generator-induced class-conditional family:
\begin{equation}
    \mathcal{P}_{\Phi}^k
    =
    \left\{
        P_{\boldsymbol{\phi}}^k:
        \boldsymbol{\phi}\in\Phi
    \right\}.
    \label{eq:app_generator_family}
\end{equation}

Because the reverse transitions in
Eq.~\ref{eq:app_reverse_transition} have nondegenerate Gaussian
covariances, \(P_{\boldsymbol{\phi}}^k\) is absolutely continuous with
respect to the Lebesgue measure. In particular, it assigns zero
probability to every individual latent point. By contrast,
\(\widehat{P}_A^k\) assigns positive probability to each encoded
reference latent. Therefore,
\(\widehat{P}_A^k\) is not absolutely continuous with respect to
\(P_{\boldsymbol{\phi}}^k\), and
\(
    D_{\mathrm{KL}}
    \left(
        \widehat{P}_A^k
        \,\middle\|\,
        P_{\boldsymbol{\phi}}^k
    \right)
    =
    +\infty.
    \label{eq:app_empirical_generated_kl}
\)

To remove this discrete--continuous support mismatch from the distributional comparison used in the analysis, we introduce the Gaussian-smoothed reference distribution:
\begin{equation}
\begin{aligned}
    P_A^k
    &=
    \widehat{P}_A^k
    *
    \mathcal{N}(\mathbf{0},\sigma_A^2\mathbf{I}) \\
    &=
    \frac{1}{M_a}
    \sum_{m=1}^{M_a}
    \mathcal{N}\!\left(
        \mathbf{z}_{k,m}^{a},
        \sigma_A^2\mathbf{I}
    \right),
    \qquad
    \sigma_A>0,
    \label{eq:app_smoothed_reference_distribution}
\end{aligned}
\end{equation}
where \(*\) denotes convolution and \(\sigma_A>0\) is a fixed smoothing bandwidth that controls the spread of each Gaussian component around the corresponding reference latent. As \(\sigma_A\downarrow0\), \(P_A^k\) converges weakly to the empirical reference measure \(\widehat{P}_A^k\).
Equivalently, a latent \(\mathbf{z}_0^a\sim P_A^k\) can be generated by sampling:
\begin{equation}
    M
    \sim
    \operatorname{Unif}\{1,\ldots,M_a\},
    \qquad
    \boldsymbol{\xi}
    \sim
    \mathcal{N}(\mathbf{0},\sigma_A^2\mathbf{I}),
\end{equation}
independently, and setting:
\begin{equation}
    \mathbf{z}_0^a
    =
    \mathbf{z}_{k,M}^{a}
    +
    \boldsymbol{\xi}.
\end{equation}

Based on the smoothed reference distribution \(P_A^k\), we define the following class-wise denoising objective as:
\begin{equation}
\begin{aligned}
    \mathcal{L}_{\mathrm{den}}^k
    (\boldsymbol{\phi};\mathcal{A}_k)
    =
    \mathbb{E}_{\substack{
        \mathbf{z}_0^a\sim P_A^k,\;
        t\sim\nu,\;
        \boldsymbol{\epsilon}\sim
        \mathcal{N}(\mathbf{0},\mathbf{I})
    }}
    \left[
        \left\|
        \boldsymbol{\epsilon}
        -
        \boldsymbol{\epsilon}_{\boldsymbol{\phi}}
        \left(
            \mathbf{z}_t^a,
            t,
            \mathbf{T}_{k,\mathrm{ref}}
        \right)
        \right\|_2^2
    \right],
\end{aligned}
\label{eq:app_classwise_denoising_loss}
\end{equation}
where \(\nu\) is the timestep-sampling distribution used during
training and:
\begin{equation}
    \mathbf{z}_t^a
    =
    \sqrt{\bar{\alpha}_t}\,\mathbf{z}_0^a
    +
    \sqrt{1-\bar{\alpha}_t}\,
    \boldsymbol{\epsilon}.
    \label{eq:app_reference_forward_noising}
\end{equation}

The implemented round-wise loss
\(\mathcal{L}_{\mathrm{den}}^{(r)}\) in
Eq.~\ref{eq:reference_denoising_loss} samples encoded
reference latents from the empirical measure
\(\widehat{P}_A^k\). For the distributional analysis, we
instead use its Gaussian-smoothed counterpart \(P_A^k\) in
Eq.~\ref{eq:app_classwise_denoising_loss}. This smoothing
removes the discrete--continuous support mismatch and makes
the KL comparison well-defined. Thus,
Eq.~\ref{eq:app_classwise_denoising_loss} is a smoothed
analytical counterpart of the implemented reference-denoising
objective, while the actual optimization continues to use
Eq.~\ref{eq:reference_denoising_loss}.
Starting from \(\mathbf{z}_0\sim P_A^k\), the forward diffusion process defines the reference trajectory density:
\begin{equation}
    q_A^k(\mathbf{z}_{0:T})
    =
    p_A^k(\mathbf{z}_0)
    \prod_{t=1}^{T}
    q(\mathbf{z}_t\mid\mathbf{z}_{t-1}).
    \label{eq:app_reference_trajectory}
\end{equation}
For the proof of
Theorem~\ref{thm:semantic_consistency}, we consider the trajectory-level KL divergence:
\begin{equation}
    \mathcal{K}_{\mathrm{traj}}^k(\boldsymbol{\phi})
    =
    D_{\mathrm{KL}}
    \left(
        q_A^k(\mathbf{z}_{0:T})
        \,\middle\|\,
        p_{\boldsymbol{\phi}}^k(\mathbf{z}_{0:T})
    \right).
    \label{eq:app_trajectory_kl}
\end{equation}
By construction, \(P_A^k\) and \(P_{\boldsymbol{\phi}}^k\) are the \(\mathbf{z}_0\)-marginals of
\(q_A^k(\mathbf{z}_{0:T})\) and
\(p_{\boldsymbol{\phi}}^k(\mathbf{z}_{0:T})\).
Since marginalizing out the intermediate states \(\mathbf{z}_{1:T}\) cannot increase the KL divergence, we obtain:
\begin{equation}
    D_{\mathrm{KL}}
    \left(
        P_A^k
        \,\middle\|\,
        P_{\boldsymbol{\phi}}^k
    \right)
    \leq
    \mathcal{K}_{\mathrm{traj}}^k(\boldsymbol{\phi}).
    \label{eq:app_marginal_kl}
\end{equation}
This is an application of the data-processing inequality to the
projection
\(\pi_0(\mathbf{z}_{0:T})=\mathbf{z}_0\).

\vspace{2.0mm}

\noindent\textbf{Generator-induced semantic ambiguity set.}
For a semantic tolerance \(\delta_{\mathrm{sem}}>0\), we first define the
class-wise feasible LoRA parameter set as:
\begin{equation}
    \Phi_{\mathrm{sem}}^k(\delta_{\mathrm{sem}})
    =
    \left\{
        \boldsymbol{\phi}\in\Phi
        \;\middle|\;
        \mathcal{L}_{\mathrm{den}}^k
        (\boldsymbol{\phi};\mathcal{A}_k)
        \leq
        \delta_{\mathrm{sem}}
    \right\}.
    \label{eq:app_semantic_parameter_set}
\end{equation}
The corresponding generator-induced semantic ambiguity set can be defined as:
\begin{equation}
    \mathcal{B}_{\mathrm{sem}}^k(\delta_{\mathrm{sem}})
    =
    \left\{
        P_{\boldsymbol{\phi}}^k
        :
        \boldsymbol{\phi}
        \in
        \Phi_{\mathrm{sem}}^k(\delta_{\mathrm{sem}})
    \right\}
    \subseteq
    \mathcal{P}_{\Phi}^k.
    \label{eq:app_semantic_ambiguity_set}
\end{equation}
Because CADS uses a single shared LoRA parameter across all classes, we
assume that the joint feasible parameter set is nonempty:
\begin{equation}
    \bigcap_{k=1}^{C}
    \Phi_{\mathrm{sem}}^k(\delta_{\mathrm{sem}})
    \neq
    \varnothing.
    \label{eq:app_joint_semantic_feasibility}
\end{equation}

Eq.~\ref{eq:app_semantic_ambiguity_set} defines a distribution-valued
set obtained by mapping the semantically feasible LoRA parameters to their
induced class-conditional clean-latent distributions. Thus, although the
elements of the set are distributions, semantic feasibility is imposed at
the generator-parameter level through the reference-based denoising
objective. From this perspective, CADS can be interpreted as a model-restricted,
DRO-inspired search for high-risk class-conditional distributions induced
by semantically feasible LoRA parameters. The semantic regularization
term in Eq.~\ref{eq:generation_objective} provides an empirical penalized
counterpart of the hard feasibility constraint in Eq.~\ref{eq:app_semantic_parameter_set}.

\vspace{2.0mm}

\noindent\textbf{Target-domain risk and ambiguity-set coverage.}
Let \(P_T\) denote the joint distribution of images and labels in the unseen target domain \(\mathcal{D}^{\mathcal{T}}\), and let:
\begin{equation}
    \pi_T^k
    =
    P_T(Y=k)
    \label{eq:app_target_class_prior}
\end{equation}
denote the target-domain prior of class \(k\). We use \(P_T^k\) to
represent the corresponding class-conditional latent distribution. More
specifically, for every measurable set \(E\subseteq\mathcal{Z}\),
\begin{equation}
    P_T^k(E)
    =
    P_T
    \left(
        \mathcal{F}_E(\mathbf{X})\in E
        \mid
        Y=k
    \right).
    \label{eq:app_target_latent_distribution}
\end{equation}

For convenience, define the class-wise loss function:
\begin{equation}
    g_{\boldsymbol{\theta},k}(\mathbf{z})
    =
    \ell_{\mathrm{CE}}
    \left(
        \mathbf{f}_{\boldsymbol{\theta}}
        (\mathcal{F}_D(\mathbf{z})),
        k
    \right).
    \label{eq:app_classwise_loss_function}
\end{equation}
For any class-\(k\) latent distribution \(P\), the corresponding classification risk is:
\begin{equation}
    \mathcal{R}_k
    (\mathbf{f}_{\boldsymbol{\theta}};P)
    =
    \mathbb{E}_{\mathbf{z}\sim P}
    \left[
        g_{\boldsymbol{\theta},k}(\mathbf{z})
    \right].
    \label{eq:app_classwise_risk}
\end{equation}
The overall risk on the unseen target domain is then given by
\begin{equation}
    \mathcal{R}_T
    (\mathbf{f}_{\boldsymbol{\theta}})
    =
    \sum_{k=1}^{C}
    \pi_T^k
    \mathcal{R}_k
    (\mathbf{f}_{\boldsymbol{\theta}};P_T^k).
    \label{eq:app_target_risk}
\end{equation}
This is the target-domain risk considered in
Theorem~\ref{thm:target_risk_bound}.

To quantify the discrepancy between each unseen class-conditional distribution and the semantic ambiguity set, we use the total variation distance:
\begin{equation}
    d_{\mathrm{TV}}(P,Q)
    =
    \sup_{E\in\mathfrak{B}(\mathcal{Z})}
    |P(E)-Q(E)|,
    \label{eq:app_total_variation}
\end{equation}
where \(P\) and \(Q\) are distributions on \(\mathcal{Z}\), and
\(\mathfrak{B}(\mathcal{Z})\) denotes the Borel \(\sigma\)-algebra.
For each class \(k\), we define the residual coverage gap as:
\begin{equation}
    \rho_k
    =
    \inf_{P\in
    \mathcal{B}_{\mathrm{sem}}^k(\delta_{\mathrm{sem}})}
    d_{\mathrm{TV}}(P_T^k,P).
    \label{eq:app_coverage_gap}
\end{equation}
Thus, \(\rho_k\) can be seen as the smallest remaining mismatch between the unseen class-conditional distribution \(P_T^k\) and the distributions covered by the semantic ambiguity set. A smaller \(\rho_k\) indicates that the ambiguity set provides better coverage of the corresponding target-domain shift.

\vspace{2.0mm}

\noindent\textbf{Regularity conditions.}
The subsequent proofs use the following conditions.

\vspace{2.0mm}

\noindent\textbf{A1. Fixed-variance noise-prediction parameterization.}
The forward schedule satisfies
\(\beta_t\in(0,1)\) for all \(t=1,\ldots,T\). For \(t=2,\ldots,T\), the forward posterior and the reverse transition
are Gaussian with the same fixed covariance:
\begin{align}
    &q(\mathbf{z}_{t-1}\mid\mathbf{z}_t,\mathbf{z}_0)
    =
    \mathcal{N}
    \left(
        \widetilde{\boldsymbol{\mu}}_t,
        \widetilde{\beta}_t\mathbf{I}
    \right),
    \\
    &p_{\boldsymbol{\phi}}
    (\mathbf{z}_{t-1}\mid
     \mathbf{z}_t,\mathbf{T}_{k,\mathrm{ref}})
    =
    \mathcal{N}
    \left(
        \boldsymbol{\mu}_{\boldsymbol{\phi},t},
        \widetilde{\beta}_t\mathbf{I}
    \right),
\end{align}
where
\begin{equation}
    \widetilde{\beta}_t
    =
    \frac{1-\bar{\alpha}_{t-1}}
         {1-\bar{\alpha}_t}
    \beta_t
    >0.
\end{equation}
The reverse mean follows the standard noise-prediction parameterization through \(\boldsymbol{\epsilon}_{\boldsymbol{\phi}}\).

The reconstruction distribution
\(p_{\boldsymbol{\phi}}
(\mathbf{z}_0\mid\mathbf{z}_1,\mathbf{T}_{k,\mathrm{ref}})\)
has covariance \(\sigma_1^2\mathbf{I}\), with
\(0<\sigma_1^2<\infty\), and its mean is parameterized through the same
noise predictor. Under this parameterization, the reconstruction term at
\(t=1\) and each transition KL term for \(t\geq2\) can be expressed as a
finite, strictly positive, timestep-dependent coefficient multiplying the
corresponding squared noise-prediction error, plus a term independent of
\(\boldsymbol{\phi}\).

The timestep distribution \(\nu\) has full support:
\(\nu(t)>0\) for every \(t\in\{1,\ldots,T\}\). Since \(T\) is finite, if
\(w_t\) denotes the coefficient of the squared noise-prediction error at
timestep \(t\), it follows that:
\begin{equation}
    C_{\nu}
    =
    \max_{1\leq t\leq T}
    \frac{w_t}{\nu(t)}
    <
    \infty.
    \label{eq:app_timestep_weight_constant}
\end{equation}
Thus, the weighted noise-prediction terms arising from the variational
decomposition can be controlled by the denoising objective in
Eq.~\ref{eq:app_classwise_denoising_loss}.
Since \(P_A^k\) is a finite Gaussian mixture and
\(p_{\mathrm{prior}}=\mathcal{N}(\mathbf{0},\mathbf{I})\), the terminal
term satisfies:
\begin{equation}
    \mathbb{E}_{\mathbf{z}_0\sim P_A^k}
    \left[
        D_{\mathrm{KL}}
        \left(
            q(\mathbf{z}_T\mid\mathbf{z}_0)
            \,\middle\|\,
            p_{\mathrm{prior}}
        \right)
    \right]
    <\infty.
    \label{eq:app_terminal_integrability}
\end{equation}
We additionally assume that the squared noise-prediction errors in
Eq.~\ref{eq:app_classwise_denoising_loss} are integrable for every
\(\boldsymbol{\phi}\in\Phi\) considered in the analysis.

\vspace{2.0mm}

\noindent\textbf{A2. Uniformly bounded classification loss.}
Let \(\mathcal{H}\) denote the hypothesis class of  classifiers.
We assume that there exists a constant \(L_{\max}<\infty\), independent of the classifier and the class index, such that, for every \(\mathbf{f}_{\boldsymbol{\theta}}\in\mathcal{H}\) and every class \(k\),
\begin{equation}
    0
    \leq
    \ell_{\mathrm{CE}}
    \left(
        \mathbf{f}_{\boldsymbol{\theta}}
        (\mathcal{F}_D(\mathbf{z})),
        k
    \right)
    \leq
    L_{\max}
    \label{eq:app_bounded_loss}
\end{equation}
holds \(P_T^k\)-almost surely and \(P\)-almost surely for every
\(P\in
\mathcal{B}_{\mathrm{sem}}^k(\delta_{\mathrm{sem}})\).
For the standard cross-entropy loss used in the classifier objective, this condition amounts to assuming that the predicted probability assigned to the labeled class is uniformly bounded below by \(\exp(-L_{\max})\) over the distributions above.
Under this assumption, for any such distribution \(P\),
\begin{equation}
\left|
    \mathcal{R}_k
    (\mathbf{f}_{\boldsymbol{\theta}};P_T^k)
    -
    \mathcal{R}_k
    (\mathbf{f}_{\boldsymbol{\theta}};P)
\right|
\leq
L_{\max}
d_{\mathrm{TV}}(P_T^k,P).
\label{eq:app_tv_risk_difference}
\end{equation}

\vspace{2.0mm}

\noindent\textbf{A3. Round-wise semantic feasibility.}
At synthesis round \(r\), the generator parameters are optimized while the
classifier
\(\mathbf{f}_{\boldsymbol{\theta}^{(r-1)}}\) is held fixed, as in
Eq.~\ref{eq:generation_objective}. As part of
Assumption~\ref{assump:approx_adv_search}, we assume that the resulting
shared LoRA parameter is semantically feasible for every class:
\begin{equation}
    \boldsymbol{\phi}^{(r)}
    \in
    \bigcap_{k=1}^{C}
    \Phi_{\mathrm{sem}}^k(\delta_{\mathrm{sem}}).
    \label{eq:app_roundwise_parameter_feasibility}
\end{equation}
By the definition of the generator-induced semantic ambiguity set, this implies:
\begin{equation}
    P_{\boldsymbol{\phi}^{(r)}}^k
    \in
    \mathcal{B}_{\mathrm{sem}}^k(\delta_{\mathrm{sem}})
    \qquad
    \text{for every }k\in\{1,\ldots,C\}.
    \label{eq:app_roundwise_feasibility}
\end{equation}

Given this round-wise feasibility, we define the class-wise search gap as:
\begin{equation}
\begin{aligned}
    \varepsilon_{{\rm adv},k}^{(r)}
    ={}&
    \sup_{P\in
    \mathcal{B}_{\mathrm{sem}}^k(\delta_{\mathrm{sem}})}
    \mathcal{R}_k
    \left(
        \mathbf{f}_{\boldsymbol{\theta}^{(r-1)}};P
    \right)
    \\
    &-
    \mathcal{R}_k
    \left(
        \mathbf{f}_{\boldsymbol{\theta}^{(r-1)}};
        P_{\boldsymbol{\phi}^{(r)}}^k
    \right).
\end{aligned}
\label{eq:app_classwise_search_gap}
\end{equation}
Since \(P_{\boldsymbol{\phi}^{(r)}}^k\) belongs to the ambiguity set,
\(\varepsilon_{{\rm adv},k}^{(r)}\geq0\).

The only additional assumption here is the round-wise feasibility condition in Eq.~\ref{eq:app_roundwise_parameter_feasibility}, which requires the
shared LoRA parameter obtained at round \(r\) to satisfy the semantic
constraint for every class. Under this condition,
Eq.~\ref{eq:app_classwise_search_gap} simply defines the approximation error
of the CADS search. Specifically,
\(\varepsilon_{{\rm adv},k}^{(r)}\) is the difference between the largest
class-\(k\) risk allowed by the semantic ambiguity set and the risk achieved
by the distribution generated at round \(r\). Therefore,
\(\varepsilon_{{\rm adv},k}^{(r)}=0\) means that CADS reaches the class-wise
worst case, whereas a positive value indicates that it does not fully reach
this worst case. Such a gap can occur because the same LoRA parameter is
optimized for all classes and may not maximize every class-wise risk
simultaneously. It may also result from the finite-step and non-convex
optimization used in CADS.

\vspace{2.0mm}

\noindent\textbf{A4. Conditional class-balanced sampling.}
At synthesis round \(r\), the optimized generator parameter \(\boldsymbol{\phi}^{(r)}\) is held fixed while constructing the generated dataset \(\mathcal{D}_g^{(r)}\). The finite-sample analysis below therefore focuses on the randomness introduced by sampling from the resulting class-conditional distributions.

For each class \(k\), we independently draw \(M_g\) latent samples from the
generator-induced distribution \(P_{\boldsymbol{\phi}^{(r)}}^k\):
\begin{equation}
    \mathbf{z}_{k,j}^{(r)}
    \mid
    \boldsymbol{\phi}^{(r)}
    \overset{\mathrm{ind}}{\sim}
    P_{\boldsymbol{\phi}^{(r)}}^k,
    \qquad
    k=1,\ldots,C,\quad
    j=1,\ldots,M_g.
    \label{eq:app_conditional_sampling}
\end{equation}
The resulting indexed class-balanced latent-label sample is
\begin{equation}
    \mathcal{S}_r
    =
    \left\{
        \left(
            \mathbf{z}_{k,j}^{(r)},k
        \right)
        :
        k=1,\ldots,C,\;
        j=1,\ldots,M_g
    \right\}.
    \label{eq:app_generated_latent_sample}
\end{equation}
The corresponding image-space samples are obtained by decoding the latent
samples:
\[
    \tilde{\mathbf{x}}_{k,j}^{(r)}
    =
    \mathcal{F}_D
    \left(
        \mathbf{z}_{k,j}^{(r)}
    \right),
    \qquad
    k=1,\ldots,C,\quad
    j=1,\ldots,M_g,
\]
as in Eq.~\ref{eq:generated_dataset}. Drawing \(M_g\) samples for every class yields the class-balanced sampling scheme used by CADS.
The class-balanced risk induced by the generator is:
\begin{equation}
    \mathcal{R}_g^{(r)}
    (\mathbf{f}_{\boldsymbol{\theta}})
    =
    \frac{1}{C}
    \sum_{k=1}^{C}
    \mathcal{R}_k
    \left(
        \mathbf{f}_{\boldsymbol{\theta}};
        P_{\boldsymbol{\phi}^{(r)}}^k
    \right),
    \label{eq:app_generated_population_risk}
\end{equation}
and its empirical estimate is given by:
\begin{equation}
    \widehat{\mathcal{R}}_g^{(r)}
    (\mathbf{f}_{\boldsymbol{\theta}})
    =
    \frac{1}{CM_g}
    \sum_{k=1}^{C}
    \sum_{j=1}^{M_g}
    \ell_{\mathrm{CE}}
    \left(
        \mathbf{f}_{\boldsymbol{\theta}}
        (\tilde{\mathbf{x}}_{k,j}^{(r)}),
        k
    \right).
    \label{eq:app_generated_empirical_risk}
\end{equation}
For every fixed
\(\mathbf{f}_{\boldsymbol{\theta}}\in\mathcal{H}\),
the empirical risk is conditionally unbiased:
\begin{equation}
    \mathbb{E}
    \left[
        \widehat{\mathcal{R}}_g^{(r)}
        (\mathbf{f}_{\boldsymbol{\theta}})
        \,\middle|\,
        \boldsymbol{\phi}^{(r)}
    \right]
    =
    \mathcal{R}_g^{(r)}
    (\mathbf{f}_{\boldsymbol{\theta}}).
    \label{eq:app_generated_risk_unbiased}
\end{equation}
Equation~\ref{eq:app_generated_empirical_risk} is precisely the generated-data
term in Eq.~\ref{eq:classifier_objective} before multiplication by \(\mu\).

Finally, define the induced loss class:
\begin{equation}
    \ell_{\mathrm{CE}}\circ\mathcal{H}
    =
    \left\{
        (\mathbf{z},k)
        \mapsto
        \ell_{\mathrm{CE}}
        \left(
            \mathbf{f}_{\boldsymbol{\theta}}
            (\mathcal{F}_D(\mathbf{z})),
            k
        \right)
        :
        \mathbf{f}_{\boldsymbol{\theta}}\in\mathcal{H}
    \right\}.
    \label{eq:app_loss_class}
\end{equation}
Under the conditionally independent class-balanced sampling scheme in
Eq.~\ref{eq:app_conditional_sampling}, the expected Rademacher complexity
of the induced loss class is defined as:
\begin{equation}
\begin{aligned}
    &\mathfrak{R}_{CM_g}^{(r)} 
    (\ell_{\mathrm{CE}}\circ\mathcal{H}) \\
    &=
    \mathbb{E}_{\mathcal{S}_r,\boldsymbol{\sigma}}
    \left[
        \sup_{\mathbf{f}_{\boldsymbol{\theta}}\in\mathcal{H}}
        \left|
        \begin{aligned}
            &\frac{1}{CM_g}
            \sum_{k=1}^{C}
            \sum_{j=1}^{M_g}
            \sigma_{k,j}
            \\
            &\qquad {}\times
            \ell_{\mathrm{CE}}
            \left(
                \mathbf{f}_{\boldsymbol{\theta}}
                (\mathcal{F}_D(\mathbf{z}_{k,j}^{(r)})),
                k
            \right)
        \end{aligned}
        \right|
    \right].
\end{aligned}
\label{eq:app_rademacher_complexity}
\end{equation}
where the expectation over \(\mathcal{S}_r\) is taken according to the
conditionally independent sampling scheme in
Eq.~\ref{eq:app_conditional_sampling}, and the
\(\sigma_{k,j}\) are independent Rademacher random variables that are
independent of \(\mathcal{S}_r\).
\subsection{Proof of Theorem~\ref{thm:semantic_consistency}}

We first state a technical lemma that upper-bounds the trajectory-level KL divergence in terms of the semantic-reference denoising loss.

\begin{lemma}[Control of trajectory-level KL divergence by the denoising loss]
\label{lem:trajectory_kl_control}
Under Condition A1, for each class \(k\), there exist constants
\(\gamma_k>0\) and \(C_k\), both independent of \(\boldsymbol{\phi}\), such that:
\begin{equation}
\mathcal{K}_{\mathrm{traj}}^k(\boldsymbol{\phi})
\le
\gamma_k
\mathcal{L}_{\mathrm{den}}^k
\left(
\boldsymbol{\phi};\mathcal{A}_k
\right)
+
C_k.
\label{eq:app_trajectory_kl_control}
\end{equation}
\end{lemma}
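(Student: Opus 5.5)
The plan is to follow the standard variational (ELBO) decomposition of a trajectory KL for diffusion models, as in DDPM. We write
\[
\mathcal{K}_{\mathrm{traj}}^k(\boldsymbol{\phi})
=\mathbb{E}_{q_A^k}\!\left[\log q_A^k(\mathbf{z}_{0:T})-\log p_{\boldsymbol{\phi}}^k(\mathbf{z}_{0:T})\right].
\]
We then factor $q_A^k$ in reverse order, using the Markov property of the forward chain:
\[
q_A^k(\mathbf{z}_{0:T})=q_A^k(\mathbf{z}_T)\prod_{t=2}^{T}q(\mathbf{z}_{t-1}\mid\mathbf{z}_t,\mathbf{z}_0)\,q_A^k(\mathbf{z}_0\mid\mathbf{z}_1)\cdot\frac{p_A^k(\mathbf{z}_0)}{\cdots}.
\]
Equivalently, we use the standard identity
\[
\mathcal{K}_{\mathrm{traj}}^k=-H(P_A^k)+\mathbb{E}_{P_A^k}\!\left[-\log p_A^k\right]+\ldots
\]
More cleanly, $\mathcal{K}_{\mathrm{traj}}^k=\mathbb{E}_{P_A^k}[\log p_A^k(\mathbf{z}_0)]+\mathbb{E}[-\log p_{\boldsymbol{\phi}}\text{-ELBO terms}]$, and the negative ELBO splits into three pieces. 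The first is the terminal term $\mathbb{E}_{\mathbf{z}_0\sim P_A^k}D_{\mathrm{KL}}(q(\mathbf{z}_T\mid\mathbf{z}_0)\Vert p_{\mathrm{prior}})$. The second is the transition terms $\sum_{t\ge2}\mathbb{E}\,D_{\mathrm{KL}}(q(\mathbf{z}_{t-1}\mid\mathbf{z}_t,\mathbf{z}_0)\Vert p_{\boldsymbol{\phi}}(\mathbf{z}_{t-1}\mid\mathbf{z}_t,\mathbf{T}_{k,\mathrm{ref}}))$. The third is the reconstruction term $-\mathbb{E}\log p_{\boldsymbol{\phi}}(\mathbf{z}_0\mid\mathbf{z}_1,\mathbf{T}_{k,\mathrm{ref}})$. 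The leftover $\mathbb{E}_{P_A^k}[\log p_A^k]=-h(P_A^k)$ is finite, since $P_A^k$ is a finite Gaussian mixture with $\sigma_A>0$: its differential entropy is bounded between that of one component and that plus $\log M_a$. It is also independent of $\boldsymbol{\phi}$, so it is absorbed into $C_k$. The terminal term is finite by Eq.~\ref{eq:app_terminal_integrability} and likewise $\boldsymbol{\phi}$-free.

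Next we handle the transition terms. Under A1 both Gaussians share the covariance $\widetilde{\beta}_t\mathbf{I}$, so each KL equals $\frac{1}{2\widetilde{\beta}_t}\|\widetilde{\boldsymbol{\mu}}_t-\boldsymbol{\mu}_{\boldsymbol{\phi},t}\|^2$. Substituting the noise-prediction parameterization $\boldsymbol{\mu}_{\boldsymbol{\phi},t}=\frac{1}{\sqrt{\alpha_t}}(\mathbf{z}_t-\frac{\beta_t}{\sqrt{1-\bar\alpha_t}}\boldsymbol{\epsilon}_{\boldsymbol{\phi}})$, together with the matching expression for $\widetilde{\boldsymbol{\mu}}_t$ in terms of the true $\boldsymbol{\epsilon}$, gives
\[
w_t\,\mathbb{E}\|\boldsymbol{\epsilon}-\boldsymbol{\epsilon}_{\boldsymbol{\phi}}(\mathbf{z}_t^a,t,\mathbf{T}_{k,\mathrm{ref}})\|^2,\qquad w_t=\frac{\beta_t^2}{2\widetilde{\beta}_t\alpha_t(1-\bar\alpha_t)}.
\]
Here the expectation runs over exactly the pairs $(\mathbf{z}_0^a,\boldsymbol{\epsilon})$ with $\mathbf{z}_0^a\sim P_A^k$ that appear in Eq.~\ref{eq:app_classwise_denoising_loss}. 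For the reconstruction term, the Gaussian log-likelihood with covariance $\sigma_1^2\mathbf{I}$ and mean parameterized by the same predictor becomes $w_1\mathbb{E}\|\boldsymbol{\epsilon}-\boldsymbol{\epsilon}_{\boldsymbol{\phi}}(\mathbf{z}_1^a,1,\cdot)\|^2+\frac{d_z}{2}\log(2\pi\sigma_1^2)$, again with a $\boldsymbol{\phi}$-free constant. Both facts are exactly what A1 asserts, so we only need to record them.

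Finally, writing $e_t(\boldsymbol{\phi})\ge0$ for the per-timestep squared error, we get
\[
\sum_{t=1}^T w_t e_t=\sum_t \frac{w_t}{\nu(t)}\nu(t)e_t\le C_\nu\sum_t\nu(t)e_t=C_\nu\,\mathcal{L}_{\mathrm{den}}^k(\boldsymbol{\phi};\mathcal{A}_k),
\]
using Eq.~\ref{eq:app_timestep_weight_constant}, the full support of $\nu$, and nonnegativity. Setting $\gamma_k=C_\nu>0$ and letting $C_k$ collect the negative entropy, the terminal KL, and the reconstruction normalizer yields Eq.~\ref{eq:app_trajectory_kl_control}.

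The main obstacle is justifying the decomposition rigorously: every term must be finite, or the expectation splitting is illegitimate. We would handle this with the integrability assumptions in A1 together with the finiteness of the Gaussian-mixture entropy. A related care point is making sure the $t=1$ term is expressed with the same noise variable as in the loss, which A1 simply postulates.
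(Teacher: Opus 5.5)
Your proposal is correct and is essentially the paper's own proof. It uses the same reverse-time split of $\mathcal{K}_{\mathrm{traj}}^k$ into $-h(P_A^k)$, the terminal KL, the $t=1$ reconstruction NLL and the equal-covariance transition KLs for $t\ge 2$; it converts each of these into $w_t$ times the squared noise-prediction error; and it takes the same constant $\gamma_k=\max_t w_t/\nu(t)$. The one fix is presentational: delete the two false-start displays before ``More cleanly''. The product ending in $\cdots$ is not an identity, and $-H(P_A^k)+\mathbb{E}_{P_A^k}[-\log p_A^k]$ is identically zero; the clean split $\mathcal{K}_{\mathrm{traj}}^k=\mathbb{E}_{P_A^k}[\log p_A^k]+(\text{negative ELBO})$ is all you need.
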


\begin{IEEEproof}[\textbf{Proof of Lemma~\ref{lem:trajectory_kl_control}}]
For a fixed class \(k\), let \(q_A^k(\mathbf z_{0:T})\) and \(p_{\boldsymbol{\phi}}^k(\mathbf z_{0:T})\) denote the reference forward and parameterized reverse trajectory distributions, respectively.

\vspace{3.0mm}

Using the reverse-time factorization of the conditional forward process, we have:
\begin{equation}
\begin{aligned}
\mathcal K_{\mathrm{traj}}^k(\boldsymbol{\phi})
&={}
-h(P_A^k)
\\
&+
\mathbb E_{\mathbf z_0^a\sim P_A^k}
\Bigg[
D_{\mathrm{KL}}
\Bigg(
q(\mathbf z_T\mid\mathbf z_0^a)
\,\big\|\,
p_{\mathrm{prior}}
\Bigg)
\Bigg]
\\
&+
\mathbb E_{q_A^k(\mathbf z_0,\mathbf z_1)}
\Big[
-\log
p_\phi(
\mathbf z_0
\mid
\mathbf z_1,\mathbf T_{k,\mathrm{ref}})
\Big]
\\
&+
\sum_{t=2}^{T}
\mathbb E_{q_A^k(\mathbf z_0,\mathbf z_t)}
\Bigg[
D_{\mathrm{KL}}
\Bigg(
q(\mathbf z_{t-1}\mid\mathbf z_t,\mathbf z_0)
\\
&\qquad\qquad\qquad\quad
\mathrel{\big\|}
p_\phi(
\mathbf z_{t-1}
\mid
\mathbf z_t,\mathbf T_{k,\mathrm{ref}})
\Bigg)
\Bigg].
\label{eq:traj_kl_decomp}
\end{aligned}
\end{equation}
Here, \(h(P_A^k)\) denotes the differential entropy of the absolutely continuous reference latent distribution \(P_A^k\). Under Condition A1, the entropy term and the expected terminal conditional KL term are finite and independent of the trainable LoRA parameters \(\boldsymbol{\phi}\). The third term is the reconstruction negative log-likelihood at \(t=1\), and the summation contains the transition KL divergences for \(t=2,\ldots,T\).

\vspace{3.0mm}

We first consider the transition terms for
\(t=2,\ldots,T\).
For the forward Gaussian diffusion process,
\(q(\mathbf z_{t-1}\mid\mathbf z_t,\mathbf z_0)\)
is a Gaussian conditional distribution.
Under the fixed reverse variance schedule, the reverse transition
\(p_\phi(\mathbf z_{t-1}\mid
\mathbf z_t,\mathbf T_{k,\mathrm{ref}})\)
uses the same covariance as the corresponding forward diffusion
posterior. 
Thus, for each \(t=2,\ldots,T\),
\begin{align}
q(\mathbf z_{t-1}\mid\mathbf z_t,\mathbf z_0)
&=
\mathcal N
\left(
\widetilde{\boldsymbol{\mu}}_t,
\widetilde{\beta}_t\mathbf I
\right),
\\
p_{\boldsymbol{\phi}}
(\mathbf z_{t-1}\mid
\mathbf z_t,\mathbf T_{k,\mathrm{ref}})
&=
\mathcal N
\left(
\boldsymbol{\mu}_{\boldsymbol{\phi},t},
\widetilde{\beta}_t\mathbf I
\right).
\end{align}
Here,
\(\widetilde{\boldsymbol{\mu}}_t\) and
\(\boldsymbol{\mu}_{\boldsymbol{\phi},t}\)
are the corresponding mean vectors.
Because the two Gaussian distributions use the same covariance, their KL divergence is:
\begin{equation}
\begin{aligned}
D_{\mathrm{KL}}
\left(
q(\mathbf z_{t-1}\mid \mathbf z_t,\mathbf z_0)
\,\middle\|\,
p_\phi(\mathbf z_{t-1}\mid \mathbf z_t,\mathbf T_{k,\mathrm{ref}})
\right) \\
=
\frac{1}{2\widetilde{\beta}_t}
\left\|
\widetilde{\boldsymbol{\mu}}_t
-
\boldsymbol{\mu}_{\boldsymbol{\phi},t}
\right\|_2^2 .
\label{eq:gaussian_kl_same_cov}
\end{aligned}
\end{equation}

Let $\alpha_t$ denote the one-step noise schedule coefficient, so that
$\bar\alpha_t=\prod_{s=1}^{t}\alpha_s$, and let $\beta_t=1-\alpha_t$.
Using the noise-prediction parameterization in Eq.~\ref{eq:forward_diffusion_noising} and~\ref{eq:noise_prediction}, together with the reverse transition in Eq.~\ref{eq:app_reverse_trajectory}, the reverse mean can be expressed as:
\begin{equation}
\boldsymbol\mu_\phi
=
\frac{1}{\sqrt{\alpha_t}}
\left(
\mathbf z_t
-
\frac{\beta_t}{\sqrt{1-\bar\alpha_t}}
\epsilon_\phi(\mathbf z_t,t,\mathbf T_{k,\mathrm{ref}})
\right).
\end{equation}
Using the forward noising relation:
\begin{equation}
\mathbf z_t
=
\sqrt{\bar\alpha_t}\mathbf z_0
+
\sqrt{1-\bar\alpha_t}\boldsymbol\epsilon,
\end{equation}
where
\(\boldsymbol\epsilon\sim\mathcal N(\mathbf 0,\mathbf I)\),
the mean of the forward diffusion posterior can be written as:
\begin{equation}
\boldsymbol\mu_q
=
\frac{1}{\sqrt{\alpha_t}}
\left(
\mathbf z_t
-
\frac{\beta_t}{\sqrt{1-\bar\alpha_t}}
\boldsymbol\epsilon
\right).
\end{equation}
Therefore,
\begin{equation}
\boldsymbol\mu_q-\boldsymbol\mu_\phi
=
\frac{\beta_t}{\sqrt{\alpha_t}\sqrt{1-\bar\alpha_t}}
\left(
\epsilon_\phi(\mathbf z_t,t,\mathbf T_{k,\mathrm{ref}})
-
\boldsymbol\epsilon
\right).
\end{equation}
Substituting this relation into Eq.~\ref{eq:gaussian_kl_same_cov}, we obtain:
\begin{align}
&D_{\mathrm{KL}}
\left(
q(\mathbf z_{t-1}\mid \mathbf z_t,\mathbf z_0)
\,\middle\|\,
p_\phi(\mathbf z_{t-1}\mid \mathbf z_t,\mathbf T_{k,\mathrm{ref}})
\right)
\nonumber\\
&\qquad=
\frac{\beta_t^2}
{2\widetilde{\beta}_t\alpha_t(1-\bar\alpha_t)}
\left\|
\boldsymbol\epsilon
-
\epsilon_\phi(\mathbf z_t,t,\mathbf T_{k,\mathrm{ref}})
\right\|_2^2 .
\end{align}
Thus, by defining:
\begin{equation}
a_t
=
\frac{\beta_t^2}
{2\widetilde{\beta}_t
\alpha_t(1-\bar\alpha_t)},
\qquad
t=2,\ldots,T.
\end{equation}
we have:
\begin{align}
&D_{\mathrm{KL}}
\left(
q(\mathbf z_{t-1}\mid \mathbf z_t,\mathbf z_0)
\,\middle\|\,
p_\phi(\mathbf z_{t-1}\mid \mathbf z_t,\mathbf T_{k,\mathrm{ref}})
\right)
\nonumber\\
&\qquad=
a_t
\left\|
\boldsymbol\epsilon
-
\epsilon_\phi(\mathbf z_t,t,\mathbf T_{k,\mathrm{ref}})
\right\|_2^2 .
\label{eq:transition_kl_mse}
\end{align}
Here, \(a_t>0\) depends only on the forward diffusion schedule and the fixed reverse variance schedule, and is independent of \(\phi\).

\vspace{3.0mm}

For \(t=1\), the conditional distribution
\(q(\mathbf z_0\mid\mathbf z_1,\mathbf z_0)\)
is degenerate; hence, the transition-KL identity in
Eq.~\ref{eq:transition_kl_mse} does not apply.
The corresponding term in Eq.~\ref{eq:traj_kl_decomp}
is instead the reconstruction negative log-likelihood.

\vspace{3.0mm}

Under the fixed-covariance Gaussian reconstruction kernel and the same noise-prediction mean parameterization, we have:
\begin{equation}
\begin{aligned}
&\mathbf z_0^a
-
\boldsymbol\mu_\phi(
\mathbf z_1^a,1,\mathbf T_{k,\mathrm{ref}})
\\
&\qquad=
\frac{\beta_1}
{\sqrt{\alpha_1}\sqrt{1-\bar\alpha_1}}
\left(
\epsilon_\phi(
\mathbf z_1^a,1,\mathbf T_{k,\mathrm{ref}})
-
\boldsymbol\epsilon
\right).
\end{aligned}
\end{equation}
Consequently, the reconstruction negative log-likelihood satisfies:
\begin{equation}
\begin{aligned}
&\mathbb E_{q_A^k(\mathbf z_0,\mathbf z_1)}
\Big[
-\log
p_\phi(
\mathbf z_0
\mid
\mathbf z_1,\mathbf T_{k,\mathrm{ref}})
\Big]
\\
&\qquad=
b_1
+
a_1
\mathbb E_{\mathbf z_0^a\sim P_A^k,\boldsymbol\epsilon}
\Bigg[
\left\|
\boldsymbol\epsilon
-
\epsilon_\phi(
\mathbf z_1^a,1,\mathbf T_{k,\mathrm{ref}})
\right\|_2^2
\Bigg],
\label{eq:reconstruction_nll_mse}
\end{aligned}
\end{equation}
where
\begin{equation}
a_1
=
\frac{\beta_1^2}
{2\sigma_1^2\alpha_1(1-\bar\alpha_1)}
\end{equation}
and \(b_1\) is the normalization constant of the fixed
Gaussian reconstruction kernel. Both \(a_1\) and \(b_1\)
are independent of \(\phi\).

\vspace{3.0mm}

Combining Eq.~\ref{eq:traj_kl_decomp},
Eq.~\ref{eq:transition_kl_mse}, and
Eq.~\ref{eq:reconstruction_nll_mse}, we obtain:
\begin{equation}
\begin{aligned}
\mathcal K_{\mathrm{traj}}^k(\phi)
={}&
C_k
\\
&+
\sum_{t=1}^{T}
a_t
\mathbb E_{\mathbf z_0^a\sim P_A^k,\boldsymbol\epsilon}
\Bigg[
\left\|
\boldsymbol\epsilon
-
\epsilon_\phi(
\mathbf z_t^a,t,\mathbf T_{k,\mathrm{ref}})
\right\|_2^2
\Bigg],
\label{eq:traj_kl_after_transition_bound}
\end{aligned}
\end{equation}
where
\begin{equation}
\begin{aligned}
C_k
={}&
-h(P_A^k)
+
b_1
\\
&+
\mathbb E_{\mathbf z_0^a\sim P_A^k}
\Bigg[
D_{\mathrm{KL}}
\Bigg(
q(\mathbf z_T\mid\mathbf z_0^a)
\,\big\|\,
p_{\mathrm{prior}}
\Bigg)
\Bigg].
\end{aligned}
\end{equation}
By the conditions of Lemma~\ref{lem:trajectory_kl_control},
\(C_k\) is finite and independent of \(\phi\).

\vspace{3.0mm}

Since the timestep distribution \(\nu\) satisfies
\(\nu(t)>0\) for every \(t=1,\ldots,T\), we have
\begin{equation}
\begin{aligned}
&\sum_{t=1}^{T}
a_t
\mathbb E_{\mathbf z_0^a\sim P_A^k,\boldsymbol\epsilon}
\Bigg[
\left\|
\boldsymbol\epsilon
-
\epsilon_\phi(
\mathbf z_t^a,t,\mathbf T_{k,\mathrm{ref}})
\right\|_2^2
\Bigg]
\\
&=
\sum_{t=1}^{T}
\frac{a_t}{\nu(t)}
\nu(t)
\mathbb E_{\mathbf z_0^a\sim P_A^k,\boldsymbol\epsilon}
\Bigg[
\left\|
\boldsymbol\epsilon
-
\epsilon_\phi(
\mathbf z_t^a,t,\mathbf T_{k,\mathrm{ref}})
\right\|_2^2
\Bigg]
\\
&\le
\gamma_k
\sum_{t=1}^{T}
\nu(t)
\mathbb E_{\mathbf z_0^a\sim P_A^k,\boldsymbol\epsilon}
\Bigg[
\left\|
\boldsymbol\epsilon
-
\epsilon_\phi(
\mathbf z_t^a,t,\mathbf T_{k,\mathrm{ref}})
\right\|_2^2
\Bigg],
\end{aligned}
\end{equation}
where
\begin{equation}
\gamma_k
=
\max_{1\le t\le T}
\frac{a_t}{\nu(t)}
>0.
\end{equation}
By the definition of the class-wise denoising loss in
Eq.~\ref{eq:app_classwise_denoising_loss}, it follows that
\begin{equation}
\mathcal K_{\mathrm{traj}}^k(\phi)
\le
\gamma_k
\mathcal L_{\mathrm{den}}^k(\phi;\mathcal A_k)
+
C_k.
\label{eq:trajectory_kl_denoising_bound}
\end{equation}
This completes the proof.
\end{IEEEproof}

\begin{IEEEproof}[\textbf{Proof of Theorem~\ref{thm:semantic_consistency}}]
Fix a class \(k\) and a parameter
\(\boldsymbol{\phi}\).
Since \(P_A^k\) and
\(P_{\boldsymbol{\phi}}^k\) are the
\(\mathbf z_0\)-marginals of
\(q_A^k(\mathbf z_{0:T})\) and
\(p_{\boldsymbol{\phi}}^k(\mathbf z_{0:T})\), respectively,
the data-processing inequality for KL divergence gives:
\begin{equation}
D_{\mathrm{KL}}
\left(
P_A^k
\middle\|
P_{\boldsymbol{\phi}}^k
\right)
\le
\mathcal K_{\mathrm{traj}}^k
(\boldsymbol{\phi}).
\end{equation}
Applying Lemma~\ref{lem:trajectory_kl_control}, we obtain:
\begin{equation}
D_{\mathrm{KL}}
\left(
P_A^k
\middle\|
P_{\boldsymbol{\phi}}^k
\right)
\le
\gamma_k
\mathcal L_{\mathrm{den}}^k
\left(
\boldsymbol{\phi};\mathcal A_k
\right)
+
C_k.
\end{equation}
This proves Eq.~\ref{eq:first_statement}.

\vspace{3.0mm}

For Eq.~\ref{eq:second_statement}, take any:
\[
P\in
\mathcal B_{\mathrm{sem}}^k
(\delta_{\mathrm{sem}}).
\]
By Eq.~\ref{eq:app_semantic_parameter_set}
and~\ref{eq:app_semantic_ambiguity_set}, there exists:
\[
\widetilde{\boldsymbol{\phi}}
\in
\Phi_{\mathrm{sem}}^k
(\delta_{\mathrm{sem}})
\]
such that:
\begin{equation}
P
=
P_{\widetilde{\boldsymbol{\phi}}}^k,
\qquad
\mathcal L_{\mathrm{den}}^k
\left(
\widetilde{\boldsymbol{\phi}};
\mathcal A_k
\right)
\le
\delta_{\mathrm{sem}}.
\end{equation}
Applying the preceding bound to
\(\widetilde{\boldsymbol{\phi}}\) gives:
\begin{equation}
\begin{aligned}
D_{\mathrm{KL}}
\left(
P_A^k
\middle\|
P
\right)
&=
D_{\mathrm{KL}}
\left(
P_A^k
\middle\|
P_{\widetilde{\boldsymbol{\phi}}}^k
\right)
\\
&\le
\gamma_k
\mathcal L_{\mathrm{den}}^k
\left(
\widetilde{\boldsymbol{\phi}};
\mathcal A_k
\right)
+
C_k
\\
&\le
\gamma_k\delta_{\mathrm{sem}}+C_k.
\end{aligned}
\end{equation}
Thus, Eq.~\ref{eq:second_statement} follows.
\end{IEEEproof}

\subsection{Proof of Theorem~\ref{thm:target_risk_bound}}

\begin{IEEEproof}[\textbf{Proof of Theorem~\ref{thm:target_risk_bound}}]
Fix a classifier
\(\mathbf f_{\boldsymbol{\theta}}\in\mathcal H\)
and a class \(k\).
By the definition of \(\rho_k\) in
Eq.~\ref{eq:app_coverage_gap}, for every \(\eta>0\), there exists:
\[
P_{k,\eta}
\in
\mathcal B_{\mathrm{sem}}^k(\delta_{\mathrm{sem}})
\]
such that:
\begin{equation}
d_{\mathrm{TV}}(P_T^k,P_{k,\eta})
\le
\rho_k+\eta.
\end{equation}
Applying Eq.~\ref{eq:app_tv_risk_difference} gives:
\begin{equation}
\begin{aligned}
\mathcal R_k
(\mathbf f_{\boldsymbol{\theta}};P_T^k)
&\le
\mathcal R_k
(\mathbf f_{\boldsymbol{\theta}};P_{k,\eta})
+
L_{\max}
d_{\mathrm{TV}}(P_T^k,P_{k,\eta})
\\
&\le
\sup_{P\in
\mathcal B_{\mathrm{sem}}^k(\delta_{\mathrm{sem}})}
\mathcal R_k
(\mathbf f_{\boldsymbol{\theta}};P)
+
L_{\max}(\rho_k+\eta).
\end{aligned}
\end{equation}
Letting \(\eta\downarrow0\), we obtain:
\begin{equation}
\mathcal R_k
(\mathbf f_{\boldsymbol{\theta}};P_T^k)
\le
\sup_{P\in
\mathcal B_{\mathrm{sem}}^k(\delta_{\mathrm{sem}})}
\mathcal R_k
(\mathbf f_{\boldsymbol{\theta}};P)
+
L_{\max}\rho_k.
\label{eq:class_risk_final_bound}
\end{equation}
Finally, applying Eq.~\ref{eq:class_risk_final_bound} to each class and using Eq.~\ref{eq:app_target_risk}, we have:
\begin{equation}
\begin{aligned}
\mathcal R_T(\mathbf f_{\boldsymbol{\theta}})
&=
\sum_{k=1}^{C}
\pi_T^k
\mathcal R_k
(\mathbf f_{\boldsymbol{\theta}};P_T^k)
\\
&\le
\sum_{k=1}^{C}
\pi_T^k
\sup_{P\in
\mathcal B_{\mathrm{sem}}^k(\delta_{\mathrm{sem}})}
\mathcal R_k
(\mathbf f_{\boldsymbol{\theta}};P)
+
L_{\max}
\sum_{k=1}^{C}
\pi_T^k\rho_k.
\end{aligned}
\end{equation}
\end{IEEEproof}

\subsection{Proof of Proposition~\ref{prop:semantic_worst_to_generated}}

\begin{IEEEproof}[\textbf{Proof of Proposition~\ref{prop:semantic_worst_to_generated}}]
Fix a synthesis round \(r\).
By the definition of the class-wise search gap in
Eq.~\ref{eq:app_classwise_search_gap}, for each class \(k\),
\begin{equation}
\begin{aligned}
&\sup_{P\in
\mathcal B_{\mathrm{sem}}^k(\delta_{\mathrm{sem}})}
\mathcal R_k
\left(
\mathbf f_{\boldsymbol{\theta}^{(r-1)}};P
\right)
\\
&\quad=
\mathcal R_k
\left(
\mathbf f_{\boldsymbol{\theta}^{(r-1)}};
P_{\boldsymbol{\phi}^{(r)}}^k
\right)
+
\varepsilon_{\mathrm{adv},k}^{(r)}.
\end{aligned}
\label{eq:app_classwise_adv_gap}
\end{equation}
Let \(\{\pi^k\}_{k=1}^{C}\) be nonnegative class weights
satisfying \(\sum_{k=1}^{C}\pi^k=1\).
Multiplying Eq.~\ref{eq:app_classwise_adv_gap} by
\(\pi^k\) and summing over \(k\) gives:
\begin{equation}
\begin{aligned}
&\sum_{k=1}^{C}\pi^k
\sup_{P\in
\mathcal B_{\mathrm{sem}}^k(\delta_{\mathrm{sem}})}
\mathcal R_k
\left(
\mathbf f_{\boldsymbol{\theta}^{(r-1)}};P
\right)
\\
&\quad=
\sum_{k=1}^{C}\pi^k
\mathcal R_k
\left(
\mathbf f_{\boldsymbol{\theta}^{(r-1)}};
P_{\boldsymbol{\phi}^{(r)}}^k
\right)
+
\sum_{k=1}^{C}\pi^k
\varepsilon_{\mathrm{adv},k}^{(r)}.
\end{aligned}
\label{eq:app_weighted_adv_gap}
\end{equation}
This proves Eq.~\ref{eq:weighted_worst_to_generated}.

\vspace{3.0mm}

Taking \(\pi^k=\pi_T^k\) in
Eq.~\ref{eq:app_weighted_adv_gap} and applying
Theorem~\ref{thm:target_risk_bound}, we obtain:
\begin{equation}
\begin{aligned}
\mathcal R_T
\left(
\mathbf f_{\boldsymbol{\theta}^{(r-1)}}
\right)
&\le
\sum_{k=1}^{C}\pi_T^k
\mathcal R_k
\left(
\mathbf f_{\boldsymbol{\theta}^{(r-1)}};
P_{\boldsymbol{\phi}^{(r)}}^k
\right)
\\
&\quad+
\sum_{k=1}^{C}\pi_T^k
\varepsilon_{\mathrm{adv},k}^{(r)}
+
L_{\max}
\sum_{k=1}^{C}\pi_T^k\rho_k.
\end{aligned}
\end{equation}
Thus, Eq.~\ref{eq:target_bound_generated_distribution} follows.

\vspace{3.0mm}

Finally, setting \(\pi^k=1/C\) in
Eq.~\ref{eq:app_weighted_adv_gap} and using the definitions of
\(\mathcal R_g^{(r)}\) and
\(\varepsilon_{\mathrm{adv}}^{(r)}\), we obtain:
\begin{equation}
\frac{1}{C}
\sum_{k=1}^{C}
\sup_{P\in
\mathcal B_{\mathrm{sem}}^k(\delta_{\mathrm{sem}})}
\mathcal R_k
\left(
\mathbf f_{\boldsymbol{\theta}^{(r-1)}};P
\right)
=
\mathcal R_g^{(r)}
\left(
\mathbf f_{\boldsymbol{\theta}^{(r-1)}}
\right)
+
\varepsilon_{\mathrm{adv}}^{(r)}.
\end{equation}
This proves Eq.~\ref{eq:balanced_worst_to_generated}.
\end{IEEEproof}

\subsection{Proof of Proposition~\ref{prop:generated_risk_gap}}

\begin{IEEEproof}[ \textbf{Proof of Proposition~\ref{prop:generated_risk_gap}}]
Fix a synthesis round \(r\).
By the round-wise feasibility condition in
Eq.~\ref{eq:app_roundwise_feasibility}, for every class \(k\),
\[
P_{\boldsymbol{\phi}^{(r)}}^k
\in
\mathcal B_{\mathrm{sem}}^k(\delta_{\mathrm{sem}}).
\]
Therefore, for any
\(\mathbf f_{\boldsymbol{\theta}}\in\mathcal H\),
\begin{equation}
\begin{aligned}
\mathcal R_g^{(r)}
(\mathbf f_{\boldsymbol{\theta}})
&=
\frac{1}{C}
\sum_{k=1}^{C}
\mathcal R_k
\left(
\mathbf f_{\boldsymbol{\theta}};
P_{\boldsymbol{\phi}^{(r)}}^k
\right)
\\
&\le
\frac{1}{C}
\sum_{k=1}^{C}
\sup_{P\in
\mathcal B_{\mathrm{sem}}^k(\delta_{\mathrm{sem}})}
\mathcal R_k
\left(
\mathbf f_{\boldsymbol{\theta}};P
\right).
\end{aligned}
\end{equation}
Thus, Eq.~\ref{eq:generated_risk_semantic_bound} follows.

\vspace{3.0mm}

For the finite-sample claim, condition on
\(\boldsymbol{\phi}^{(r)}\), and let
\(\mathcal S_r\) be the class-balanced latent-label sample defined in Eq.~\ref{eq:app_generated_latent_sample}. 
By Eq.~\ref{eq:app_conditional_sampling}, the elements of
\(\mathcal S_r\) are independent conditional on
\(\boldsymbol{\phi}^{(r)}\). All subsequent probabilities and expectations are taken with respect to this conditional sampling distribution.

\vspace{3.0mm}

Let
\[
\mathcal G
=
\ell_{\mathrm{CE}}\circ\mathcal H
\]
be the loss class defined in Eq.~\ref{eq:app_loss_class}, and define
\begin{equation}
\Phi(\mathcal S_r)
=
\sup_{\mathbf f_{\boldsymbol{\theta}}\in\mathcal H}
\left|
\mathcal R_g^{(r)}
(\mathbf f_{\boldsymbol{\theta}})
-
\widehat{\mathcal R}_g^{(r)}
(\mathbf f_{\boldsymbol{\theta}})
\right|.
\end{equation}
Equivalently, by
Eq.~\ref{eq:app_generated_population_risk}
and~\ref{eq:app_generated_empirical_risk},
\begin{equation}
\begin{aligned}
\Phi(\mathcal S_r)
=
\sup_{h\in\mathcal G}
\Bigg|
&
\frac{1}{C}
\sum_{k=1}^{C}
\mathbb E_{\mathbf z\sim
P_{\boldsymbol{\phi}^{(r)}}^k}
\left[
h(\mathbf z,k)
\right]
\\
&-
\frac{1}{CM_g}
\sum_{k=1}^{C}
\sum_{j=1}^{M_g}
h(\mathbf z_{k,j}^{(r)},k)
\Bigg|.
\end{aligned}
\end{equation}
By Condition A2 and
Eq.~\ref{eq:app_roundwise_feasibility}, every
\(h\in\mathcal G\) takes values in
\([0,L_{\max}]\) almost surely under each generated
class-conditional distribution.
Consequently, replacing one element of \(\mathcal S_r\) can change \(\Phi(\mathcal S_r)\) by at most \(L_{\max}/(CM_g)\).
McDiarmid's inequality therefore implies that, with probability at least \(1-\delta\) conditional on \(\boldsymbol{\phi}^{(r)}\),
\begin{equation}
\Phi(\mathcal S_r)
\le
\mathbb E_{\mathcal S_r}
\left[
\Phi(\mathcal S_r)
\right]
+
L_{\max}
\sqrt{
\frac{\log(2/\delta)}{2CM_g}
}.
\label{eq:mcdiarmid_generated_gap}
\end{equation}

Let \(\mathcal S_r'\) be a conditionally independent copy of
\(\mathcal S_r\), and let
\(\{\sigma_{k,j}\}\) be independent Rademacher variables.
The standard symmetrization argument yields:
\begin{equation}
\begin{aligned}
\mathbb E_{\mathcal S_r}
\left[
\Phi(\mathcal S_r)
\right]
&\le
\mathbb E_{\mathcal S_r,\mathcal S_r'}
\Bigg[
\sup_{h\in\mathcal G}
\Bigg|
\frac{1}{CM_g}
\sum_{k=1}^{C}
\sum_{j=1}^{M_g}
\Big[
h(\mathbf z_{k,j}^{\prime(r)},k) \\
&-
h(\mathbf z_{k,j}^{(r)},k)
\Big]
\Bigg|
\Bigg]
\\
&=
\mathbb E_{\mathcal S_r,\mathcal S_r',
\boldsymbol{\sigma}}
\Bigg[
\sup_{h\in\mathcal G}
\Bigg|
\frac{1}{CM_g}
\sum_{k=1}^{C}
\sum_{j=1}^{M_g}
\sigma_{k,j}
\Big[
h(\mathbf z_{k,j}^{\prime(r)},k) \\
&-
h(\mathbf z_{k,j}^{(r)},k)
\Big]
\Bigg|
\Bigg]
\\
&\le
2\mathfrak R_{CM_g}^{(r)}
(\ell_{\mathrm{CE}}\circ\mathcal H).
\end{aligned}
\label{eq:rademacher_generated_bound}
\end{equation}
Here, the equality follows from the coordinate-wise exchangeability
of each original--ghost sample pair, and the final inequality follows
from the triangle inequality and
Eq.~\ref{eq:app_rademacher_complexity}.

\vspace{3.0mm}

Combining Eq.~\ref{eq:mcdiarmid_generated_gap}
and~\ref{eq:rademacher_generated_bound}, we obtain, with conditional
probability at least \(1-\delta\),
\begin{equation}
\begin{aligned}
\sup_{\mathbf f_{\boldsymbol{\theta}}\in\mathcal H}
\left|
\mathcal R_g^{(r)}
(\mathbf f_{\boldsymbol{\theta}})
-
\widehat{\mathcal R}_g^{(r)}
(\mathbf f_{\boldsymbol{\theta}})
\right|
&\le
2\mathfrak R_{CM_g}^{(r)}
(\ell_{\mathrm{CE}}\circ\mathcal H)
\\
&\quad+
L_{\max}
\sqrt{
\frac{\log(2/\delta)}{2CM_g}
}.
\end{aligned}
\end{equation}
Consequently, on the same event, for every
\(\mathbf f_{\boldsymbol{\theta}}\in\mathcal H\),
\begin{equation}
\begin{aligned}
\mathcal R_g^{(r)}
(\mathbf f_{\boldsymbol{\theta}})
-
\widehat{\mathcal R}_g^{(r)}
(\mathbf f_{\boldsymbol{\theta}})
&\le
2\mathfrak R_{CM_g}^{(r)}
(\ell_{\mathrm{CE}}\circ\mathcal H)
\\
&\quad+
L_{\max}
\sqrt{
\frac{\log(2/\delta)}{2CM_g}
}.
\end{aligned}
\end{equation}
Thus, Eq.~\ref{eq:generated_risk_gap_bound} follows.
\end{IEEEproof}

\section{Further Analysis}

\begin{figure*}[!t]
\centering
\includegraphics[width=0.98\textwidth]{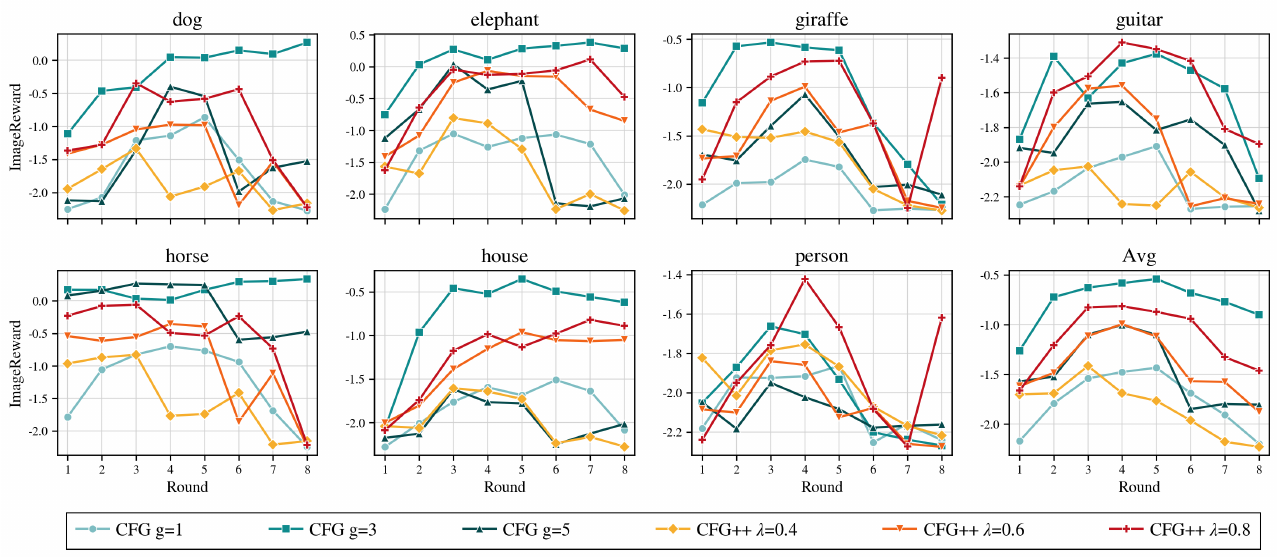}
\caption{
Class-wise and average ImageReward scores over CADS synthesis rounds on PACS under different CFG scales \(g\) and CFG++ coefficients \(\lambda\).
Each class panel shows the score trajectory for one class, whereas ``Avg'' shows the mean across the seven classes. Higher scores indicate stronger agreement with human preferences for prompt-conditioned image generation.
}
\label{fig:cads_round_imagereward_pacs}
\end{figure*}

\begin{figure*}[!t]
\centering
\includegraphics[width=0.98\textwidth]{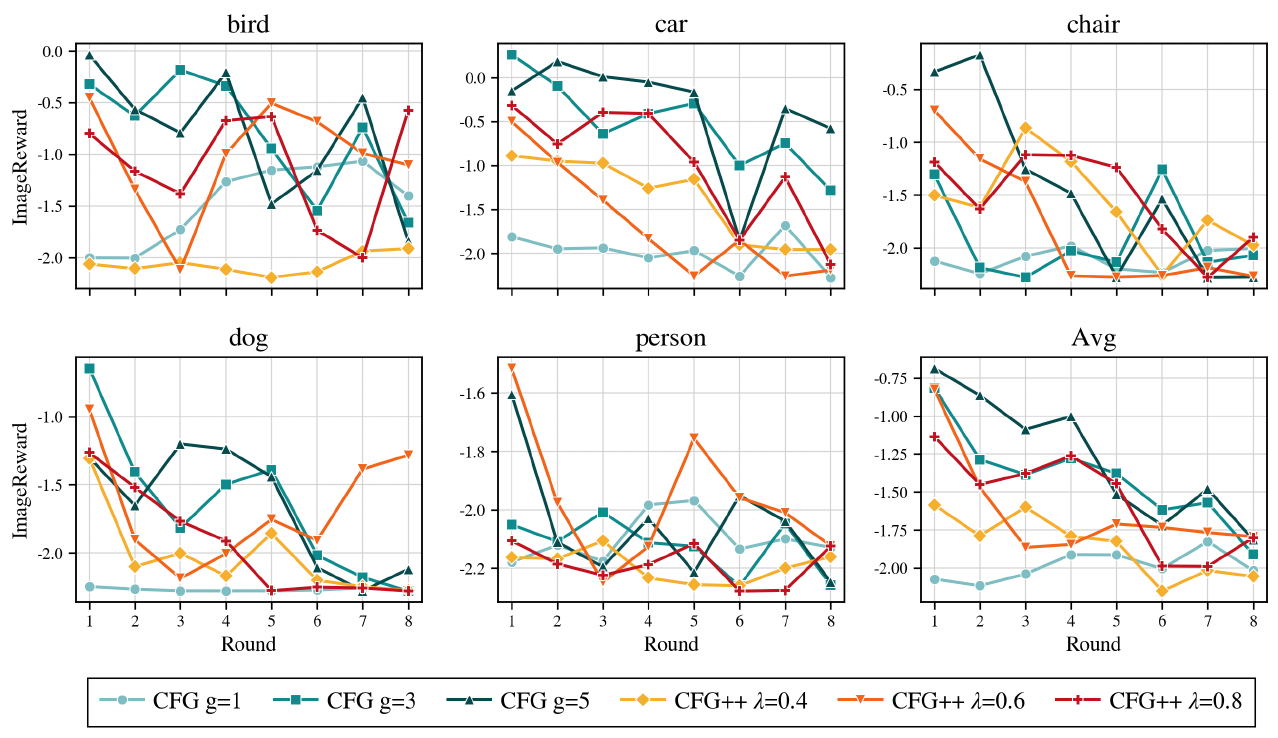}
\caption{
Class-wise and average ImageReward scores over CADS synthesis rounds
on VLCS under different CFG scales \(g\) and CFG++ coefficients
\(\lambda\).
Each class panel shows the score trajectory for one class, whereas
``Avg'' shows the mean across the five classes.
Higher scores indicate stronger agreement with human preferences for
prompt-conditioned image generation.
}
\label{fig:cads_round_imagereward}
\end{figure*}

\begin{table}[!t]
\centering
\caption{Impact of CFG/CFG++ on PACS (\%) with ResNet-18 under a high risk-guidance weight ($\lambda_{\mathrm{risk}}=0.5$). The best results are in \textcolor{red}{red}, and the second-highest results are in \textcolor{blue}{blue}.}
\label{tab:pacs_cfg_ablation}
\setlength{\tabcolsep}{5.8pt}
\renewcommand{\arraystretch}{1.0}
\begin{tabular}{c|l|cccc|c}
\toprule
Idx & Method & A & C & P & S & Avg. \\
\midrule
1 & CFG ($g=1$)
& 67.12
& 77.77
& 60.00
& 78.04
& 70.73 \\
2 & CFG ($g=3$)
& 68.70
& 77.42
& 60.19
& \textcolor{red}{79.30}
& 71.40 \\
3 & CFG ($g=5$)
& 71.43
& 77.95
& \textcolor{blue}{65.55}
& 74.68
& 72.40 \\
\midrule
4 & CFG++ ($\lambda=0.4$)
& 70.64
& \textcolor{red}{82.08}
& 62.56
& 77.32
& 73.15 \\
5 & CFG++ ($\lambda=0.6$)
& \textcolor{blue}{73.02}
& \textcolor{blue}{81.54}
& \textcolor{red}{67.79}
& \textcolor{blue}{78.88}
& \textcolor{red}{75.31} \\
6 & CFG++ ($\lambda=0.8$)
& \textcolor{red}{73.61}
& 78.02
& 64.56
& 77.15
& \textcolor{blue}{73.34} \\
\bottomrule
\end{tabular}
\end{table}

\noindent Fig.~\ref{fig:cads_round_imagereward_pacs}
and Fig.~\ref{fig:cads_round_imagereward} show the class-wise and
average ImageReward scores over CADS synthesis rounds on PACS and
VLCS, respectively. For this analysis, we use a high risk-guidance
weight of $\lambda_{\mathrm{risk}}=0.5$ to strengthen classifier
feedback and create a more challenging synthesis setting. The
ImageReward trajectories vary across classes and are often
non-monotonic because the classifier is updated after each round,
changing the variations that remain difficult for the current model.
A higher ImageReward does not necessarily lead to better domain
generalization performance. Strong CFG guidance often improves
ImageReward, especially in the early rounds, by encouraging the
generated images to closely follow their class prompts. However, it
may also keep the generated samples close to typical and relatively
easy class appearances, thereby limiting the effect of
classification-risk guidance. Conversely, insufficient semantic
guidance may allow risk maximization to produce visually degraded or
non-semantic images that are difficult for the classifier but provide
limited training value. CFG++ provides a better balance between these
two cases. In particular, $\lambda=0.6$ maintains adequate prompt
alignment and visual quality while allowing CADS to discover
challenging within-class variations without relying on non-semantic
artifacts to increase classification loss. As shown in
Tab.~\ref{tab:vlcs_cfg_ablation} and
Tab.~\ref{tab:pacs_cfg_ablation}, this setting achieves the highest
average accuracy among all evaluated CFG and CFG++ configurations on
both VLCS and PACS, outperforming the best CFG configuration by
0.82 and 2.91 pp, respectively. These results indicate that effective
guidance should preserve sufficient semantic quality without
suppressing the challenging variations targeted by CADS.

\section{Notation Summary}

\begin{table*}[!t]
\centering
\caption{Principal notation used in the theoretical analysis.}
\label{tab:theory_notation}
\footnotesize
\setlength{\tabcolsep}{4pt}
\renewcommand{\arraystretch}{1.15}

\begin{tabular}{p{0.25\textwidth} p{0.69\textwidth}}
\hline
\textbf{Symbol} & \textbf{Description} \\
\hline

\multicolumn{2}{l}{\textbf{Basic setting and diffusion notation}} \\

$C,\;k$
&
Number of classes and class index, respectively, with
$k=1,\ldots,C$.
\\

$T,\;t,\;r$
&
Total number of diffusion steps, diffusion timestep, and synthesis-round
index, respectively. The subscript $T$ in target-domain quantities such
as $P_T^k$ and $\pi_T^k$ denotes the target domain rather than a
diffusion timestep.
\\

$M_a,\;M_g$
&
Numbers of semantic reference images and generated high-risk samples
per class, respectively.
\\

$\mathcal Z,\;\mathcal F_E,\;\mathcal F_D$
&
Latent space and the frozen VAE encoder and decoder, respectively.
\\

$\mathbf z_0,\;\mathbf z_t,\;\boldsymbol{\epsilon}$
&
Clean latent, noisy latent at diffusion timestep $t$, and Gaussian noise
sampled from $\mathcal N(\mathbf 0,\mathbf I)$, respectively.
\\

$\mathbf T_{k,\mathrm{ref}}$
&
Text embedding of the class-level reference prompt for class $k$.
\\

$\boldsymbol{\phi},\;\Phi,\;
 \boldsymbol{\phi}^{(r)}$
&
Trainable self-attention LoRA parameters, their admissible parameter
space, and the optimized LoRA parameters at synthesis round $r$,
respectively.
\\

$\boldsymbol{\epsilon}_{\boldsymbol{\phi}}
(\mathbf z_t,t,\mathbf T_{k,\mathrm{ref}})$
&
Noise predictor of the LoRA-adapted diffusion model under the
class-level text condition.
\\

\hline
\multicolumn{2}{l}{\textbf{Reference distributions and semantic ambiguity sets}} \\

$\mathcal A_k$
&
Semantic reference image set for class $k$, containing $M_a$ reference
images.
\\

$\widehat P_A^k,\;P_A^k,\;\sigma_A$
&
Empirical distribution of the encoded reference latents, its
Gaussian-smoothed counterpart, and the corresponding smoothing
bandwidth, respectively.
\\

$q_A^k(\mathbf z_{0:T}),\;
 p_{\boldsymbol{\phi}}^k(\mathbf z_{0:T})$
&
Reference forward trajectory distribution initialized from $P_A^k$ and
class-conditional reverse generative trajectory distribution,
respectively.
\\

$P_{\boldsymbol{\phi}}^k,\;
 P_{\boldsymbol{\phi}^{(r)}}^k$
&
Clean-latent marginals generated by the reverse diffusion process under
parameters $\boldsymbol{\phi}$ and
$\boldsymbol{\phi}^{(r)}$, respectively.
\\

$\mathcal P_{\Phi}^k$
&
Generator-induced family of all class-$k$ latent distributions
attainable by varying $\boldsymbol{\phi}\in\Phi$.
\\

$\mathcal L_{\mathrm{den}}^k
(\boldsymbol{\phi};\mathcal A_k),\;\nu(t)$
&
Smoothed class-wise semantic-reference denoising loss and its
timestep-sampling distribution, respectively.
\\

$\mathcal K_{\mathrm{traj}}^k
(\boldsymbol{\phi})$
&
Trajectory-level KL divergence from
$q_A^k(\mathbf z_{0:T})$ to
$p_{\boldsymbol{\phi}}^k(\mathbf z_{0:T})$.
\\

$\gamma_k,\;C_k$
&
Constants in the trajectory-KL upper bound
$\mathcal K_{\mathrm{traj}}^k
\leq
\gamma_k\mathcal L_{\mathrm{den}}^k+C_k$;
both are independent of $\boldsymbol{\phi}$.
\\

$\Phi_{\mathrm{sem}}^k
(\delta_{\mathrm{sem}})$
&
Set of class-$k$ LoRA parameters whose semantic-reference denoising
loss does not exceed $\delta_{\mathrm{sem}}$.
\\

$\mathcal B_{\mathrm{sem}}^k
(\delta_{\mathrm{sem}}),\;
\delta_{\mathrm{sem}}$
&
Generator-induced semantic ambiguity set for class $k$ and the semantic
tolerance controlling its feasible parameter set, respectively.
\\

\hline
\multicolumn{2}{l}{\textbf{Target-domain risk and adversarial search}} \\

$P_T,\;P_T^k$
&
Joint image-label distribution in the unseen target domain and its
class-$k$ conditional latent distribution, respectively.
\\

$\pi^k,\;\pi_T^k$
&
Generic class weight and target-domain prior probability of class $k$,
respectively.
\\

$\mathbf f_{\boldsymbol{\theta}},\;
\mathcal H$
&
Classifier parameterized by $\boldsymbol{\theta}$ and its hypothesis
class, respectively.
\\

$g_{\boldsymbol{\theta},k}(\mathbf z),\;
L_{\max}$
&
Classification loss
$\ell_{\mathrm{CE}}
(\mathbf f_{\boldsymbol{\theta}}(\mathcal F_D(\mathbf z)),k)$
and its uniform upper bound used in the analysis.
\\

$\mathcal R_k
(\mathbf f_{\boldsymbol{\theta}};P)$
&
Class-wise risk of classifier
$\mathbf f_{\boldsymbol{\theta}}$ under a class-$k$ latent distribution
$P$.
\\

$\mathcal R_T
(\mathbf f_{\boldsymbol{\theta}})$
&
Overall target-domain risk of classifier
$\mathbf f_{\boldsymbol{\theta}}$.
\\

$d_{\mathrm{TV}}(P,Q),\;\rho_k$
&
Total variation distance and the residual coverage gap between $P_T^k$
and $\mathcal B_{\mathrm{sem}}^k(\delta_{\mathrm{sem}})$, respectively.
\\

$\varepsilon_{\mathrm{adv},k}^{(r)},\;
\varepsilon_{\mathrm{adv}}^{(r)}$
&
Class-wise CADS search gap at round $r$ and its class-balanced average,
where
$\varepsilon_{\mathrm{adv}}^{(r)}
=C^{-1}\sum_{k=1}^{C}
\varepsilon_{\mathrm{adv},k}^{(r)}$.
\\

\hline
\multicolumn{2}{l}{\textbf{Generated data and finite-sample analysis}} \\

$\mathbf z_{k,j}^{(r)},\;
\tilde{\mathbf x}_{k,j}^{(r)}$
&
The $j$-th generated latent sample of class $k$ at round $r$ and its
decoded image, respectively.
\\

$\mathcal S_r,\;\mathcal D_g^{(r)}$
&
Class-balanced latent-label sample and the corresponding decoded
generated dataset at synthesis round $r$, respectively.
\\

$\mathcal R_g^{(r)}
(\mathbf f_{\boldsymbol{\theta}}),\;
\widehat{\mathcal R}_g^{(r)}
(\mathbf f_{\boldsymbol{\theta}})$
&
Population risk over the generated class-conditional distributions and
its empirical estimate on $\mathcal D_g^{(r)}$, respectively.
\\

$\mathcal G
\equiv
\ell_{\mathrm{CE}}\circ\mathcal H$
&
Loss class induced by the classifier hypothesis class and the
cross-entropy loss.
\\

$\mathfrak R_{CM_g}^{(r)}
(\ell_{\mathrm{CE}}\circ\mathcal H)$
&
Expected Rademacher complexity of the generated loss class under the
conditional class-balanced sampling scheme at round $r$.
\\

$\Delta_g^{(r)}(\delta)$
&
Finite-sample deviation term
$2\mathfrak R_{CM_g}^{(r)}
(\ell_{\mathrm{CE}}\circ\mathcal H)
+
L_{\max}
\sqrt{\log(2/\delta)/(2CM_g)}$,
where $\delta$ is the confidence parameter.
\\

\hline
\end{tabular}
\end{table*}

\noindent For ease of reference, Tab.~\ref{tab:theory_notation}
summarizes the principal notation used in the theoretical development
and the accompanying proofs.
For clarity, the symbols are grouped according to their roles in
semantic ambiguity set construction, target-domain risk analysis,
adversarial search, and finite-sample generalization.
Throughout, ($k$), ($t$), and ($r$) index the class, diffusion timestep,
and synthesis round, respectively.
Note that ($T$) denotes the total number of diffusion steps, whereas
the subscript ($T$) in target-domain quantities such as ($P_T^k$) and
($\pi_T^k$) identifies the target domain.

\begin{figure*}[!t]
\centering
\includegraphics[width=0.98\textwidth]{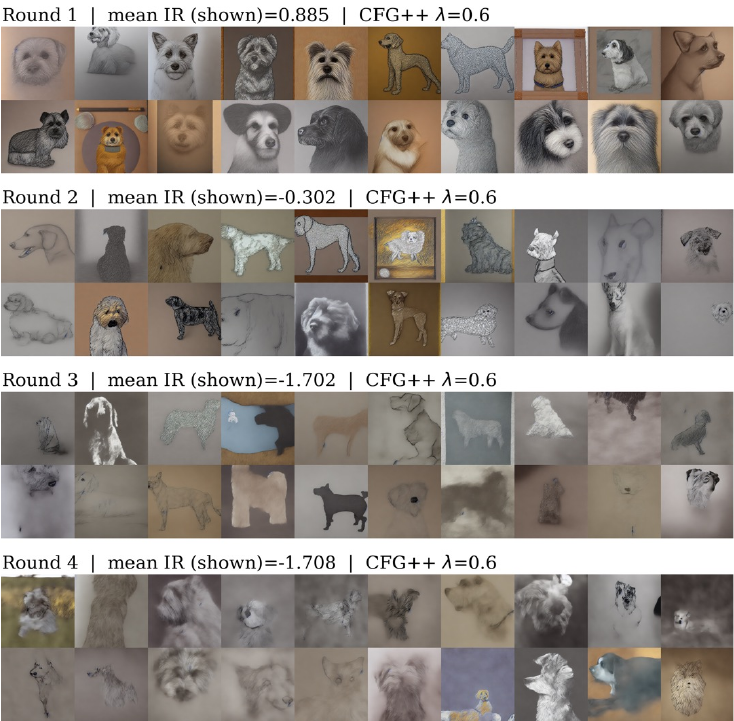}
\caption{Qualitative visualization of challenging samples conditioned
on the dog class during PAPT++ generation--training rounds 1--4 using
CFG++ with $\lambda=0.6$. Each row corresponds to one round, and the
mean ImageReward (IR) of the displayed samples is reported above the row.}
\label{fig:visualization_dog_round1_4}
\end{figure*}

\begin{figure*}[!t]
\centering
\includegraphics[width=0.98\textwidth]{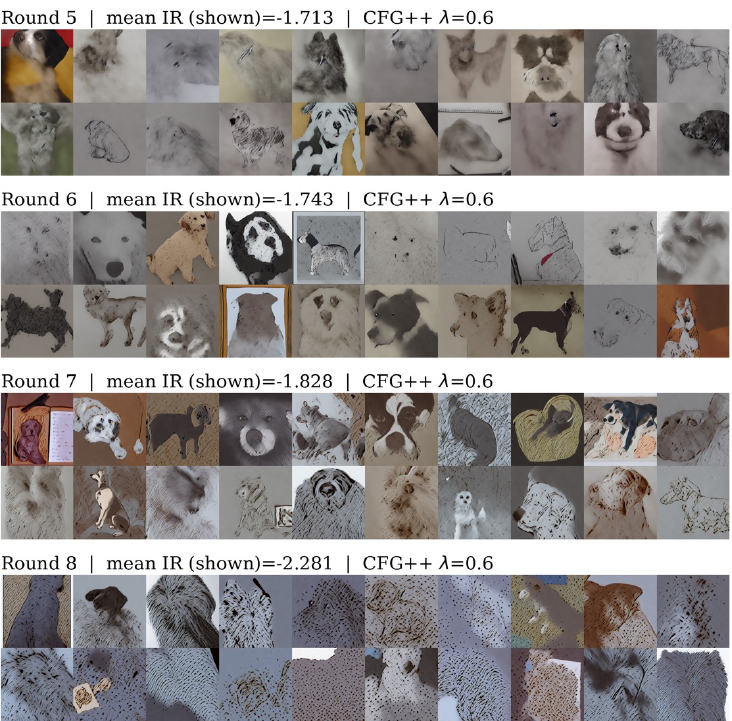}
\caption{Qualitative visualization of challenging samples conditioned
on the dog class during PAPT++ generation--training rounds 5--8 using
CFG++ with $\lambda=0.6$. Each row corresponds to one round, and the
mean ImageReward (IR) of the displayed samples is reported above the row.}
\label{fig:visualization_dog_round5_8}
\end{figure*}

\subsection{Qualitative Analysis Across PAPT++ Generation--Training Rounds}

\noindent To further examine the evolution of the generated samples, Fig.~\ref{fig:visualization_dog_round1_4} and Fig.~\ref{fig:visualization_dog_round5_8} visualize dog-class examples produced using CFG++ with $\lambda=0.6$ over eight CADS synthesis rounds. The early rounds mainly contain clean and canonical dog appearances. As the classifier is updated and provides new feedback, the later rounds explore less typical within-class variations in style, texture, shape, and composition, accompanied by a general decrease in the mean ImageReward of the displayed samples. Importantly, a lower ImageReward is not itself the objective, since samples that lose class semantics would provide limited training value. Instead, many of the visualized samples retain recognizable dog-related cues while moving beyond typical and relatively easy appearances. Together with the quantitative results in Tab.~\ref{tab:pacs_cfg_ablation} and Tab.~\ref{tab:vlcs_cfg_ablation} and Fig.~\ref{fig:cads_round_imagereward_pacs} and Fig.~\ref{fig:cads_round_imagereward}, these examples further demonstrate that CFG++ with $\lambda=0.6$ helps CADS balance semantic fidelity and sample difficulty, thereby generating challenging yet class-consistent samples for improving generalization to unseen domains.

\end{document}